\definecolor{darkblue}{rgb}{0.0, 0.0, 0.55}
\definecolor{myblue}{rgb}{0,0.45,0.74}
\definecolor{myred}{rgb}{0.85,0.33,0.1}
\newtheorem{theorem}{Theorem}[section]
\newtheorem{lemma}[theorem]{Lemma}
\newtheorem{remark}[theorem]{Remark}
\newtheorem{assumption}{Assumption}
\providecommand{\abs}[1]{\left\lvert#1\right\rvert}
\providecommand{\norm}[1]{\left\lVert#1\right\rVert}
\providecommand{\Eb}[1]{{\mathbb E}\left[#1\right] }       %
\providecommand{\derive}[2]{\frac{{\partial}{#1}}{\partial #2} }  %
\providecommand{\1}{\mathbf{1}}
\providecommand{\xx}{\mathbf{x}}
\providecommand{\mTheta}{\boldsymbol{\Theta}}
\providecommand{\mOmega}{\boldsymbol{\Omega}}
\providecommand{\mtheta}{\boldsymbol{\theta}}
\providecommand{\cD}{\mathcal{D}}
\providecommand{\cI}{\mathcal{I}}
\providecommand{\cL}{\mathcal{L}}
\providecommand{\cN}{\mathcal{N}}
\providecommand{\cO}{\mathcal{O}}
\providecommand{\cP}{\mathcal{P}}
\providecommand{\cS}{\mathcal{S}}
\providecommand{\cT}{\mathcal{T}}
\newenvironment{talign*}
{\csname align*\endcsname}
{\endalign}
\newcommand{\algopt}{RobustCluster\xspace} 
\newcommand{\algfed}{FedRC\xspace} 
\newcommand{\algfedt}{\algfed-FT\xspace}
\newcommand{\problem}{\textit{principles of robust clustering}\xspace}
\begin{document}

\twocolumn[
    \icmltitle{\algfed: Tackling Diverse Distribution Shifts Challenge in Federated Learning by Robust Clustering}

    \icmlsetsymbol{equal}{*}

    \begin{icmlauthorlist}
        \icmlauthor{Yongxin Guo}{sse,airs}
        \icmlauthor{Xiaoying Tang}{sse,airs,fnii}
        \icmlauthor{Tao Lin}{rcif,sew}
    \end{icmlauthorlist}

    \icmlaffiliation{sse}{School of Science and Engineering, The Chinese University of Hong Kong, Shenzhen, Guangdong, 518172, P.R. China}
    \icmlaffiliation{airs}{The Shenzhen Institute of Artificial Intelligence and Robotics for Society}
    \icmlaffiliation{fnii}{The Guangdong Provincial Key Laboratory of Future Networks of Intelligence}
    \icmlaffiliation{rcif}{Research Center for Industries of the Future, Westlake University}
    \icmlaffiliation{sew}{School of Engineering, Westlake University}

    \icmlcorrespondingauthor{Xiaoying Tang}{tangxiaoying@cuhk.edu.cn}

    \icmlkeywords{Machine Learning, ICML}

    \vskip 0.3in
]

\printAffiliationsAndNotice{}

\begin{abstract}
    Federated Learning (FL) is a machine learning paradigm that safeguards privacy by retaining client data on edge devices. However, optimizing FL in practice can be challenging due to the diverse and heterogeneous nature of the learning system.
    Though recent research has focused on improving the optimization of FL when distribution shifts occur among clients, ensuring global performance when multiple types of distribution shifts occur simultaneously among clients---such as feature distribution shift, label distribution shift, and concept shift---remain under-explored.
    In this paper, we identify the learning challenges posed by the simultaneous occurrence of diverse distribution shifts and propose a clustering principle to overcome these challenges.
    Through our research, we find that existing methods fail to address the clustering principle.
    Therefore, we propose a novel clustering algorithm framework, dubbed as \algfed, which adheres to our proposed clustering principle by incorporating a bi-level optimization problem and a novel objective function.
    Extensive experiments demonstrate that \algfed significantly outperforms other SOTA cluster-based FL methods.
    Our code is available at \url{https://github.com/LINs-lab/FedRC}.
\end{abstract}

\section{Introduction}
Federated Learning (FL) is an emerging privacy-preserving distributed machine learning paradigm.
The model is transmitted to the clients by the server, and when the clients have completed local training, the parameter updates are sent back to the server for integration.
Clients are not required to provide local raw data during this procedure, maintaining their privacy.
However, the non-IID nature of clients' local distribution hinders the performance of FL algorithms~\citep{mcmahan2016communication,li2018federated,karimireddy2020scaffold,li2021model}, and the distribution shifts among clients become a main challenge in FL.

\paragraph{Distribution shifts in FL.}
As identified in the seminal surveys~\citep{kairouz2021advances,moreno2012unifying,lu2018learning}, there are three types of distribution shifts across clients that bottleneck the  deployment of FL (see Figure~\ref{fig:Illustration of the scenario we constructed}):
\begin{itemize}[leftmargin=12pt,nosep]
    \item \textit{Concept shift}: For tasks of using feature $\xx$ to predict label $y$, the conditional distributions of labels $\cP(y|\xx)$ may differ across clients, even if the marginal distributions of labels $\cP(y)$ and features $\cP(\xx)$ are shared~\footnote{
              More discussions about the definition of concept shifts can be found in Appendix~\ref{sec:discussion-concept-shifts}.
          }. 
    \item \textit{Label distribution shift}: The marginal distributions of labels $\cP(y)$ may vary across clients, even if the conditional distribution of features $\cP(\xx | y)$ is the same.
    \item \textit{Feature distribution shift}: The marginal distribution of features $\cP(\xx)$ may differ across clients, even if the conditional distribution of labels $\cP(y | \xx)$ is shared.
\end{itemize}

\begin{figure*}[!t]
    \centering
    \subfigure[Existing Single-Model Methods]{
        \includegraphics[width=.3\textwidth]{./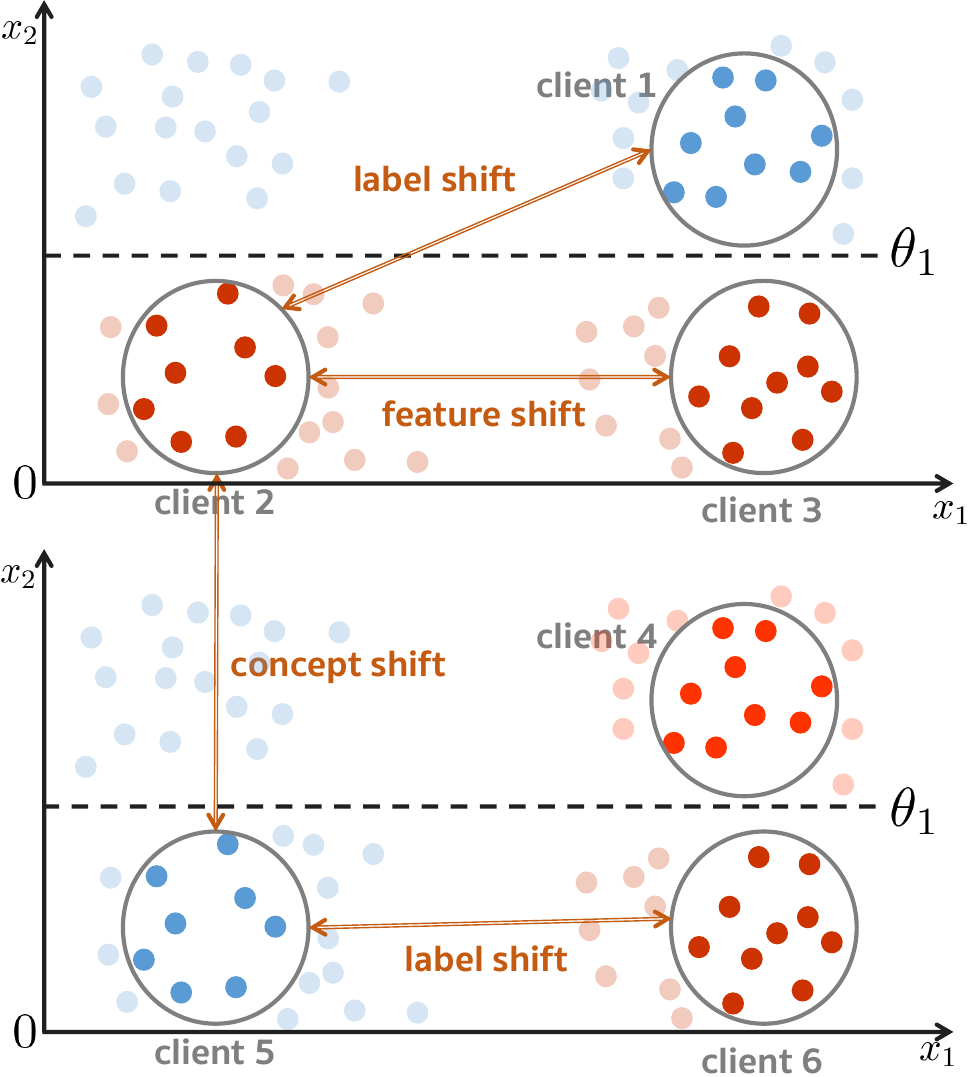}
        \label{fig:illustration-single-model}
    }
    \subfigure[Existing Multi-Model Methods]{
        \includegraphics[width=.3\textwidth]{./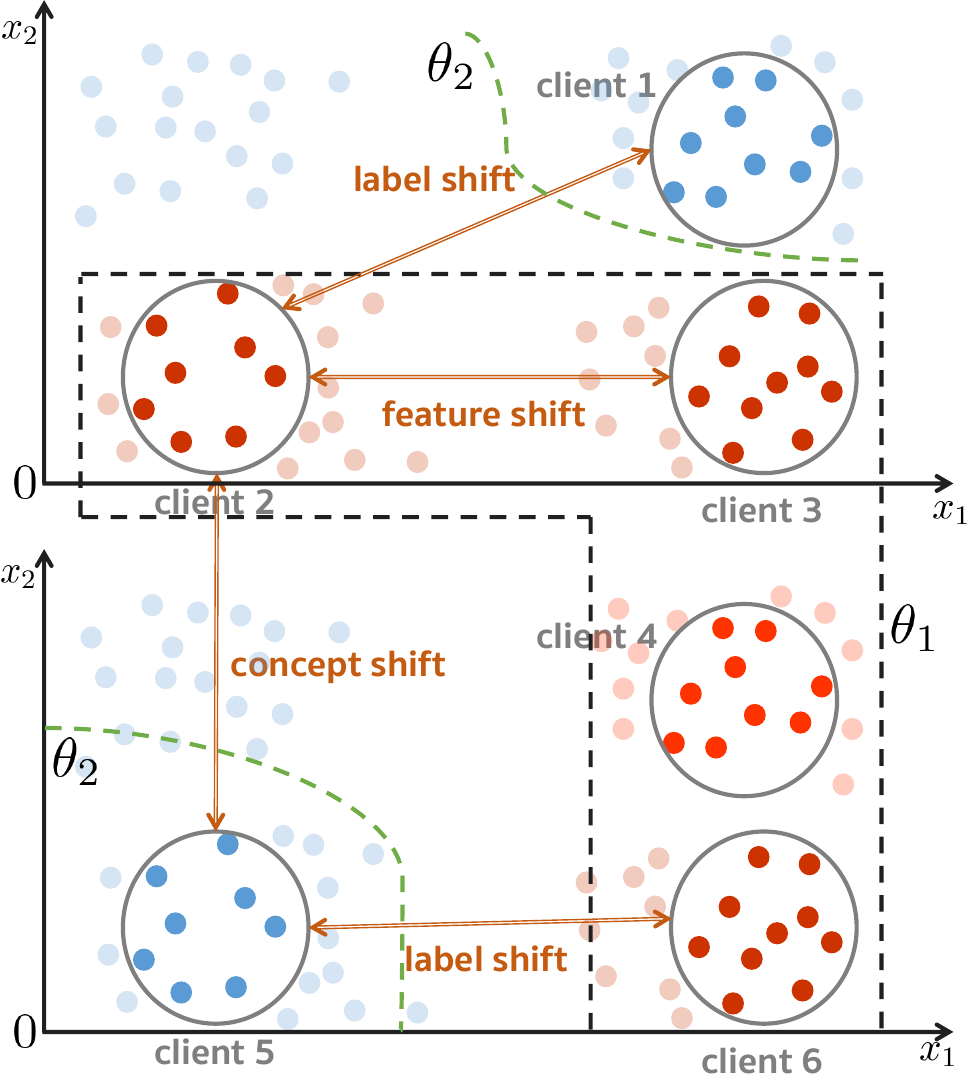}
        \label{fig:illustration-multi-model}
    }
    \subfigure[Our Method]{
        \includegraphics[width=.32\textwidth]{./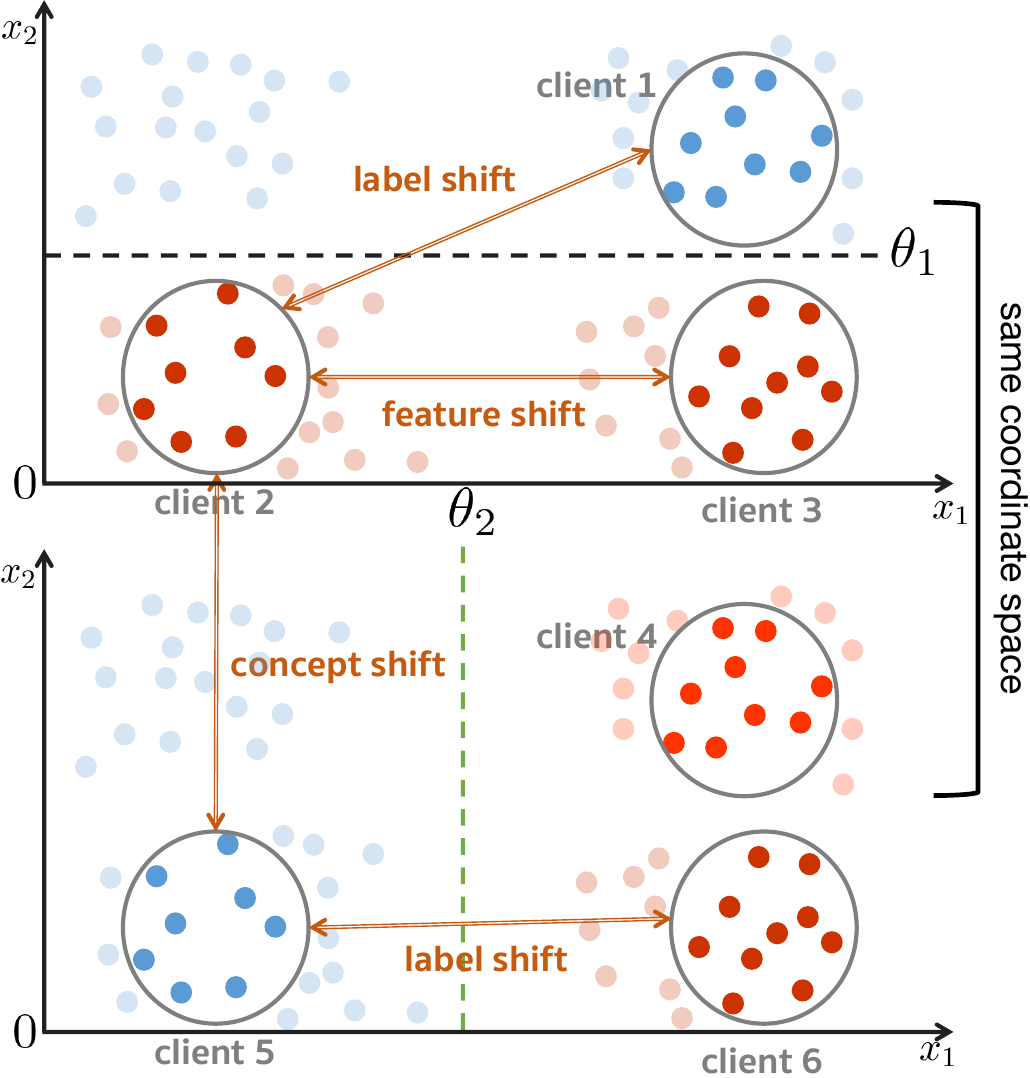}
        \label{fig:illustration-ours}
    }
    \caption{\small
        \textbf{Illustration of our \problem.}
        Each circle represents a client, with points (features) of varying colors indicating distinct labels.
        Label shifts are represented by clients exhibiting data points of varying colors, as seen in clients 1 and 2. Feature shifts are exemplified by clients maintaining data points with the same color but having substantial distances between them, as observed in clients 2 and 3. Concept shifts occur when data points at the same position have different labels, as evident in clients 2 and 5.
        Dashed lines in different colors depict decision boundaries for classifiers of different clusters, i.e., $\mtheta_1$ and $\mtheta_2$.
        \textit{Figure~\ref{fig:illustration-single-model}} demonstrates that single-model methods are inadequate for handling concept shifts.
        \textit{Figure~\ref{fig:illustration-multi-model}} shows that current multi-model methods tend to overfit local distributions and can not handle unseen data, like the data points in the top-left corner of Figure~\ref{fig:illustration-multi-model}.  \textit{Our method (Figure~\ref{fig:illustration-ours})} improves model generalization by grouping clients with concept shifts into distinct clusters, while ensuring that clients with only feature or label shifts are placed in the same clusters.
    }
    \label{fig:illustration of cluster principle}
\end{figure*}

\begin{figure*}
    \centering
    \subfigure[\scriptsize{Performance gain over FedAvg}]{\includegraphics[width=0.325\textwidth]{./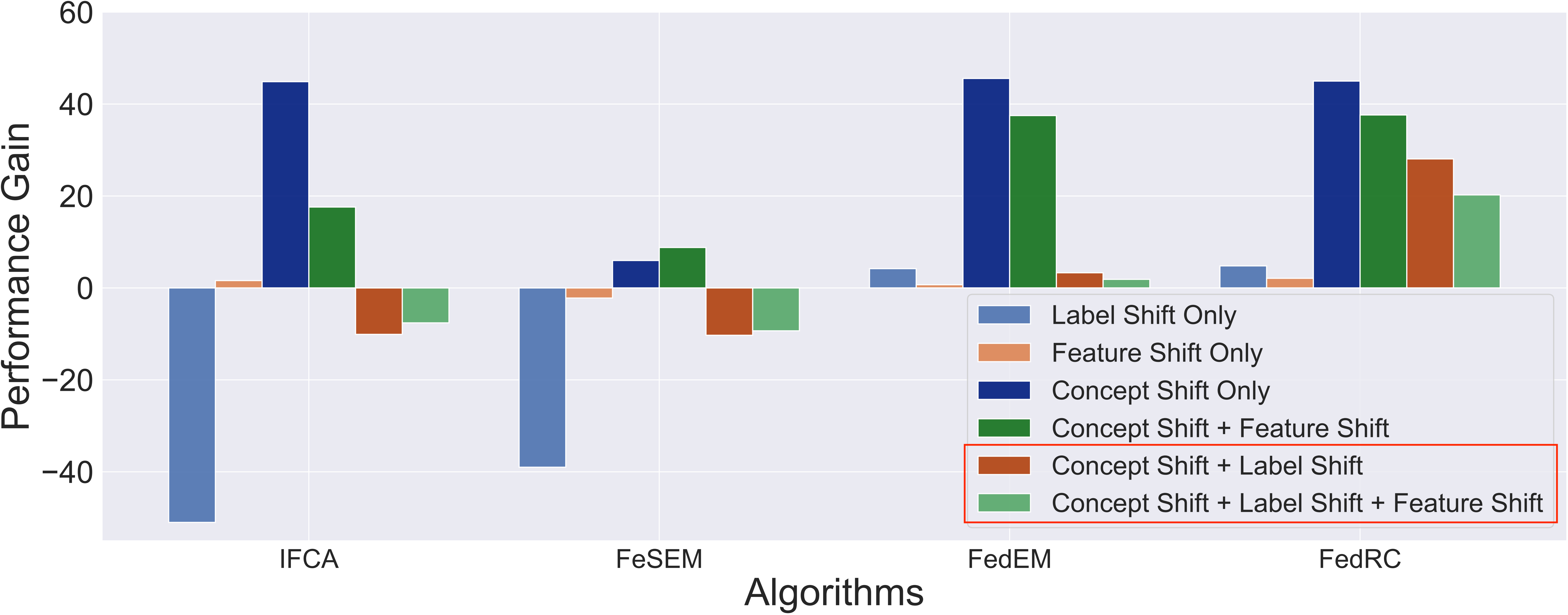}\label{fig:gains}}
    \subfigure[\scriptsize{Performance gap (global--local)}]{\includegraphics[width=0.325\textwidth]{./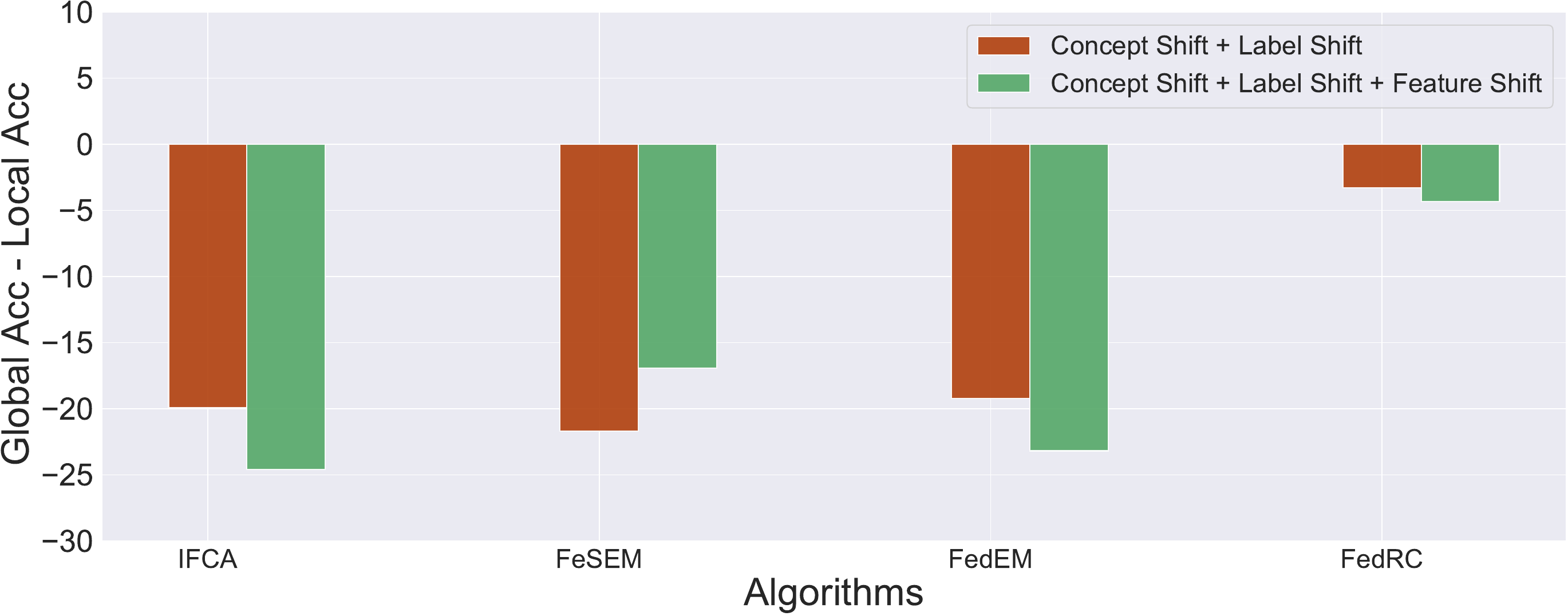}\label{fig:gap}}
    \subfigure[\scriptsize{Clustered FL + FedProx/FedDecorr}]{\includegraphics[width=0.325\textwidth]{./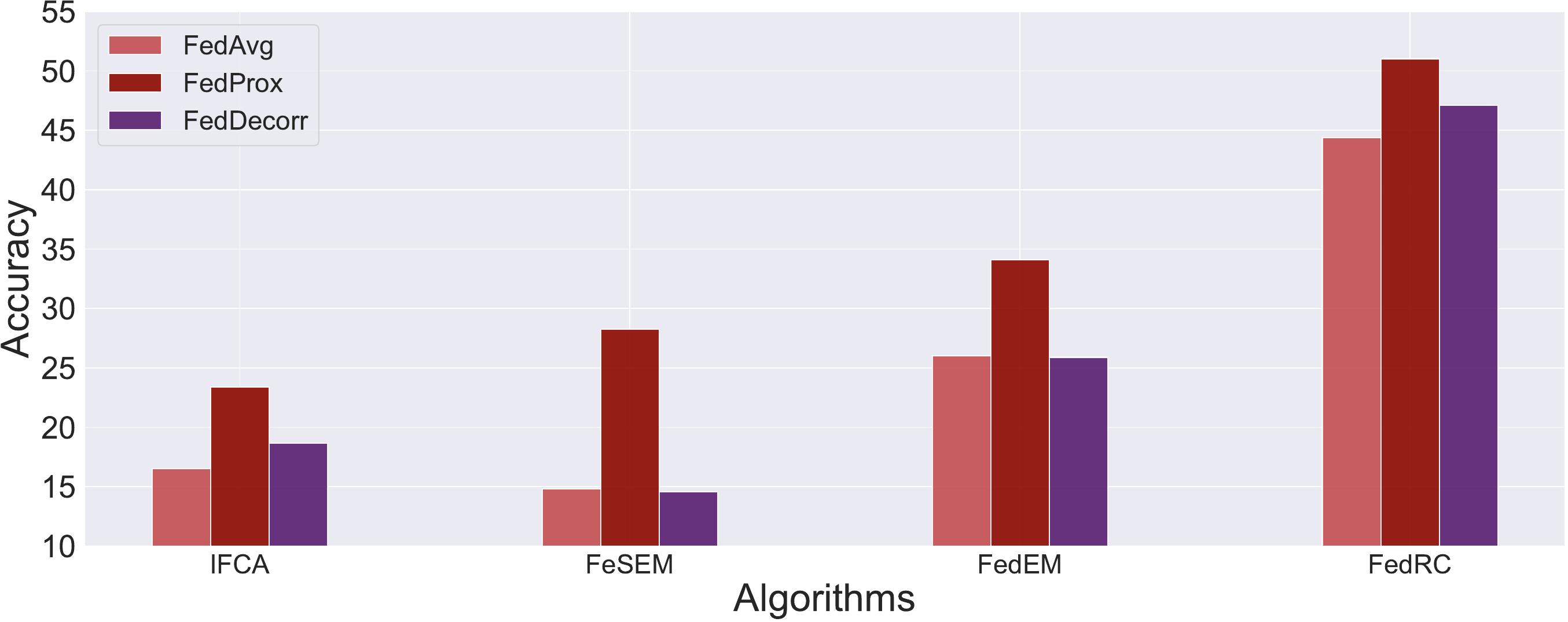}\label{fig:add-prox}}
    \caption{\small \textbf{Performance degradation of existing clustered FL methods.}
        Figure~\ref{fig:gains} presents the global performance improvements of these methods and our \algfed compared to FedAvg. Figure~\ref{fig:gap} presents the local-global performance gap of these algorithms. Figure~\ref{fig:add-prox} illustrates the performance of clustered FL when naively combined with single-model methods, such as FedProx~\citep{li2018federated} and FedDecorr~\citep{shi2022towards}.
        The global distributions are label- and feature-balanced for each concept.
    }
    \label{fig:pitfalls and gains}
\end{figure*}

\paragraph{New challenges posed by the simultaneous occurrence of multiple types of distribution shifts.} Despite the success of existing methods in addressing data heterogeneity in FL, most existing methods concentrate on single shift types. For example, \citet{karimireddy2020scaffold,li2018federated} for label shifts, \citet{peng2019federated,gan2021fruda} for feature shifts, and \citet{jothimurugesan2022federated,ke2022quantifying} for concept shifts.
However, various types of distribution shifts can occur concurrently. For instance, as explained in~\citet{kairouz2021advances}, label shifts may arise from different geographical regions, while feature shifts could be affected by user preferences. Moreover, concept shifts might be prompted by cultural differences or fluctuations in weather conditions. In a scenario where clients come from diverse geographical regions, each with unique cultural backgrounds and varying weather conditions, it is entirely possible for all three types of shifts to transpire at the same time.

Consequently, the concurrent occurrence of multiple distribution shifts gives rise to new challenges that need to be addressed.
\begin{itemize}[leftmargin=12pt,nosep]
    \item As depicted in Figure~\ref{fig:illustration-single-model}, existing works that address label and feature distribution shifts by training single global models, suffer from a significant performance drop when dealing with concept shifts~\citep{ke2022quantifying}. This suggest the necessity of using multiple models to handle the concept shifts.
    \item As shown in Figure~\ref{fig:illustration-multi-model}, existing multi-model approaches, such as clustered FL methods~\citep{sattler2020byzantine,long2023multi,ghosh2020efficient}, cannot distinguish between different shift types and tend to group data with the same labels into the same clusters, thereby tending to overfit local distributions. As a result, current clustering methods train models with limited generalization abilities, leading to a significant gap between local and global performance (Figure~\ref{fig:gap}).
\end{itemize}

\paragraph{Divide-and-Conquer as a solution.} To address the mentioned challenges, we first analyze various distribution shifts and determine which ones can use the same classifier (i.e., decision boundary) and which cannot.
In detail, a shared decision boundary can be found for clients without concept shifts, even if they have feature or label shifts:
\begin{itemize}[leftmargin=12pt,nosep]
    \item
          When concept shifts occur, the same $x$ can yield distinct $y$, leading to altered decision boundaries.
    \item While the conditional distributions $\cP(y|x)$ remain constant when feature and label shifts happen, a common decision boundary can be determined for these clients.
\end{itemize}%

Therefore, to attain strong generalization capabilities in the face of concept shifts, we employ clustering methods to distinguish concept shifts from other types of shifts. We then train shared decision boundaries for clients that do not have concept shifts. In detail, we leverage the \problem, as illustrated in Figure~\ref{fig:illustration-ours} and text below.

\begin{center}
    \centering
    \small
    \textit{separating clients with concept shifts into different clusters, \\
        while keeping clients without concept shifts in the same cluster~\footnote{A trade-off exists between personalization and generalization, as noted by previous studies~\citep{wu2022motley}. Our approach prioritizes learning shared decision boundaries, thereby enhancing generalization but potentially reducing personalization. To overcome this trade-off, we recommend integrating FedRC with other PFL methods, as detailed in Tables~\ref{tab:Performance of algorithms on mobileNetV2} and ~\ref{tab:Performance of algorithms on mobileNetV2 appendix} of our paper.}.}
\end{center}
The primary objective of this paper is to identify a clustering method that adheres to the \problem that existing methods fail: most existing methods cannot distinguish different types of shifts, especially for label and concept shifts, resulting in limited performance gain (Figure~\ref{fig:gains}).
Once this objective is achieved, the existing treatments for feature and label shifts can be effortlessly integrated as a plugin to further improve the performance (see Figure~\ref{fig:add-prox}).
To this end, we propose \algopt: it allows a principled objective function that could effectively distinguish concept shifts from other shifts. Upon achieving this, we can address the \problem by developing global models that train clients with the same concepts together.
We further extend \algopt to the FL scenario and introduce \algfed.

\textbf{Our key contributions are summarized as follows:}
\begin{itemize}[leftmargin=12pt,nosep]
    \item We identify the new challenges posed by the simultaneous occurrence of multiple types of distribution shifts in FL and suggest addressing them using the \problem. To the best of our knowledge, we are the first to evaluate the clustering results of existing clustered FL methods, such as FeSEM~\citep{long2023multi}, IFCA~\citep{ghosh2020efficient}, FedEM~\citep{marfoq2021federated}, and FedSoft~\citep{ruan2022fedsoft}, under various types of distribution shifts.
    \item We develop \algfed, a novel soft-clustering-based algorithm framework, to tackle the \problem. Extensive empirical results on multiple datasets (FashionMNIST, CIFAR10, CIFAR100, and Tiny-ImageNet) and neural architectures (CNN, MobileNetV2, and ResNet18) demonstrate \algfed's superiority over SOTA clustered FL algorithms.
    \item We have investigated a series of extensions to the \algfed framework, encompassing personalization, integration of privacy-preserving techniques, and adaptability concerning the number of clusters.
          
\end{itemize}

\section{Related Works}
\paragraph{Federated Learning with distribution shifts.}
As the de facto FL algorithm, \citep{mcmahan2016communication,lin2020dont} propose using local SGD to reduce communication bottlenecks, but non-IID data distribution among clients hinders performance~\citep{lifederated2018,wang2020tackling,karimireddy2020scaffold,karimireddy2020mime,guo2021towards,jiang2023test}. Addressing distribution shifts is crucial in FL, with most existing works focusing on label distribution shifts through techniques like training robust global models~\citep{lifederated2018,li2021model} or variance reduction methods~\citep{karimireddy2020scaffold,karimireddy2020mime}.
Another research direction involves feature distribution shifts in FL, primarily concentrating on domain generalization to train models that can generalize to unseen feature distributions~\citep{peng2019federated,wang2022framework,shen2021fedmm,sun2022multi,gan2021fruda}. These methods aim to train a single robust model, which is insufficient for addressing diverse distribution shift challenges, as decision boundaries change with concept shifts.
Concept shift has not been extensively explored in FL, but some special cases have recently emerged as topics of interest~\citep{ke2022quantifying,fang2022robust,xu2022fedcorr}. \citep{jothimurugesan2022federated} investigated the concept shift by assuming clients do not have concept shifts at the beginning of training. However, in practice, ensuring the absence of concept shifts when local distribution information is unavailable is challenging. In this work, we consider a more realistic but challenging scenario where clients could have all three kinds of shifts with each other, even during the initial training phase.\footnote{For more detailed discussions, see Appendix~\ref{sec:related works appendix}.}

\paragraph{Clustered Federated Learning.} Clustered Federated Learning (clustered FL) is a technique that groups clients into clusters based on their local data distribution to address the distribution shift problem.
Various methods have been proposed for clustering clients, including using local loss values~\citep{ghosh2020efficient}, communication time/local calculation time~\citep{wang2022accelerating}, fuzzy $c$-Means~\citep{stallmann2022towards}, and hierarchical clustering~\citep{zhao2020cluster, briggs2020federated, sattler2020clustered}.
Some approaches, such as FedSoft~\citep{ruan2022fedsoft}, combine clustered FL with personalized FL by allowing clients to contribute to multiple clusters.
Other methods~\citep{long2023multi,marfoq2021federated,zhu2023confidence,wu2023personalized} employ the Expectation-Maximization approach to maximize log-likelihood functions or joint distributions.
However, these methods targeting on improving the local performance, therefore overlooked the global performance.
In contrast, \algfed shows benefit on obtaining robust global models that perform well on global distributions for each concept, especially when multiple types of shifts occur simultaneously.

\section{Revisiting Clustered FL: A Diverse Distribution Shifts Perspective}
\label{sec:Revisiting Clustered FL: A Diverse Distribution Shift Perspective}
This section examines the results of existing clustered FL methods from the view of distribution shifts. 

\paragraph{Algorithms and experiment settings.} We employed clustered FL methods, including IFCA~\citep{ghosh2020efficient}, FeSEM~\citep{long2023multi}, FedEM~\citep{marfoq2021federated}, and FedSoft~\citep{ruan2022fedsoft}. We create a scenario incorporating all three types of shifts: 
\begin{itemize}[leftmargin=12pt,nosep]
    \item \textbf{Label distribution shift:} We employ LDA~\citep{yoshida2019hybrid,hsu2019measuring} with $\alpha = 1.0$, and split CIFAR10 to 100 clients.
    \item \textbf{Feature distribution shift:} Each client will randomly select one feature style from 20 available styles following the approach used in CIFAR10-C~\citep{hendrycks2019benchmarking}.
    \item \textbf{Concept shift:} We create concept shifts by setting different $\xx \to y$ mappings following the approach utilized in prior studies~\citep{jothimurugesan2022federated,ke2022quantifying,canonaco2021adaptive}.
          Each client randomly selects one $\xx \!\to\! y$ mapping from the three available $\xx \!\to\! y$ mappings.
\end{itemize}
Upon construction, each client's samples $\xx$ will possess a class label $y$, a feature style $f$, and a concept $c$. We then run the aforementioned four clustered FL algorithms until convergence. For every class $y \in [1, 10]$, feature style $f \in [1, 20]$, or concept $c \in [1, 3]$, we display the percentage of data associated with that class, feature style, or concept in each cluster.

\begin{figure*}[!t]
    \centering
    \includegraphics[width=1.\textwidth]{./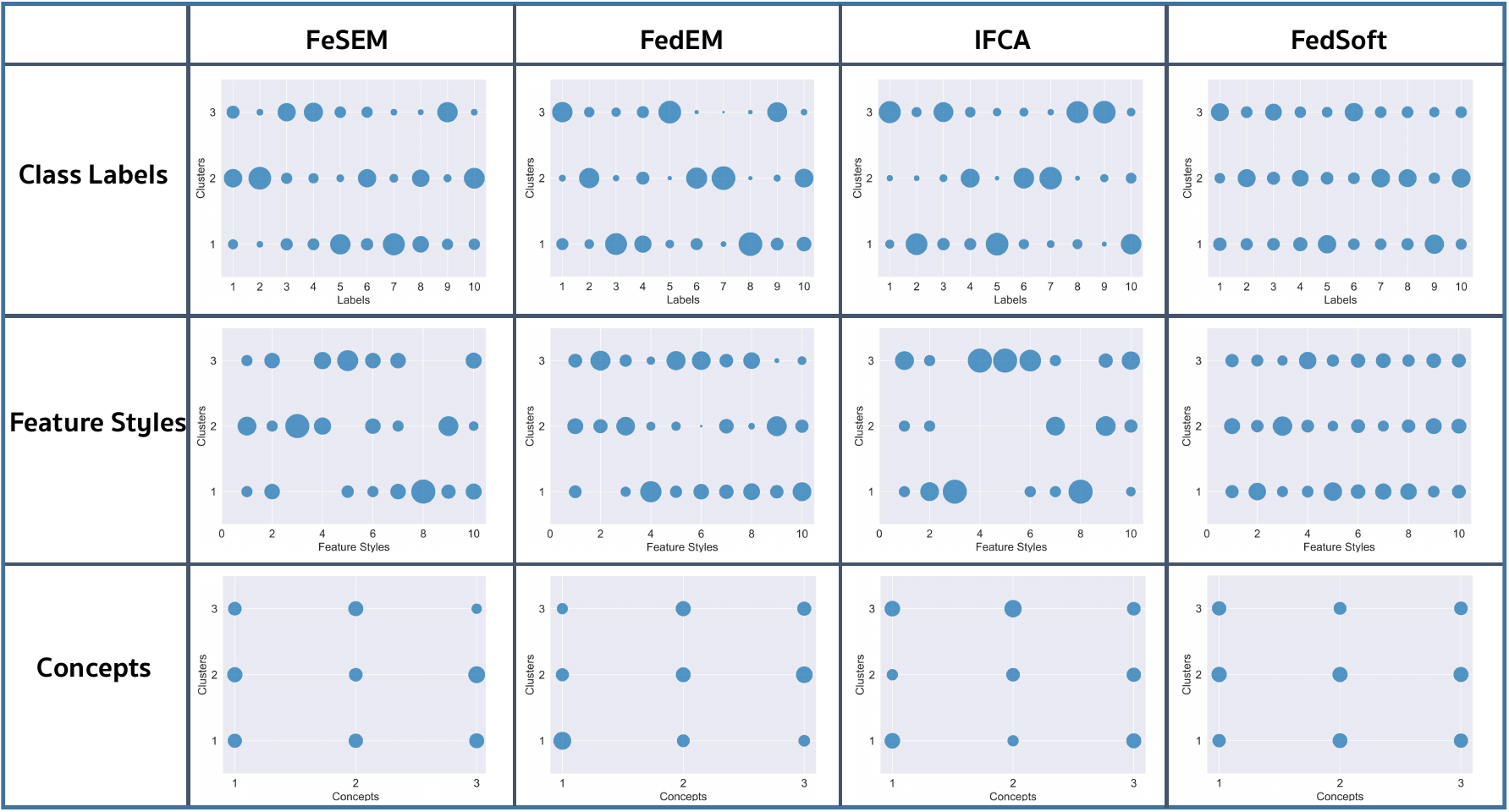}
    \caption{\textbf{Clustering results w.r.t. classes/feature styles/concepts.} After data construction, each data point $\xx$ will have a class $y$, feature style $f$, and concept $c$. We report the percentage of data points associated with a class, feature style, or concept assigned to cluster k. For example, for a circle centered at position $(y, k)$, a larger circle size signifies that more data points with class $y$ are assigned to cluster $k$. For feature styles, we only represent $f \in [1, 10]$ here for clearer representation, and the full version can be found in Figure~\ref{fig:clustering results appendix} of Appendix~\ref{sec:Additional Experiment Results}.
        By the \problem, we require a clustering method in which clients with the same concept are assigned to the same cluster (for example, Figure~\ref{fig:clustering results of algfed on concept}).
    }
    \label{fig:clustering results}
\end{figure*}
\begin{figure*}[!t]
    \centering
    \subfigure[$K=3$, w.r.t. classes]{
        \includegraphics[width=0.23\textwidth]{./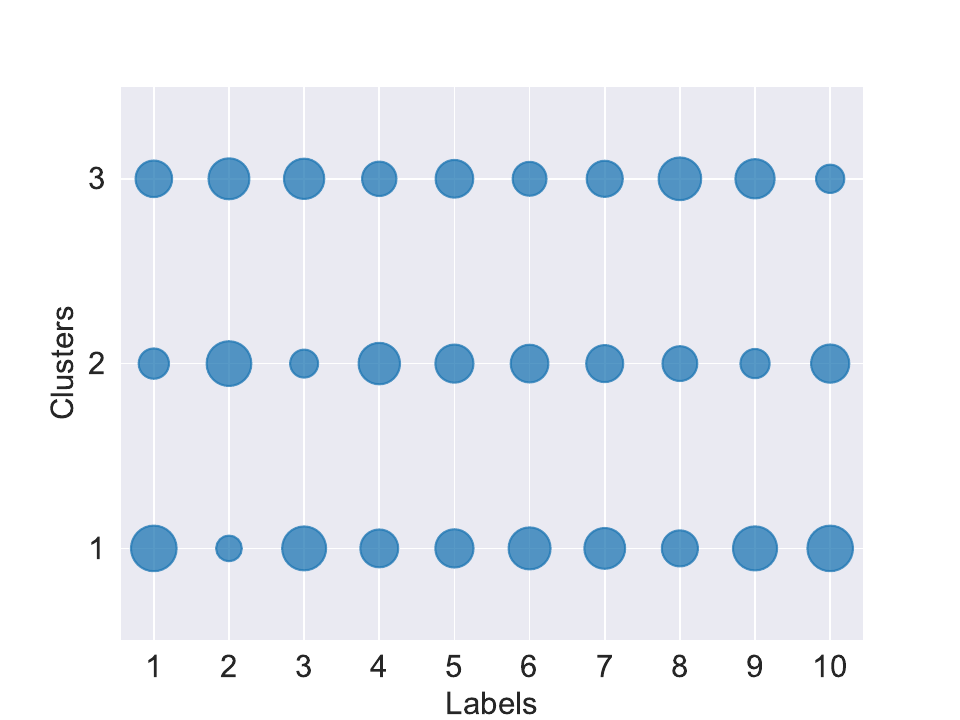}
        \label{fig:clustering results of algfed on label}
    }
    \subfigure[$K=3$, w.r.t. concepts]{
        \includegraphics[width=0.23\textwidth]{./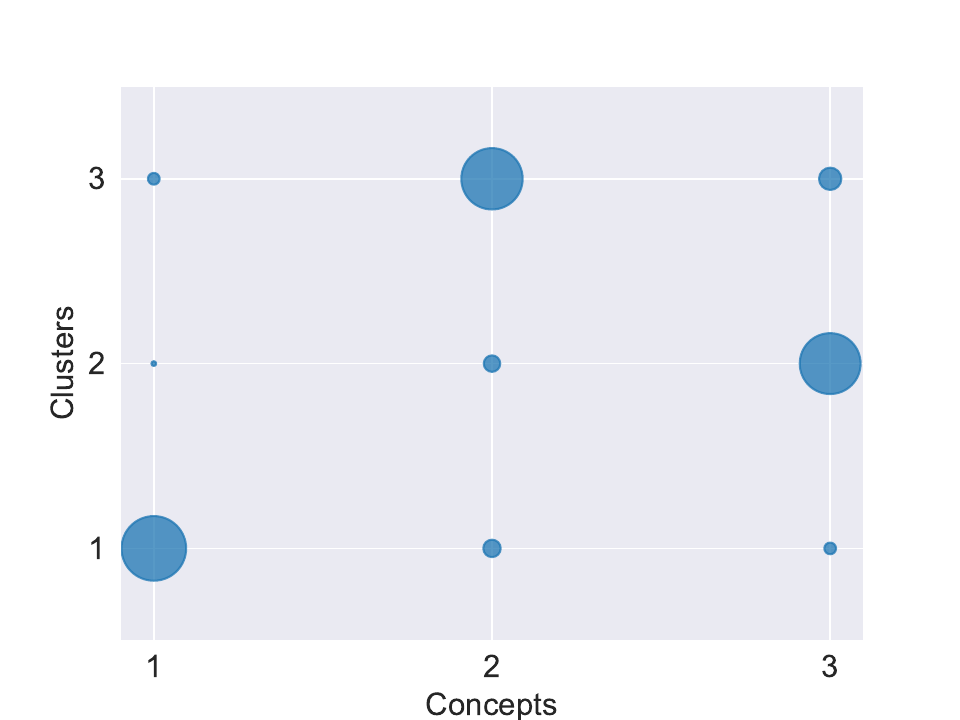}
        \label{fig:clustering results of algfed on concept}
    }
    \subfigure[$K=4$,  w.r.t. classes]{
        \includegraphics[width=0.23\textwidth]{./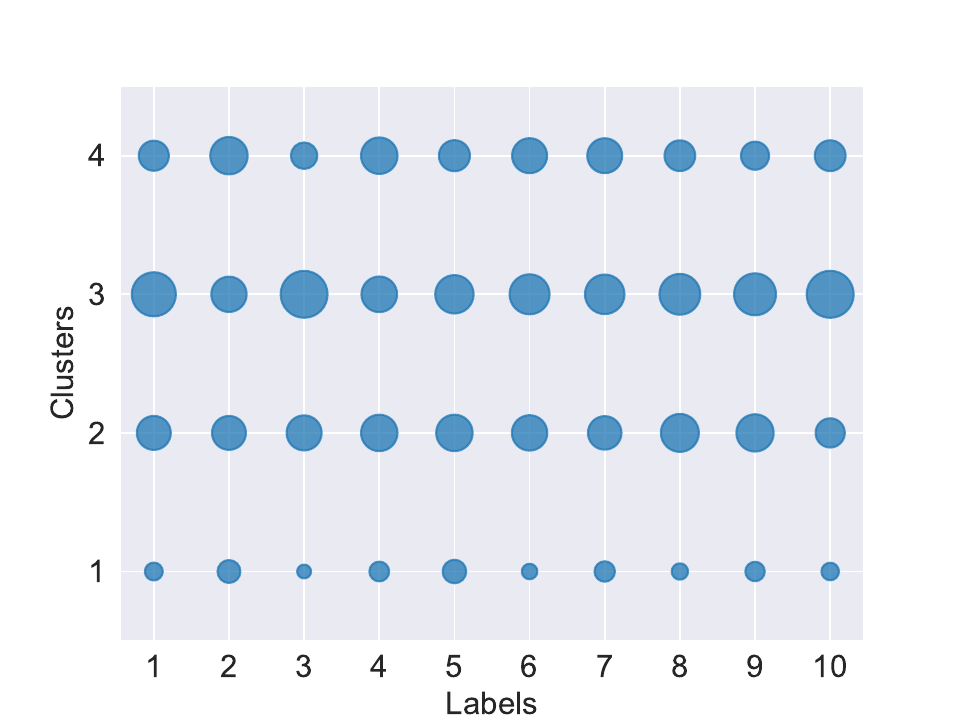}
        \label{fig:clustering results of algfed on label 4}
    }
    \subfigure[$K=4$, w.r.t. concepts]{
        \includegraphics[width=0.23\textwidth]{./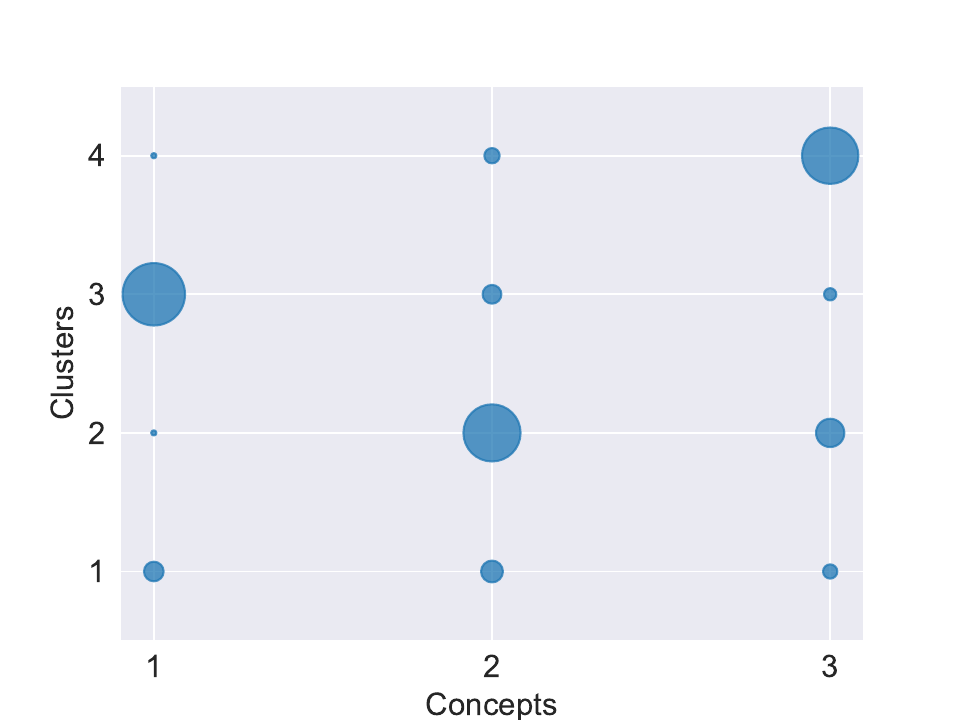}
        \label{fig:clustering results of algfed on concept 4}
    }
    \caption{\small \textbf{Clustering results of \algfed w.r.t. classes/concepts.}
        The number of clusters is selected within the range of $[3, 4]$, while keeping the remaining settings consistent with those used in Figure~\ref{fig:clustering results}.
        Due to page limitations, we present the clustering results w.r.t. feature styles in Figure~\ref{fig:clustering results of algfed appendix} of Appendix~\ref{sec:ablation study appendix}.}
    \label{fig:clustering results of algfed}
\end{figure*}

\paragraph{Existing clustered FL methods fail to achieve the  \problem.}
We present the clustering results of clustered FL methods regarding classes, feature styles, and concepts in Figure~\ref{fig:clustering results}. Results show that:
(1) All existing clustered FL methods are unable to effectively separate data with concept shifts into distinct clusters, as indicated in the row of Concept;
(2) FeSEM, FedEM, and IFCA are not robust to label distribution shifts, meaning that data with the same class labels are likely to be grouped into the same clusters, as shown in the row of Class Labels;
(3)  FeSEM and IFCA are not resilient to feature distribution shifts, as shown in the row of Feature Styles;
(4) FedSoft is unable to cluster clients via the local distribution shifts, due to the ambiguous common pattern of clients in the same cluster evidenced in the FedSoft column.

\section{Our approach: \algfed}
\label{conceptem method}
Section~\ref{sec:Revisiting Clustered FL: A Diverse Distribution Shift Perspective} identified sub-optimal clustering in existing clustered FL methods for the \problem. To address this, we introduce a new soft clustering algorithm called \algfed in this section. We first present the centralized version (\algopt) and then adapt it for FL scenarios as \algfed.

\subsection{\algopt: Training Robust Global Models for Each Concept}
In this section, we formulate the clustering problem as a bi-level optimization problem and introduce a new objective function to achieve the \problem.

\paragraph{Clustering via bi-level optimization.}
Given $M$ data sources $\{ \cD_1, \cdots, \cD_M \}$ and the number of clusters $K$, clustering algorithms can be formulated as optimization problems that maximize objective function $\cL(\mTheta, \mOmega)$ involving two parameters:
\begin{itemize}[leftmargin=12pt,nosep]
    \item $\mTheta := [ \mtheta_1, \cdots, \mtheta_K ]$, where the distribution for cluster $k$ is parameterized by $\mtheta_k$. In most cases, $\mtheta_k$ is learned by a deep neural network by maximizing the likelihood function $\cP(\xx, y; \mtheta_k)$ for any data $(\xx, y)$ belonging to cluster $k$~\citep{long2023multi,marfoq2021federated}~\footnote{Here, $\cP(\xx, y; \mtheta_k)$ represents the probability density of $(\xx, y)$ for the distribution parameterized by $\mtheta_k$.}.
    \item $\mOmega := [ \omega_{1; 1}, \cdots, \omega_{1; K}, \cdots, \omega_{M; K} ] \in \mathbb{R}^{MK}$, with $\omega_{i;k}$ denoting the clustering weight assigned by $D_i$ to cluster $k$. The choice of $\mOmega$ varies among different clustering methods~\citep{ruan2022fedsoft,ghosh2020efficient}. Additionally, we usually have $\sum_{k=1}^{K} \omega_{i;k} = 1, \forall i$.
\end{itemize}

\paragraph{Objective function of \algopt.}
Taking the \problem into account, the expected objective function $\cL(\mTheta, \mOmega)$ must fulfill the following properties:
\begin{itemize}[leftmargin=12pt,nosep]
    \item \textbf{Maximizing likelihood function.} $\cL(\mTheta, \mOmega)$ should be positively correlated with the likelihood function $\cP(\xx, y; \mtheta_k)$, since the primary goal of \algopt is to learn $\mtheta_k$ that accurately represents the distribution of cluster $k$ ;
    \item \textbf{Adhering to principles of clustering.}
          $\cL(\mTheta, \mOmega)$ should be small only when \problem is not satisfied, that is, clients with concept shifts are assigned into the same clusters.
\end{itemize}
To this end, we design the following objective function:
\begin{small}
    \begin{align}
        \textstyle
        \begin{split}
            \cL (\mathbf{\Theta}, \mOmega)
            & = \frac{1}{N} \sum_{i=1}^{M} \sum_{j=1}^{N_i} \ln \left( \sum_{k=1}^{K} \omega_{i; k} \cI (\xx_{ij}, y_{ij}, \mtheta_k)  \right) \,, \\
            \text{s.t.} \; & \sum_{k=1}^{K} \omega_{i; k} = 1, \forall i \, ,
        \end{split}
        \label{our objective}
    \end{align}
\end{small}%
where
$
    \cI (\xx, y; \mtheta_k)
    \!=\! \frac{\cP (\xx, y; \mtheta_k)}{\cP (\xx; \mtheta_k) \cP (y; \mtheta_k)}
    \!=\! \frac{\cP (y | \xx; \mtheta_k)}{\cP (y; \mtheta_k)}
    \!=\! \frac{\cP (\xx | y; \mtheta_k)}{\cP (\xx; \mtheta_k)}
$.
Note that $N_i := \abs{\cD_i}$, and $N = \sum_{i=1}^{M} N_i$, and $(\xx_{i,j}, y_{i,j})$ is the $j$-th data sampled from dataset $D_i$.

\paragraph{Interpretation of the objective function.}
Maximizing the $\cL (\mathbf{\Theta}, \mOmega)$ can achieve the \problem.
It can be verified by assuming that data $(\xx, y)$ is assigned to cluster $k$.
In detail:
\begin{itemize}[nosep, leftmargin=12pt]
    \item \textbf{Maximizing $\cL (\mathbf{\Theta}, \mOmega)$ can avoid concept shifts within the same cluster.} If $(\xx, y)$ exhibits a concept shift with respect to the distribution of cluster $k$, $\cP (y | \xx; \mtheta_k)$ will be small (toy examples can be found in Figure~\ref{fig:toy case on objective}). This will lead to a decrease on $\cL (\mathbf{\Theta}, \mOmega)$, which contradicts our goal of maximizing $\cL (\mathbf{\Theta}, \mOmega)$.
    \item \textbf{$\cL (\mathbf{\Theta}, \mOmega)$ can decouple concept shifts with label or feature distribution shifts.} If $(\xx, y)$ exhibits a label or feature distribution shift with respect to the distribution of cluster $k$, $\cP(y; \boldsymbol{\theta}_k)$ or $\cP( \xx; \boldsymbol{\theta}_k)$ will be small. Therefore,  $\cL (\mathbf{\Theta}, \mOmega)$ will not decrease significantly as (1) data points without concept shifts generally have larger $\cP(y | \xx; \boldsymbol{\theta}_k)$ compared to those with concept shifts, and (2) $\cL (\mathbf{\Theta}, \mOmega)$ is negatively related to both $\cP(y; \boldsymbol{\theta}_k)$ and $\cP(\xx; \boldsymbol{\theta}_k)$.
\end{itemize}

We also give a more detailed explanation of how \algopt achieves the \problem using toy examples in Figure~\ref{fig:toy case on objective} and Figure~\ref{fig:toy case} of Appendix~\ref{sec: Interpretation of the objective function}.

\subsection{Optimization Procedure of \algopt}
\label{sec:EM steps of algopt}
This section elaborates on how to optimize the objective function defined in~\eqref{our objective} in practice.

\paragraph{Approximated objective function for practical implementation.}
Note that $\cI (\xx, y; \mtheta_k)$ cannot be directly evaluated in practice. To simplify the implementation, we choose to calculate $\cI (\xx, y; \mtheta_k)$ by $\frac{\cP (y | \xx; \mtheta_k)}{\cP (y; \mtheta_k)}$, and both $\cP (y | \xx; \mtheta_k)$ and $\cP (y; \mtheta_k)$ need to be approximated. Therefore, we introduce $\tilde{\cI} (\xx, y; \mtheta_k)$ as an approximation of $\cI (\xx, y; \mtheta_k)$, and
we elaborate the refined definition of~\eqref{our objective} below:
\begin{small}
    \begin{align}
        \textstyle
        \begin{split}
            \cL (\mTheta, \mOmega)
            & = \frac{1}{N} \sum_{i=1}^{M} \sum_{j=1}^{N_i} \ln \left( \sum_{k=1}^{K} \omega_{i;k} \tilde{\cI} (\xx_{i,j}, y_{i,j}, \mtheta_k) \right) \\
            \text{s.t.} \; & \sum_{k=1}^{K} \omega_{i; k} = 1, \forall i \, ,
        \end{split}
        \label{our objective practical}
    \end{align}
\end{small}%
where $\tilde{\cI} (\xx, y; \mtheta_k)
    \!=\! \frac{\exp\left(- f(\xx, y, \mtheta_k)\right) }{C_{y,k}}$.
Note that $f(\xx, y, \mtheta_k)$ is the loss function defined by $- \ln \cP (y | \xx; \mtheta_k) + C$ for some constant $C$~\citep{marfoq2021federated}~\footnote{The $- \ln \cP (y | \xx; \mtheta_k) + C$ formulation can accommodate widely used loss functions such as cross-entropy loss, logistic loss, and mean squared error loss.}.
$C_{y,k}$ is the constant that used to approximate $\cP (y; \mtheta_k)$ in practice.

The intuition behind using $\tilde{\cI} (\xx, y; \mtheta_k)$ as an approximation comes from:
\begin{itemize}[nosep, leftmargin=12pt]
    \item The fact of $f(\xx, y, \mtheta_k) \propto - \ln \cP (y | \xx; \mtheta_k)$.
          We can approximate $\cP(y|\xx; \mtheta_k)$ using $\exp(-f(\xx,y,\mtheta_k))$. This approximation will only change $\cL(\mTheta,\mOmega)$ to $\cL(\mTheta,\mOmega) + C$.
    \item $C_{y,k}$ is calculated by $ \frac{\frac{1}{N} \sum_{i=1}^{M} \sum_{j=1}^{N_i} \1_{\{y_{i,j}=y\}} \gamma_{i, j; k}}{ \frac{1}{N} \sum_{i=1}^{M} \sum_{j=1}^{N_i} \gamma_{i, j; k}}$, where  $\gamma_{i, j; k}$ represents the weight of data $(\xx_{i,j}, y_{i,j})$ assigned to $\mtheta_k$ (c.f.\ Remark~\ref{serve of gamma main}).
          Thus, $C_{y,k}$ corresponds to the proportion of data pairs labeled as $y$ that choose model $\mtheta_k$, and can be used to approximate $\cP (y; \mtheta_k)$.
          
\end{itemize}

The optimization steps for \algopt are obtained by maximizing~\eqref{our objective practical} that alternatively updates $\gamma_{i,j;k}^{t}$ and $\omega_{i;k}^{t}$ using~\eqref{update gamma}, and $\mtheta_{k}^{t}$ using~\eqref{global gradient step}.
The proof details refer to Appendix~\ref{sec:Proof of Theorem EM steps}. \\
\begin{small}
    \begin{align}
        \textstyle
        \gamma_{i, j; k}^{t} \!=\! \frac{\omega_{i;k}^{t-1} \tilde{\cI} (\xx_{i,j}\!, y_{i,j}, \mtheta_k^{t-1}) }{\sum_{n\!=\!1}^{K} \omega_{in}^{t-1} \tilde{\cI} (\xx_{i,j}\!, y_{i,j}\!, \mtheta_k^{t-1}) } ,
        \;
        \omega_{i;k}^{t}  \!=\! \frac{1}{N_i} \sum_{j=1}^{N_i} \gamma_{i, j; k}^{t} \, ,
        \label{update gamma}
    \end{align}
    \begin{align}
        \textstyle
        \mtheta_{k}^{t}  \!=\! \mtheta_{k}^{t-1} \!-\! \eta \frac{1}{N} \sum_{i=1}^{M} \sum_{j\!=\!1}^{N_i} \gamma_{i\!,j\!,k}^{t} \nabla_{\mtheta} f_{i\!,k} (\xx_{i,j} \!, y_{i,j} \!, \mtheta_k^{t-1})\,.
        \label{global gradient step}
    \end{align}
\end{small}%

\begin{remark}[Property of $\gamma_{i, j; k}$]
    We can observe from~\eqref{global gradient step} that $\gamma_{i, j; k}$ can serve as the weight of data point $(\xx_{i,j}, y_{i,j})$ that contributes to the update of $\mtheta_k$, and $\omega_{i;k}$ is the average of $\gamma_{i, j; k}$ over all data points $(\xx_{i,j}, y_{i,j})$ in $D_i$.
    \label{serve of gamma main}
\end{remark}

\begin{remark}[Compare with existing bi-level optimization methods]
    EM algorithms are also categorized as bi-level optimization algorithms~\citep{nguyen2020adaptive,marfoq2021federated} and share a similar optimization framework with our method. However, the key difference lies in the design of the objective function. Our proposed objective function is distinct, and we leverage the bi-level optimization framework as one of the possible solutions to optimize it. In Figure~\ref{fig:toy case} of Appendix~\ref{sec: Interpretation of the objective function}, we demonstrate that without our designed objective function, the decision boundary will be unclear, resulting in poor classification performance. 
\end{remark}

\subsection{Convergence of \algopt}

In this section, we give the convergence rate of the centralized clustering algorithm \algopt.

\begin{assumption}[Smoothness Assumption] \label{Smoothness assumption}
    Assume functions $f (\mtheta)$ are $L$-smooth, i.e.\ $\norm{\nabla f(\mtheta_1) - \nabla f(\mtheta_2)} \le L \norm{\mtheta_1 - \mtheta_2}$.
\end{assumption}

\begin{assumption}[Bounded Gradient Assumption] \label{Bounded Gradient Assumption}
    Assume that the gradient of local objective functions $\nabla f (\mtheta)$ are bounded, i.e.\ $\Eb{\norm{\nabla f (\mtheta)}^2} = \frac{1}{N} \sum_{i=1}^{M} \sum_{j=1}^{N_i} \norm{\nabla f (\xx_{i,j}, y_{i,j}, \mtheta)}^2 \le \sigma^2$.
\end{assumption}

The assumptions of smoothness (Assumption~\ref{Smoothness assumption}) and bounded gradient (Assumption~\ref{Bounded Gradient Assumption}) are frequently employed in non-convex optimization problems~\citep{chen2018convergence,mertikopoulos2020almost}. We introduce the bounded gradient assumption since we only assume $f (\mtheta)$ to be L-smooth, and $\cL(\mTheta, \mOmega)$ may not be a smooth function. We demonstrate that \algopt converges under these standard assumptions and attains the typical $O(\nicefrac{1}{\epsilon})$ convergence rate.

\begin{theorem}[Convergence rate of \algopt]
    Assume $f_{ik}$ satisfy Assumption~\ref{Smoothness assumption}-\ref{Bounded Gradient Assumption}, setting $T$ as the number of iterations, and $\eta \!=\! \frac{8}{40L \!+\! 9\sigma^2}$, we have,
    \begin{small}
        \begin{align*}
            \textstyle
            \frac{1}{T} \sum_{t\!=\!0}^{T\!-\!1} \sum_{k\!=\!1}^{K}  \norm{\nabla_{\mtheta_k} \cL\left( \mTheta^{t}, \mOmega^{t} \right)}^2 \!\le\! \cO (\! \frac{(40 L \!+\! 9 \sigma^2) \left( \cL^\star \!-\! \cL^0 \right)}{4 T} \!) \,,
        \end{align*}
    \end{small}%
    where $\cL^{\star}$ is the upper bound of $\cL(\mTheta, \mOmega)$, and $\cL^0 = \cL(\mTheta^{0}, \mOmega^{0})$.
    Proof details refer to Appendix~\ref{sec:proof of convergence}.
    \label{Convergence rate of algopt}
\end{theorem}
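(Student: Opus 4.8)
The plan is a standard ``sufficient-ascent plus telescoping'' argument adapted to the alternating updates~\eqref{update gamma}--\eqref{global gradient step}. At iteration $t$ the algorithm forms $\gamma^{t}$ from $(\mTheta^{t-1},\mOmega^{t-1})$, sets $\omega^{t}_{i;k}=\frac{1}{N_i}\sum_j\gamma^{t}_{i,j;k}$, and then takes one block-gradient step in $\mTheta$. I would bound the per-iteration change $\mathcal{L}(\mTheta^{t},\mOmega^{t})-\mathcal{L}(\mTheta^{t-1},\mOmega^{t-1})$ by splitting it into the $\mOmega$-move and the $\mTheta$-move. For the $\mOmega$-move, the update $\omega^{t}_{i;k}=\frac{1}{N_i}\sum_j\gamma^{t}_{i,j;k}$ is exactly the maximizer of the Jensen lower bound of $\mathcal{L}(\mTheta^{t-1},\cdot)$ built from the mixture weights $\gamma^{t}$ (the standard EM/MM surrogate, using concavity of $\ln$ and $\sum_k\gamma^{t}_{i,j;k}=1$), so this move never decreases $\mathcal{L}$. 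It therefore remains to produce a quantitative ascent for the $\mTheta$-move and then to telescope over $t=0,\dots,T-1$, upper-bounding the telescoped left-hand side by $\mathcal{L}^\star-\mathcal{L}^0$.

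For the $\mTheta$-move I would again use a Jensen surrogate, since $\mathcal{L}(\cdot,\mOmega)$ itself need not be $L$-smooth (only each $f$ is). With the weights $\gamma^{t}$ frozen at their current values (as in the algorithm, and working with the practical objective~\eqref{our objective practical}), $\mathcal{L}(\mTheta,\mOmega^{t-1})$ is lower-bounded by a surrogate $Q(\mTheta\,|\,\mTheta^{t-1},\mOmega^{t-1})$ that coincides with $\mathcal{L}$ to first order at $\mTheta^{t-1}$ and whose only $\mTheta$-dependence is $-\frac{1}{N}\sum_{i,j}\sum_k\gamma^{t}_{i,j;k}f(\xx_{i,j},y_{i,j},\mtheta_k)$; because $\frac{1}{N}\sum_{i,j}\gamma^{t}_{i,j;k}\le1$ and each $f$ is $L$-smooth, $-Q$ is $L$-smooth and block-separable in $\mTheta$. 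Crucially, the update~\eqref{global gradient step} is precisely one gradient-descent step on $-Q$: its block gradient at $\mTheta^{t-1}$ equals $g^{t}_k:=\frac{1}{N}\sum_{i,j}\gamma^{t}_{i,j;k}\nabla_{\mtheta}f(\xx_{i,j},y_{i,j},\mtheta_k^{t-1})=-\nabla_{\mtheta_k}\mathcal{L}(\mTheta^{t-1},\mOmega^{t-1})$. The classical descent lemma for $L$-smooth functions then yields $\mathcal{L}(\mTheta^{t},\mOmega^{t-1})\ge\mathcal{L}(\mTheta^{t-1},\mOmega^{t-1})+\eta\bigl(1-\tfrac{L\eta}{2}\bigr)\sum_k\norm{g^{t}_k}^2$, and chaining with the $\mOmega$-ascent gives a one-step improvement of order $\eta\sum_k\norm{g^{t}_k}^2$.

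The main obstacle is the index mismatch: this ascent is expressed through $\norm{g^{t}_k}^2=\norm{\nabla_{\mtheta_k}\mathcal{L}(\mTheta^{t-1},\mOmega^{t-1})}^2$, whereas the theorem measures stationarity at the current iterate $(\mTheta^{t},\mOmega^{t})$, and the gradient driving the step was computed with $\mOmega^{t-1}$ rather than $\mOmega^{t}$. Closing this gap is exactly where Assumption~\ref{Bounded Gradient Assumption} enters: the step size controls the displacement via $\norm{\mtheta_k^{t}-\mtheta_k^{t-1}}=\eta\norm{g^{t}_k}\le\eta\sigma$ (Jensen plus bounded gradients), and the induced drift of $\nabla_{\mtheta_k}\mathcal{L}$ between consecutive evaluation points can be controlled by $L$-smoothness of the $f$'s in $\mtheta$ together with a Lipschitz-in-$\mOmega$ estimate for $\gamma$ (whose own change is driven by the previous step's displacement, hence telescoping-friendly). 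This manufactures error terms of order $\eta^2(L+\sigma^2)\sum_k\norm{g^{t}_k}^2$, plus cross terms handled by Young's inequality; the choice $\eta=\tfrac{8}{40L+9\sigma^2}$ is calibrated precisely so that these are dominated and a strictly positive multiple of $\eta\sum_k\norm{\nabla_{\mtheta_k}\mathcal{L}(\mTheta^{t},\mOmega^{t})}^2$ survives on the right. Summing over $t$, bounding $\sum_t[\mathcal{L}(\mTheta^{t+1},\mOmega^{t+1})-\mathcal{L}(\mTheta^{t},\mOmega^{t})]\le\mathcal{L}^\star-\mathcal{L}^0$, and dividing by $T$ and by the surviving $\cO(\eta)$ factor then gives the stated bound with constant $\tfrac{40L+9\sigma^2}{4}$; the remaining effort is routine constant-chasing through the Young splits and checking $1-\tfrac{L\eta}{2}>0$ for the chosen $\eta$.
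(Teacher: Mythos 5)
Your proposal is correct in its core and takes a genuinely different route from the paper's own proof of Theorem~\ref{Convergence rate of algopt}. The paper does not use an MM/surrogate argument here: it establishes a \emph{generalized} smoothness property of $\cL$ directly (Lemmas~\ref{difference on gamma lemma} and~\ref{smoothness of algopt}), in which the perturbation of the responsibilities $\gamma_{i,j;k}$ under a change of $\mtheta_k$ produces remainder terms weighted by $\norm{\nabla f}$ and $\norm{\nabla f}^2$ together with a cubic term in $\norm{\mtheta_1-\mtheta_2}$; Assumption~\ref{Bounded Gradient Assumption} is then used to bound these gradient-dependent coefficients by $\sigma^2$, and a block-interpolation argument (updating $\mtheta_1,\dots,\mtheta_K$ one at a time and controlling the gap between the interpolated gradients $\nabla F_k$ and the actual step direction $\nabla F_0$) produces the stated constants. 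Your Jensen-surrogate route sidesteps all of this: the surrogate is exactly $L$-smooth and block-separable, so the vanilla descent lemma applies jointly to all $K$ blocks, and the $\mOmega$-move is handled by exact maximization of the same surrogate --- this is essentially the argument the paper itself uses for the federated version in Appendix~\ref{sec: convergence of algfed} via the framework of \citep{marfoq2021federated}, and it is arguably cleaner: Assumption~\ref{Bounded Gradient Assumption} becomes nearly superfluous, at the cost of not reproducing the exact constant $\frac{40L+9\sigma^2}{4}$ (harmless under the $\cO(\cdot)$ in the statement). One correction to your third paragraph: there is no index mismatch to close. The theorem sums $\norm{\nabla_{\mtheta_k}\cL(\mTheta^t,\mOmega^t)}^2$ over $t=0,\dots,T-1$, i.e., over the \emph{pre-update} iterates, and by~\eqref{update gamma}--\eqref{global gradient step} the step taken when moving from iterate $t$ to iterate $t+1$ is exactly $\eta$ times $\nabla_{\mtheta_k}\cL(\mTheta^{t},\mOmega^{t})$, since the E-step responsibilities computed from $(\mTheta^t,\mOmega^t)$ are precisely what appears in that gradient. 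Hence the sufficient-ascent inequality from your second paragraph already telescopes to the claimed sum, and the drift-control machinery for which you invoke Assumption~\ref{Bounded Gradient Assumption} is unnecessary; in the paper's proof that assumption instead enters through the $\gamma$-perturbation bounds inside the generalized smoothness lemmas.
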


\subsection{\algfed: Adapting \algopt into FL}
\label{sec:federated version of the algorithm}

In Algorithm~\ref{Algorithm Framework} and Figure~\ref{fig:optimization-process}, we summarize the whole process of \algfed. In detail, at the beginning of round $t$, the server transmits parameters $\mathbf{\Theta}^{t} = [\mtheta_1^{t}, \cdots, \mtheta_{K}^{t}]$ to clients. Clients then locally update $\gamma_{i, j; k}^{t+1}$ and $\omega_{i;k}^{t+1}$ using~\eqref{update gamma} and~\eqref{global gradient step}, respectively.
Then clients initialize $\mtheta_{i, k}^{t, 0} = \mtheta_{k}^{t}$, and update parameters $\mtheta_{i, k}^{t, \tau}$ for $\mathcal{T}$ local steps:
\begin{small}
    \begin{align}
        \textstyle
        \mtheta_{i\!, k}^{t\!, \tau} & \!=\! \mtheta_{i\!, k}^{t\!, \tau\!-\!1} \!-\! \frac{\eta_l}{N_i} \sum_{j\!=\!1}^{N_i} \gamma_{i\!, j; k}^{t} \nabla_{\mtheta} f_{i\!;k} (\xx_{i\!,j} \!, y_{i\!,j} \!, \mtheta_{i\!,k}^{t\!, \tau\!-\!1}) \, ,
        \label{gradient step}
    \end{align}
\end{small}%
where $\tau$ is the current local iteration, and $\eta_l$ is the local learning rate.In our algorithm, the global aggregation step uses the FedAvg method as the default option.
However, other global aggregation methods e.g.~\citep{wang2020federated,wang2020tackling} can be implemented if desired. We include more details about the model prediction in Appendix~\ref{sec:model prediction}. We also include the discussion about the convergence rate of \algfed in Appendix~\ref{sec: convergence of algfed}.
\begin{algorithm}[!t]
\caption{\small \algfed  Algorithm Framework}
    \begin{algorithmic}[1]
        \STATE {\bfseries Input:} The number of models $K$, initial parameters $\mathbf{\Theta}^{0} = [\mtheta_1^{0}, \cdots, \mtheta_{K}^{0}]$, global learning rate $\eta_g$, initial weights $\omega_{i;k}^{0} = \frac{1}{K}$ for any $i, k$, and $\gamma_{i, j; k} = \frac{1}{K}$ for any $i, j, k$, local learning rate $\eta_l$.
        \STATE {\bfseries Output:} Trained parameters $\mTheta^{T} = [\mtheta_1^{T}, \cdots, \mtheta_{K}^{T}]$ and weights $\omega_{i;k}^{T}$ for any $i, k$.
        \FOR{round $t = 1, \ldots, T$}
        \STATE \textbf{Communicate} $\mTheta^{t}$ to the chosen clients.
        \FOR{\textbf{client} $i \in \cS^t$ \textbf{in parallel}}
        \STATE Initialize local model $\mtheta_{i,k}^{t, 0} = \mtheta_{k}^{t}$, $\forall k$.
        \STATE Update $\gamma_{i, j; k}^{t}, \omega_{i;k}^{t}$ by~\eqref{update gamma}, $\forall j, k$.
        \STATE Obtain $\mtheta_{i,k}^{t, \cT}$ by locally updating $\mtheta_{i,k}^{t,0}$ for $\cT$ local steps using the equation~\eqref{gradient step}, $\forall k$.
        \STATE \textbf{Communicate} $\Delta \mtheta^t_{i,k} \gets \mtheta_{i,k}^{t, \cT} - \mtheta_{i,k}^{t, 0}$, $\forall k$.
        \ENDFOR
        \STATE $\mtheta_{k}^{t+1} \gets \mtheta_{k}^{t} + \frac{\eta_g}{\sum_{i \in \cS^t} N_i} \sum_{i \in \cS^t} N_i \Delta \mtheta^t_{i,k}$, $\forall k$.
        \ENDFOR
    \end{algorithmic}
    \label{Algorithm Framework}
\end{algorithm}

\section{Numerical Results}

In this section, we show the superior performance of \algfed compared with other FL baselines on FashionMNIST, CIFAR10, CIFAR100, and Tiny-ImageNet datasets on scenarios that all kinds of distribution shifts occur simultaneously\footnote{Ablation studies on single-type distribution shift scenarios are included in Appendix~\ref{sec:Ablation Studies on Single-type Distribution Shift Scenarios}, we show \algfed achieves comparable performance with other methods on these simplified scenarios.}.
We also show \algfed outperforms other clustered FL methods on datasets with real concept shifts in Appendix~\ref{sec:Additional Experiment Results}. For more experiments on incorporating privacy-preserving and communication efficiency techniques, automatically decide the number of clusters, and ablation studies on the number of clusters, please refer to Appendix~\ref{sec:intuition of adapt} and~\ref{sec:Additional Experiment Results}.

\subsection{Experiment Settings}
\label{sec: experiment settings}
We introduce the considered evaluation settings below (and see Figure~\ref{fig:Illustration of the scenario we constructed}); more details about implementation details, simulation environments, and datasets can be found in Appendix~\ref{sec:Framework and Baseline Algorithms}-~\ref{sec: Datasets Construction}.

\begin{figure*}[!t]
    \centering
    \includegraphics[width=1.\textwidth]{./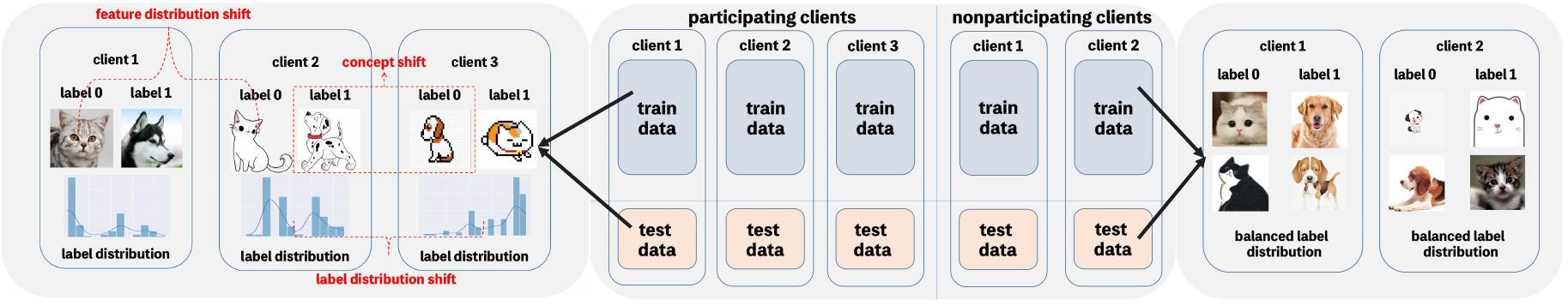}
    \caption{\small
        \textbf{Illustration of our numerical evaluation protocols.}
        Clients are divided into two categories: participating clients engage in training, while nonparticipating clients are used for testing. The training and test distributions of each client are identical. Participating clients simulate real-world scenarios and may experience label, feature, and concept shifts. For example, clients 1 and 2 have different label distribution and feature styles (photo or cartoon), while clients 2 and 3 have concept shifts (labels swapped). Nonparticipating clients are utilized to test the robustness of models.
        Labels on nonparticipating clients are swapped in the same manner as participating clients for each concept.       
    }
    \label{fig:Illustration of the scenario we constructed}
\end{figure*}

\begin{table*}[!t]
    \centering
    \caption{\small
        \textbf{Performance of algorithms over various datasets and neural architectures.}
        We evaluated the performance of our algorithms using the FashionMNIST, CIFAR10 and Tiny-ImageNet datasets split into 300 clients.
        We initialized 3 clusters for clustered FL methods and reported mean local and global test accuracy on the round that achieved the best train accuracy for each algorithm.
        We report \algfedt by fine-tuning \algfed for one local epoch; ``CFL (3)'' refers to restricting the number of clusters in CFL~\citep{sattler2020byzantine} to 3.
        We highlight the best and the second best results for each of the two main blocks, using \textbf{bold font} and \textcolor{blue}{blue text}.    
    }
    \resizebox{1.\textwidth}{!}{
        \begin{tabular}{c c c c c c c c}
            \toprule
            \multirow{2}{*}{Algorithm} & \multicolumn{2}{c}{FashionMNIST (CNN)}                         & \multicolumn{2}{c}{CIFAR10 (MobileNetV2)}                      & \multicolumn{2}{c}{Tiny-ImageNet (MobileNetV2)}                                                                                                                                                                                                                                    \\
            \cmidrule(lr){2-3} \cmidrule(lr){4-5} \cmidrule(lr){6-7}
                                       & Local                                                          & Global                                                         & Local                                                          & Global                                                         & Local                                                                   & Global                                                                 \\
            \midrule
            FedAvg                     & $42.12$ \small{\transparent{0.5} $\pm 0.33$}                   & $34.35$ \small{\transparent{0.5} $\pm 0.92$}                   & $30.28$ \small{\transparent{0.5} $\pm 0.38$}                   & $30.47$ \small{\transparent{0.5} $\pm 0.76$}                   & $18.61$ \small{\transparent{0.5} $\pm 0.15$}                            & $14.27$ \small{\transparent{0.5} $\pm 0.28$}                           \\
            \midrule
            IFCA                       & $47.90$ \small{\transparent{0.5} $\pm 0.60$}                   & $31.30$ \small{\transparent{0.5} $\pm 2.69$}                   & $43.76$ \small{\transparent{0.5} $\pm 0.40$}                   & $26.62$ \small{\transparent{0.5} $\pm 3.34$}                   & $22.81$ \small{\transparent{0.5} $\pm 0.75$}                            & $12.54$ \small{\transparent{0.5} $\pm 1.46$}                           \\
            CFL (3)                    & $41.77$ \small{\transparent{0.5} $\pm 0.40$}                   & $33.53$ \small{\transparent{0.5} $\pm 0.35$}                   & $41.49$ \small{\transparent{0.5} $\pm 0.64$}                   & $29.12$ \small{\transparent{0.5} $\pm 0.02$}                   & $23.87$             \small{\transparent{0.5} $\pm 1.54$}                & $11.42$           \small{\transparent{0.5} $\pm 2.15$}                 \\
            FeSEM                      & \textcolor{blue}{$60.99$} \small{\transparent{0.5} $\pm 1.01$} & \textcolor{blue}{$47.63$} \small{\transparent{0.5} $\pm 0.99$} & $45.32$ \small{\transparent{0.5} $\pm 0.16$}                   & $30.79$ \small{\transparent{0.5} $\pm 0.02$}                   & $23.09$ \small{\transparent{0.5} $\pm 0.71$}                            & $11.97$ \small{\transparent{0.5} $\pm 0.05$}                           \\
            FedEM                      & $56.64$ \small{\transparent{0.5} $\pm 2.14$}                   & $28.08$ \small{\transparent{0.5} $\pm 0.92$}                   & \textcolor{blue}{$51.31$} \small{\transparent{0.5} $\pm 0.97$} & \textcolor{blue}{$43.35$} \small{\transparent{0.5} $\pm 2.29$} & \textcolor{blue}{$28.57$} \small{\transparent{0.5} $\pm 1.49$}          & \textcolor{blue}{$17.33$} \small{\transparent{0.5} $\pm 2.12$}         \\
            \algfed                    & $\mathbf{66.51}$ \small{\transparent{0.5} $\pm 2.39$}          & $\mathbf{59.00}$ \small{\transparent{0.5} $\pm 4.91$}          & $\mathbf{62.74}$ \small{\transparent{0.5} $\pm 2.37$}          & $\mathbf{63.83}$ \small{\transparent{0.5} $\pm 2.26$}          & $\mathbf{34.47}$ \small{\transparent{0.5} $\pm 0.01$}                   & $\mathbf{27.79}$ \small{\transparent{0.5} $\pm 0.97$}                  \\
            \midrule
            CFL                        & $42.47$ \small{\transparent{0.5} $\pm 0.02$}                   & \textcolor{blue}{$32.37$} \small{\transparent{0.5} $\pm 0.09$} & $67.14$ \small{\transparent{0.5} $\pm 0.87$}                   & \textcolor{blue}{$26.19$} \small{\transparent{0.5} $\pm 0.26$} & $23.61$        \small{\transparent{0.5} $\pm 0.89$}                     & \textcolor{blue}{$12.47$}         \small{\transparent{0.5} $\pm 0.46$} \\
            FedSoft                    & $\mathbf{91.35}$ \small{\transparent{0.5} $\pm 0.04$}          & $19.88$ \small{\transparent{0.5} $\pm 0.50$}                   & $\mathbf{83.08}$ \small{\transparent{0.5} $\pm 0.02$}          & $22.00$ \small{\transparent{0.5} $\pm 0.50$}                   & \textcolor{blue}{$70.79$}          \small{\transparent{0.5} $\pm 0.09$} & $2.67$      \small{\transparent{0.5} $\pm 0.04$}                       \\
            \algfedt                   & \textcolor{blue}{$91.02$} \small{\transparent{0.5} $\pm 0.26$} & $\mathbf{62.37}$ \small{\transparent{0.5} $\pm 1.09$}          & $\mathbf{82.81}$ \small{\transparent{0.5} $\pm 0.90$}          & $\mathbf{65.33}$ \small{\transparent{0.5} $\pm 1.80$}          & $\mathbf{75.10}$  \small{\transparent{0.5} $\pm 0.24$}                  & $\mathbf{27.67}$ \small{\transparent{0.5} $\pm 3.13$}                  \\
            \bottomrule
        \end{tabular}%
    }
    \label{tab:Performance of algorithms on mobileNetV2}
\end{table*}

\paragraph{Evaluation and datasets.}
We construct two client types: participating clients and nonparticipating clients (detailed in Figure~\ref{fig:Illustration of the scenario we constructed}), and report the
1) \textit{local accuracy}: the mean test accuracy of participating clients;
2) \textit{global accuracy (primary metric)}: the mean test accuracy of nonparticipating clients. It assesses whether shared decision boundaries are identified for each concept.

\begin{itemize}[leftmargin=12pt,nosep]
    \item \textbf{Participating clients:}
          We construct a scenario that encompasses label shift, feature shift, and concept shift issues. For label shift, we adopt the Latent Dirichlet Allocation (LDA) introduced in~\citep{yurochkin2019bayesian,hsu2019measuring} with parameter $\alpha = 1.0$. For feature shift, we employ the idea of constructing FashionMNIST-C~\citep{weiss2022SimpleTechniques}, CIFAR10-C, CIFAR100-C, and ImageNet-C~\citep{hendrycks2019benchmarking}. For concept shift, similar to previous works~\citep{jothimurugesan2022federated,ke2022quantifying,canonaco2021adaptive}, we change the labels of partial clients (i.e., from $y$ to $(C - y)$, where $C$ is the number of classes). Unless stated otherwise, we assume only three concepts exist in the learning.
          
    \item \textbf{Nonparticipating clients:} To assess if the algorithms identify shared decision boundaries for each concept and improve generalization, we create three non-participating clients with balanced label distribution for each dataset (using the test sets provided by each dataset). The labels for these non-participating clients will be swapped in the same manner as the participating clients for each concept. 
\end{itemize}

\paragraph{Baseline algorithms.} We choose FedAvg~\citep{mcmahan2016communication} as an example in single-model FL.
For clustered FL methods, we choose
IFCA~\citep{ghosh2020efficient},
CFL~\citep{sattler2020byzantine}, FeSEM~\citep{long2023multi}, FedEM~\citep{marfoq2021federated}, and FedSoft~\citep{ruan2022fedsoft}.

\subsection{Results of \algfed}
In this section, we initialize 3 clusters for all clustered FL algorithms, which is the same as the number of concepts. 

\paragraph{Superior performance of \algfed over other strong FL baselines.}
The results in Table~\ref{tab:Performance of algorithms on mobileNetV2} and Table~\ref{tab:Performance of algorithms on Resnet18} show that: i) \algfed consistently attains significantly higher global and local accuracy compared to other clustered FL baselines.
ii) \algfed achieves a lower global-local performance gap among all the algorithms, indicating the robustness of \algfed on global distribution.
iii) The \algfedt method, obtained by fine-tuning \algfed for a single local epoch, can achieve local accuracy comparable to that of clustered FL methods with personalized local models (such as CFL and FedSoft) while maintaining high global performance.
Evaluations on more PFL baselines can be found in Table~\ref{tab:Performance of algorithms on mobileNetV2 appendix} of Appendix~\ref{sec:ablation study appendix}.

\begin{figure}[!t]
    \centering
    \subfigure[Number of Clusters]{
        \includegraphics[width=0.22\textwidth]{./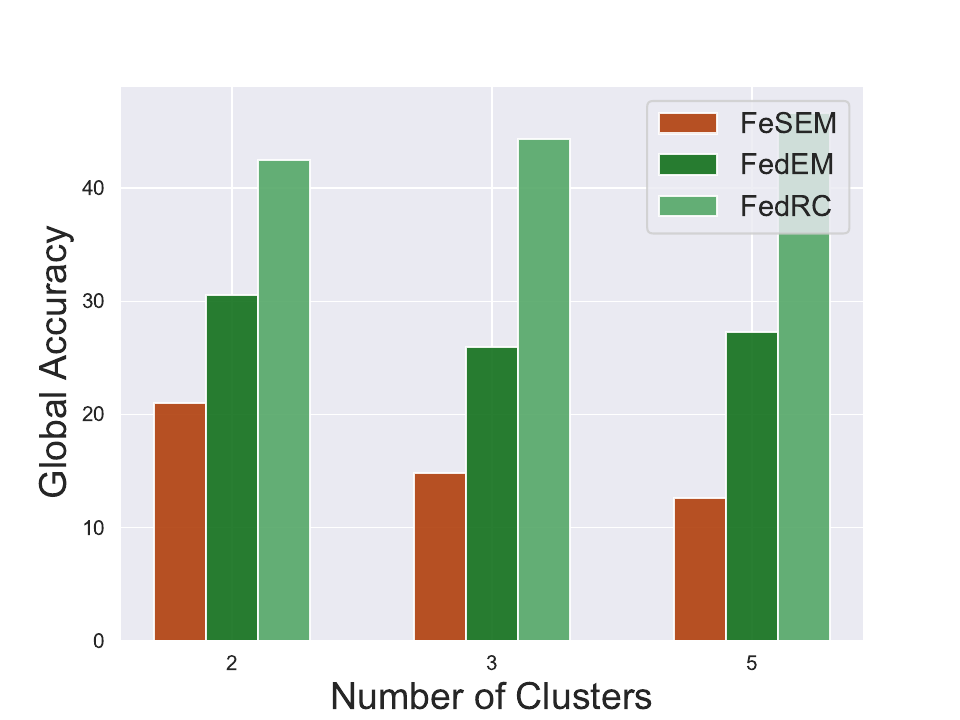}
        \label{fig:ablation study on K}
    }
    \subfigure[Imbalance Clusters]{
        \includegraphics[width=0.22\textwidth]{./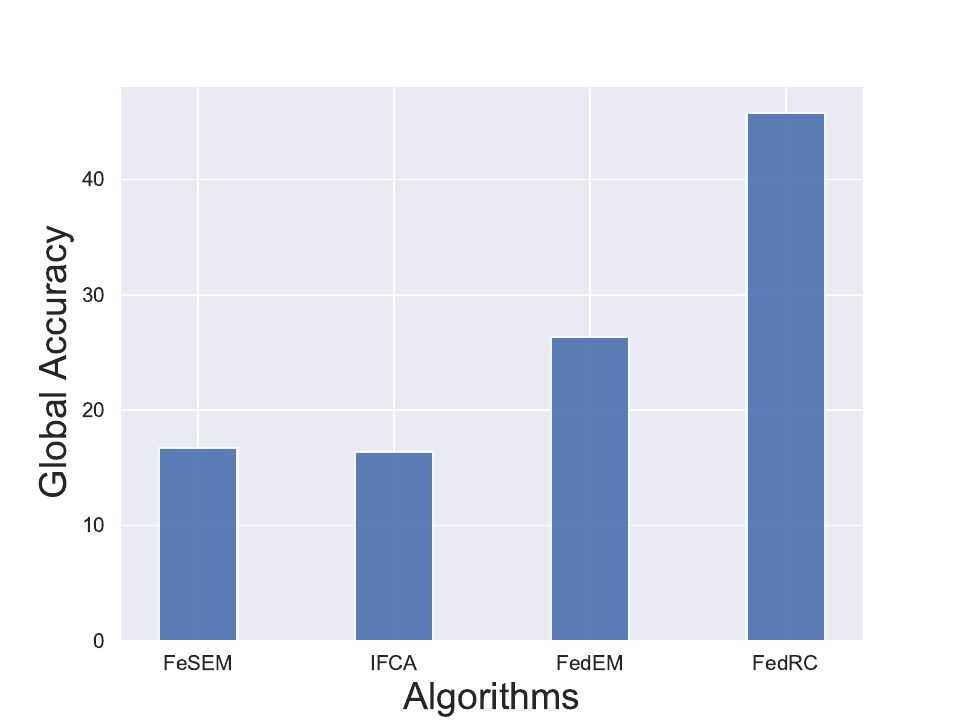}
        \label{fig:ablation study on imbalance}
    } \\
    \subfigure[\small{Hard-Cluster Predict}]{\includegraphics[width=0.22\textwidth]{./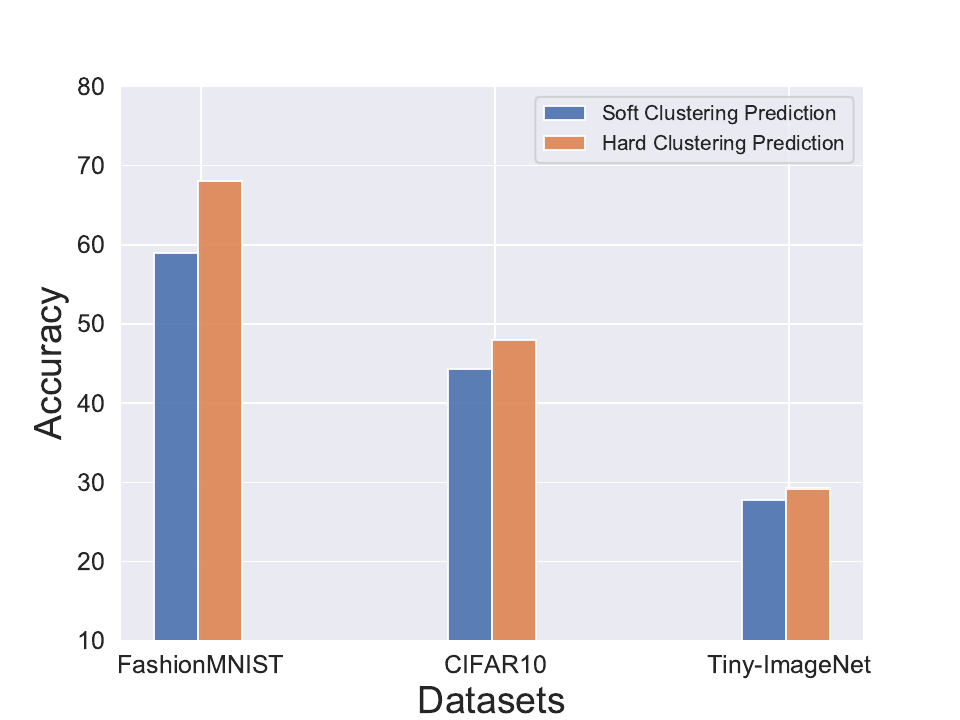}\label{fig:hard-performance}}
    \subfigure[Number of concepts]{
        \includegraphics[width=0.22\textwidth]{./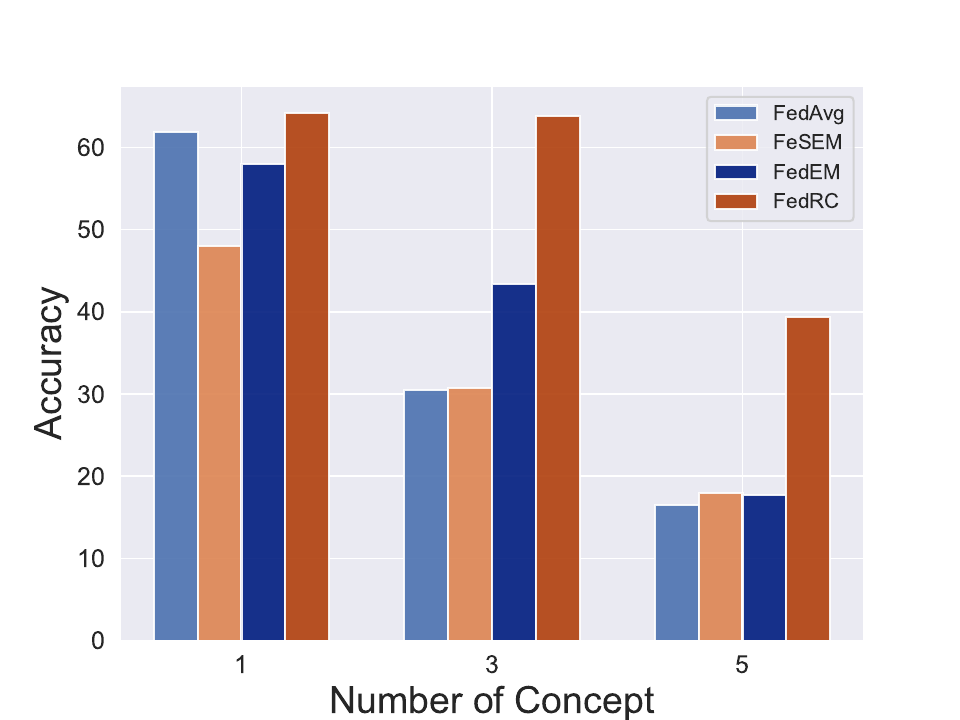}
        \label{fig:ablation study on number of concepts}
    }
    \caption{\small
        \textbf{Ablation study on the number of clusters, the number of concepts, the type of clustering, and the scenarios when the number of samples among clusters is imbalance.}
        We report the global test accuracy of CIFAR10 with on the round that achieves the best training accuracy. The Figures~\ref{fig:ablation study on K},~\ref{fig:ablation study on imbalance}, and~\ref{fig:hard-performance} use the settings with 100 clients, and obtain the same results as Figure~\ref{fig:pitfalls and gains}. The Figure~\ref{fig:ablation study on number of concepts} uses the same setting as Table~\ref{tab:Performance of algorithms on mobileNetV2}.
        Figure~\ref{fig:ablation study on K} shows the performance of clustered FL algorithms with different $K$ values.
        Figure~\ref{fig:ablation study on imbalance} shows the performance of clustered FL methods when the number of samples in each cluster was in the ratio of 8:1:1.
        Figure~\ref{fig:hard-performance} shows \algfed's performance using soft clustering (ensemble of $K$ clusters) and hard clustering, i.e., utilizing the cluster with the highest clustering weights $\omega_{i;k}$ for prediction.
        Figure~\ref{fig:ablation study on number of concepts} shows clustered FL's performance with varying numbers of concepts. We use $\{3, 3, 5\}$ clusters for the scenarios with $\{1, 3, 5\}$ concepts.    
    }
\end{figure}

\paragraph{\algfed consistently outperforms other methods with varying cluster numbers.} We conducted ablation studies on the number of clusters, as shown in Figure~\ref{fig:ablation study on K}. The results indicate that \algfed consistently achieves the highest global accuracy across all algorithms.

\begin{table}[!t]
    \centering
    \caption{\small
        \textbf{Performance of algorithms on ResNet18.}
        We evaluated the performance of our algorithms on the CIFAR10 and CIFAR100, which were split into 100 clients.
    }
    \resizebox{.5\textwidth}{!}{
        \begin{tabular}{c c c c c c}
            \toprule
            \multirow{2}{*}{Algorithm} & \multicolumn{2}{c}{CIFAR10} & \multicolumn{2}{c}{CIFAR100}                                       \\
            \cmidrule(lr){2-3} \cmidrule(lr){4-5}
                                       & Local                       & Global                       & Local            & Global           \\
            \midrule
            FedAvg                     & $25.58$                     & $26.10$                      & $13.48$          & $12.87$          \\
            \midrule
            IFCA                       & $31.90$                     & $10.47$                      & $18.34$          & $12.00$          \\
            CFL                        & $26.06$                     & $24.53$                      & $13.94$          & $12.60$          \\
            CFL (3)                    & $25.18$                     & $24.80$                      & $13.34$          & $12.13$          \\
            FeSEM                      & $34.56$                     & $21.93$                      & $17.28$          & $12.90$          \\
            FedEM                      & $41.28$                     & $34.17$                      & $25.82$          & $24.77$          \\
            \algfed                    & $\mathbf{49.16}$            & $\mathbf{47.80}$             & $\mathbf{31.76}$ & $\mathbf{28.10}$ \\
            \bottomrule
        \end{tabular}
    }
    \label{tab:Performance of algorithms on Resnet18}
\end{table}

\paragraph{The performance improvements of \algfed persist even when there is an imbalance in the number of samples across clusters.} We conducted experiments in which the number of samples in each cluster followed a ratio of 8:1:1, as shown in Figure~\ref{fig:ablation study on imbalance}. Results show that \algfed maintained a significantly higher global accuracy when compared to other methods.

\paragraph{The performance improvement of \algfed remains consistent across various concept numbers and in scenarios without concept shifts.} In Figure~\ref{fig:ablation study on number of concepts}, we observe that:
1) \algfed achieves higher global accuracy compared to other methods, and this advantage becomes more significant as the number of concepts increases;
2) When clients have only no concept shifts, employing multiple clusters (such as FedEM and FeSEM) does not perform as well as using a single model (FedAvg). Nonetheless, \algfed outperforms FedAvg and retains robustness.

\paragraph{Effectiveness of \algfed remains when using hard clustering.} The prediction of \algfed requires the ensemble of $K$ clusters (soft clustering). As most existing clustered FL methods are hard clustering methods~\citep{long2023multi,ghosh2020efficient,sattler2020byzantine}, we also evaluate the performance of \algfed when using hard clustering here, i.e., only using the cluster with the largest clustering weights $\omega_{i;k}$ for prediction. Figure~\ref{fig:hard-performance} shows that using hard clustering with not affect the effectiveness of \algfed.

\section{Conclusion, Limitations, and Future Works}

\label{sec:conclusion and future work}

This paper addresses the diverse distribution shift challenge in FL and proposes using clustered FL methods to tackle it. However, we found that none of the existing clustered FL methods effectively address the diverse distribution shift challenge, leading us to introduce \algfed as a solution. Furthermore, we have explored extensions in Appendix~\ref{sec:Additional Experiment Results} and Appendix~\ref{sec:intuition of adapt}, including improvements in communication and computation efficiency, automatic determination of the number of clusters, and mitigation of the personalization-generalization trade-offs. For future research, it would be intriguing to delve deeper into providing a more comprehensive theoretical understanding of the differences in clustering results between FedRC and other methods. 
Furthermore, in practical scenarios, non-participating clients may experience concept shifts alongside participating clients. To address this issue, it would be advantageous to explore new techniques such as out-of-distribution (OOD) detection, domain adaptation, or test-time adaptation.

\section*{Acknowledgements}
This work is supported in part by the funding from Shenzhen Institute of Artificial Intelligence and Robotics for Society, in part by the Shenzhen Key Lab of Crowd Intelligence Empowered Low-Carbon Energy Network (Grant No. ZDSYS20220606100601002) , in part by Shenzhen Stability Science Program 2023, and in part by the Guangdong Provincial Key Laboratory of Future Networks of Intelligence (Grant No. 2022B1212010001).
This work is also supported in part by the Research Center for Industries of the Future (RCIF) at Westlake University, and Westlake Education Foundation.

\section*{Impact Statement}
This paper presents work whose goal is to advance the field of Machine Learning. There are many potential societal consequences of our work, none of which we feel must be specifically highlighted here.

\bibliography{paper}
\bibliographystyle{configuration/icml2024}

\clearpage

\appendix

\onecolumn
{
    \hypersetup{linkcolor=black}
    \parskip=0em
    \renewcommand{\contentsname}{Contents of Appendix}
    \tableofcontents
    \addtocontents{toc}{\protect\setcounter{tocdepth}{3}}
}

\section{Proof of Optimization Steps}
\label{sec:Proof of Theorem EM steps}

\begin{theorem}
    When maximizing Equation~\ref{our objective practical}, the optimization steps are given by,\\
    \textbf{Optimizing $\mOmega$:}
    \begin{align}
        \gamma_{i,j;k}^{t} & = \frac{\omega_{i;k}^{t-1} \tilde{\cI} (\xx_{ij}, y_{ij}, \mtheta_k) }{\sum_{i=1}^{K} \omega_{i}^{t-1} \tilde{\cI} (\xx_{ij}, y_{ij}, \mtheta_k) } \, , \\
        \omega_{i;k}^{t}   & = \frac{1}{N_i} \sum_{j=1}^{N_i} \gamma_{i,j;k}^{t} \, .
    \end{align}

    \textbf{Optimizing $\mTheta$:}
    \begin{align}                                                                                                     \\
        \mtheta_{k}^{t} & = \mtheta_{k}^{t-1} - \frac{\eta}{N} \sum_{i=1}^{M} \sum_{j=1}^{N_i} \gamma_{i,j;k}^{t} \nabla_{\mtheta_k^{t-1}} f (\xx_{ij}, y_{ij}, \mtheta_k^{t-1})\, ,
    \end{align}
    \label{E-M steps of our objective appendix}
\end{theorem}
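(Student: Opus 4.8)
The plan is to derive the three update rules by treating the maximization of~\eqref{our objective practical} as a constrained optimization problem in $\mOmega$ (with the Lagrange multipliers $\lambda_i$ already built into the objective) and as an unconstrained gradient step in $\mTheta$. First I would fix $\mTheta$ and optimize over $\mOmega$. Writing $g_{i,j;k} := \tilde{\cI}(\xx_{i,j}, y_{i,j}, \mtheta_k)$ for brevity, I take the partial derivative of $\cL$ with respect to $\omega_{i;k}$: the log-likelihood term contributes $\frac{1}{N}\sum_{j=1}^{N_i} \frac{g_{i,j;k}}{\sum_{n=1}^{K}\omega_{i;n} g_{i,j;n}}$ and the Lagrange term contributes $\lambda_i$. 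Setting this to zero and multiplying through by $\omega_{i;k}$ gives $\frac{1}{N}\sum_{j} \frac{\omega_{i;k} g_{i,j;k}}{\sum_n \omega_{i;n} g_{i,j;n}} = -\lambda_i \omega_{i;k}$. Recognizing the summand as exactly $\gamma_{i,j;k}$ as defined in~\eqref{update gamma}, and summing the stationarity condition over $k$ together with the constraint $\sum_k \omega_{i;k}=1$, I can solve for $\lambda_i$ (it will come out proportional to $-N_i/N$), and substituting back yields $\omega_{i;k} = \frac{1}{N_i}\sum_{j=1}^{N_i}\gamma_{i,j;k}$. This is a standard EM-style fixed-point argument, so the two $\mOmega$-updates in~\eqref{update gamma} drop out directly.

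Next I would fix $\mOmega$ and take a gradient ascent step in $\mtheta_k$. Here the key observation is that $\tilde{\cI}(\xx,y,\mtheta_k)$ depends on $\mtheta_k$ through the factor $\exp(-f(\xx,y,\mtheta_k))$ (the denominator $\frac{1}{N}\sum_{i,j}\1_{\{y_{i,j}=y\}}\gamma_{i,j;k}$ and the numerator's $\gamma$-sum are treated as fixed weights at the current iterate, consistent with the EM approximation). Differentiating the log-term: $\nabla_{\mtheta_k}\cL = \frac{1}{N}\sum_{i,j} \frac{\omega_{i;k}\nabla_{\mtheta_k}\tilde{\cI}(\xx_{i,j},y_{i,j},\mtheta_k)}{\sum_n \omega_{i;n}\tilde{\cI}(\xx_{i,j},y_{i,j},\mtheta_k)}$. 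Since $\nabla_{\mtheta_k}\tilde{\cI} = -\tilde{\cI}\cdot\nabla_{\mtheta_k} f$, the ratio collapses to $-\gamma_{i,j;k}\nabla_{\mtheta_k} f(\xx_{i,j},y_{i,j},\mtheta_k)$, giving $\nabla_{\mtheta_k}\cL = -\frac{1}{N}\sum_{i,j}\gamma_{i,j;k}\nabla_{\mtheta_k} f(\xx_{i,j},y_{i,j},\mtheta_k)$. A gradient ascent step $\mtheta_k^{t} = \mtheta_k^{t-1} + \eta\nabla_{\mtheta_k}\cL$ then produces exactly~\eqref{global gradient step} (note the sign: maximizing $\cL$ means descending $f$).

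The main obstacle I anticipate is making precise the ``treat $\gamma$ and the normalization as fixed'' step when differentiating in $\mtheta_k$: strictly, $\tilde{\cI}$ depends on $\mtheta_k$ both through $\exp(-f)$ and through the $\gamma$-weighted sums in numerator and denominator, so the honest statement is that these updates are the stationarity/ascent conditions of the \emph{surrogate} obtained by freezing those auxiliary quantities at the previous iterate — i.e., this is the standard EM majorization rather than an exact gradient of $\cL$. I would state this explicitly, noting (as the paper already does in the remark following~\eqref{approximate I}) that $\gamma_{i,j;k}$ plays the role of the posterior responsibility, so that the derivation mirrors the classical EM derivation for mixture models; the only genuinely new bookkeeping is carrying the extra denominator $\cP(y;\mtheta_k)$ approximation through, which does not affect the $\mtheta_k$-gradient because it is held fixed. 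The remaining steps are routine calculus and algebra.
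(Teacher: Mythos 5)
Your proposal follows essentially the same route as the paper's own proof: stationarity of the Lagrangian in $\omega_{i;k}$ yielding the fixed-point identity $\omega_{i;k}=\frac{1}{N_i}\sum_j\gamma_{i,j;k}$, and a gradient-ascent step in $\mtheta_k$ obtained by differentiating $\tilde{\cI}$ only through the $\exp(-f)$ factor while freezing the $\gamma$-weighted normalizations. The only differences are cosmetic improvements on your side --- you actually solve for $\lambda_i$ from the constraint rather than asserting $\lambda_i=-\nicefrac{N_i}{N}$, and you explicitly flag that the $\mtheta_k$-update is the ascent direction of a surrogate with the auxiliary quantities held at the previous iterate --- neither of which changes the substance of the argument.
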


\begin{proof}
    Consider the objective function,
    \begin{align}
        \cL (\mathbf{\Theta}, \mOmega)
         & = \frac{1}{N} \sum_{i=1}^{M} \sum_{j=1}^{N} \ln \left( \sum_{k=1}^{K} \omega_{i;k} \tilde{\cI} (\xx_{ij}, y_{ij}; \mtheta_k)  \right) + \sum_{i=1}^{M} \lambda_i \left( \sum_{k=1}^{K} \omega_{i;k} - 1 \right) \, ,
        \label{our objective appendix}
    \end{align}
    Taking the derivative of $\cL (\mathbf{\Theta})$, we have,
    \begin{align}
        \derive{\cL (\mathbf{\Theta}, \mOmega) }{\omega_{i;k}}
        = \frac{1}{N} \sum_{j=1}^{N_i} \frac{\tilde{\cI} (\xx_{ij}, y_{ij}; \mtheta_k)}{\sum_{n=1}^{K} \omega_{i;n} \tilde{\cI}_n (\xx_{ij}, y_{ij}; \mtheta_n)} +  \lambda_i \, ,
    \end{align}
    define
    \begin{align}
        \gamma_{i,j;k} = \frac{\omega_{i;k} \tilde{\cI} (\xx_{ij}, y_{ij}; \mtheta_k)}{\sum_{n=1}^{K} \omega_{i;n} \tilde{\cI}_n (\xx_{ij}, y_{ij}; \mtheta_n)} \, ,
    \end{align}
    and set $\derive{\cL (\mathbf{\Theta}) }{\omega_{i;k}} = 0$ to obtain the optimal $\omega_{i;k}$, we have,
    \begin{align}
        \frac{1}{N} \sum_{j=1}^{N_i} \frac{\gamma_{i,j;k}}{\omega_{i;k}} & = - \lambda_i \, ,                               \\
        \omega_{i;k}                                                     & = \frac{1}{- N \lambda_i} \sum_{j=1}^{N_i} \gamma_{i,j;k} \, .
    \end{align}
    By setting $\sum_{k=1}^{K} \omega_{i;k} = 1$ and considering that $\sum_{k=1}^{K} \gamma_{i,j;k} = 1$, we directly derive the result $\lambda_i = \frac{-N_i}{N}$.
    Then we have
    \begin{align}
        \omega_{i;k} = \frac{1}{N_i} \sum_{j=1}^{N_i} \gamma_{i,j;k} \, .
    \end{align}
    Then consider to optimize $\mtheta_k$, we have,
    \begin{align}
        \derive{\cL (\mathbf{\Theta}, \mOmega) }{\mtheta_k} & = \frac{1}{N} \sum_{i=1}^{M} \sum_{j=1}^{N_i} \frac{\omega_{i;k} }{\sum_{n=1}^{K} \omega_{i;n} \tilde{\cI}_n (\xx_{ij}, y_{ij}; \mtheta_n)} \cdot \derive{\tilde{\cI} (\xx_{ij}, y_{ij}; \mtheta_k)}{\mtheta_k} \, , \\
                                                            & = \frac{1}{N} \sum_{i=1}^{M} \sum_{j=1}^{N_i} \frac{\omega_{i;k} }{\sum_{n=1}^{K} \omega_{i;n} \tilde{\cI}_n (\xx_{ij}, y_{ij}; \mtheta_n)} \cdot
        \frac{\exp(-f (\xx_{ij}, y_{ij}, \mtheta_k)) \sum_{i=1}^{M} \sum_{j=1}^{N} \gamma_{i,j;k}}{\sum_{i=1}^{M} \sum_{j=1}^{N} \1_{y_{ij}=y} \gamma_{i,j;k}} \nonumber                                                                                                           \\
                                                            & \cdot (- \nabla_{\mtheta_k} f (\xx_{ij}, y_{ij}, \mtheta_k)) \, ,                                                                                                                                                    \\
                                                            & = - \frac{1}{N} \sum_{i=1}^{M} \sum_{j=1}^{N_i} \gamma_{i,j;k} \nabla_{\mtheta_k} f (\xx_{ij}, y_{ij}, \mtheta_k) \, .
    \end{align}
    Because if hard to find a close-form solution to $\derive{\cL (\mathbf{\Theta}, \mOmega) }{\mtheta_k} = 0$ when $\mtheta_k$ is the parameter of deep neural networks, we use gradient ascent to optimize $\mtheta_k$. Then we finish the proof of optimization steps.
\end{proof}

\section{Proof of Theorem~\ref{Convergence rate of algopt}}
\label{sec:proof of convergence}

\begin{lemma}
    Define
    \begin{align}
        h(\mtheta_x) = \frac{\omega_{i;k} \tilde{\cI} (\xx_{ij}, y_{ij}, \mtheta_x) }{\sum_{n=1}^{K} \omega_{i;n} \tilde{\cI} (\xx_{ij}, y_{ij}, \mtheta_n) } \, ,
    \end{align}
    which corresponding to $\mathbf{\Theta} = [\mtheta_1, \cdots, \mtheta_x, \mtheta_{k+1}, \cdots, \mtheta_K]$, we have,
    \begin{align}
        |h(\mtheta_x) - h(\mtheta_y)| \le \frac{L}{8} \norm{\mtheta_1 - \mtheta_2}^2 + \frac{3}{8} \norm{\nabla f (\xx_{ij}, y_{ij}, \mtheta_y)}  \norm{\mtheta_1 - \mtheta_2} \, .
    \end{align}
    \label{difference on gamma lemma}
\end{lemma}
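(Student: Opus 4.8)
\textbf{Proof proposal for Lemma~\ref{difference on gamma lemma}.}

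The plan is to bound the difference $|h(\mtheta_x) - h(\mtheta_y)|$ by first reducing it to a difference in the numerator terms, and then expanding $\tilde{\cI}(\xx_{ij}, y_{ij}, \cdot)$ in terms of $\exp(-f(\xx_{ij}, y_{ij}, \cdot))$ and applying smoothness. First I would observe that $h(\mtheta_x)$ has the form $\frac{a_x}{a_x + b}$ where $a_x := \omega_{i;k}\tilde{\cI}(\xx_{ij}, y_{ij}, \mtheta_x)$ and $b := \sum_{n \ne k} \omega_{i;n}\tilde{\cI}(\xx_{ij}, y_{ij}, \mtheta_n)$ is held fixed (since only the $k$-th coordinate $\mtheta_k$ changes between the configurations indexed by $x$ and $y$). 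The map $a \mapsto \frac{a}{a+b}$ is $1$-Lipschitz in $a$ on $[0,\infty)$ when $b \ge 0$ (its derivative is $\frac{b}{(a+b)^2} \le \frac{1}{a+b} \le \frac{1}{b}$, and more crudely bounded by $1$ after a suitable normalization), so $|h(\mtheta_x) - h(\mtheta_y)| \le |\tilde{\cI}(\xx_{ij}, y_{ij}, \mtheta_x) - \tilde{\cI}(\xx_{ij}, y_{ij}, \mtheta_y)|$ up to the weight factors, which are at most $1$.

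Next I would control $|\tilde{\cI}(\xx_{ij}, y_{ij}, \mtheta_x) - \tilde{\cI}(\xx_{ij}, y_{ij}, \mtheta_y)|$. From~\eqref{approximate I}, $\tilde{\cI}$ is $\exp(-f(\xx_{ij}, y_{ij}, \mtheta_k))$ times a factor that does not depend on $\mtheta_k$ through the exponential (the $\gamma$-weighted ratios are treated as fixed coefficients in this step, or absorbed as constants bounded by the normalization), so the problem reduces to bounding $|\exp(-f(\xx_{ij}, y_{ij}, \mtheta_x)) - \exp(-f(\xx_{ij}, y_{ij}, \mtheta_y))|$. Writing $g(\mtheta) = \exp(-f(\xx_{ij}, y_{ij}, \mtheta))$, a Taylor expansion around $\mtheta_y$ gives $g(\mtheta_x) - g(\mtheta_y) = \langle \nabla g(\mtheta_y), \mtheta_x - \mtheta_y\rangle + \tfrac12 (\mtheta_x - \mtheta_y)^\top \nabla^2 g(\xi)(\mtheta_x - \mtheta_y)$ for some $\xi$ on the segment. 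Since $\nabla g = -g\,\nabla f$ and $g \in (0,1]$, the first-order term is bounded by $\norm{\nabla f(\xx_{ij}, y_{ij}, \mtheta_y)}\,\norm{\mtheta_x - \mtheta_y}$; the second-order term involves $\nabla^2 g = g(\nabla f \nabla f^\top - \nabla^2 f)$, and using $L$-smoothness of $f$ (Assumption~\ref{Smoothness assumption}) together with $g \le 1$ bounds it by a multiple of $L\norm{\mtheta_x - \mtheta_y}^2$; the explicit constants $\tfrac{1}{8}$ and $\tfrac{3}{8}$ come from tracking these bounds carefully together with the weight factors $\omega_{i;k} \le 1$ and the normalization appearing in $\tilde{\cI}$.

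The main obstacle I anticipate is handling the $\gamma$-dependent normalization factor in~\eqref{approximate I} rigorously: $\tilde{\cI}$ is not literally $\exp(-f(\xx_{ij}, y_{ij}, \mtheta_k))$ but carries the extra ratio $\big(\tfrac{1}{N}\sum_{i,j}\gamma_{i,j;k}\big) / \big(\tfrac{1}{N}\sum_{i,j}\1_{\{y_{ij}=y\}}\gamma_{i,j;k}\big)$, which in principle also depends on $\mtheta_k$ through the $\gamma$'s. The clean way to deal with this is to treat the $\gamma_{i,j;k}$ as the \emph{previous-iterate} weights (they are computed from $\mtheta_k^{t-1}$ in the alternating scheme of~\eqref{update gamma}), so they are constants for the purpose of this lemma, and to note the normalization ratio is a bounded positive constant that can be absorbed; alternatively one argues that $\sum_{i,j}\1_{\{y_{ij}=y\}}\gamma_{i,j;k} > 0$ is bounded below so the ratio is Lipschitz-controlled as well. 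Once that reduction is made explicit, the remaining computation is the routine second-order Taylor estimate sketched above, and the stated inequality follows by collecting terms.
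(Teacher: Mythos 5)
Your overall strategy of reducing the bound on $h$ to a bound on the numerator difference does not go through, and the failure point is the very first reduction. Writing $h = a_x/(a_x+b)$ with $b = \sum_{n\ne k}\omega_{i;n}\tilde{\cI}(\xx_{ij},y_{ij},\mtheta_n)$ held fixed, the derivative of $a \mapsto a/(a+b)$ is $b/(a+b)^2$, which is bounded by $1/b$ but not by $1$; when $b$ is small this blows up, so the claimed inequality $|h(\mtheta_x)-h(\mtheta_y)| \le |\tilde{\cI}(\xx_{ij},y_{ij},\mtheta_x)-\tilde{\cI}(\xx_{ij},y_{ij},\mtheta_y)|$ (even ``up to weight factors'') is false in general. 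Moreover, even if this step were patched, the normalization ratio inside $\tilde{\cI}$ from~\eqref{approximate I} behaves like $1/\cP(y;\mtheta_k)$, which is at least $1$ and can be large; it would multiply into your final bound, so you could not arrive at the weight-independent coefficients $\tfrac{L}{8}$ and $\tfrac{3}{8}$ by absorbing it ``into the constants.''

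The idea you are missing is the one the paper's proof exploits: after folding the normalization into modified weights $\tilde{\omega}_{i;k} = \omega_{i;k}/\cP(y=y_{ij};\mtheta_k)$, the function $h$ becomes a softmax-type ratio of $\exp(-f)$ terms, and its gradient in the varying coordinate has the closed form $\nabla h(\mtheta_x) = -\nabla f(\xx_{ij},y_{ij},\mtheta_x)\, h(\mtheta_x)\bigl(1-h(\mtheta_x)\bigr)$. Since $h(1-h) \le \tfrac14$ no matter what the weights are, one obtains $\norm{\nabla h(\mtheta_x) - \nabla h(\mtheta_y)} \le \tfrac{L}{4}\norm{\mtheta_x-\mtheta_y} + \tfrac14\norm{\nabla f(\xx_{ij},y_{ij},\mtheta_y)}$ using only the Lipschitz property of $\nabla f$, and the integral form of the Taylor remainder then delivers the $\tfrac{L}{8}$ coefficient on the quadratic term and $\tfrac18 + \tfrac14 = \tfrac38$ on the first-order term. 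This self-normalizing $h(1-h)$ factor is precisely what makes the constants universal, and your decomposition discards it. A secondary issue: your second-order Taylor expansion of $\exp(-f)$ invokes $\nabla^2 f$, which is not available under Assumption~\ref{Smoothness assumption} (only Lipschitz continuity of $\nabla f$ is assumed); the route through $\nabla h$ and its Lipschitz-type bound avoids second derivatives entirely.
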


\begin{proof}
    Define,
    \begin{align}
        h(\mtheta_x) & = \frac{\omega_{i;k} \tilde{\cI} (\xx_{ij}, y_{ij}, \mtheta_x) }{\sum_{n=1}^{K} \omega_{i;n} \tilde{\cI} (\xx_{ij}, y_{ij}, \mtheta_n) } \, ,        \\
                     & = \frac{\tilde{\omega}_{i;k} \exp(-f (\xx_{ij}, y_{ij}, \mtheta_x))}{\sum_{n=1}^{K} \tilde{\omega}_{i;n} \exp(-f (\xx_{ij}, y_{ij}, \mtheta_n))}\, ,
    \end{align}
    where $\tilde{\omega}_{i;k} = \frac{\omega_{i;k}}{\cP (y=y_{ij}; \mtheta_k)}$. Then we have,
    \begin{align}
        \derive{h(\mtheta_x)}{\mtheta_x} & = \frac{- \tilde{\omega}_{i;k} \exp(-f (\xx_{ij}, y_{ij}, \mtheta_x)) \nabla f (\xx_{ij}, y_{ij}, \mtheta_x) }{\sum_{n=1}^{K} \tilde{\omega}_{i;n} \exp(-f (\xx_{ij}, y_{ij}, \mtheta_n))} \nonumber                 \\
                                         & - \frac{- \tilde{\omega}_{i;k}^2 \exp^2(-f (\xx_{ij}, y_{ij}, \mtheta_x)) \nabla f (\xx_{ij}, y_{ij}, \mtheta_x) }{\left( \sum_{n=1}^{K} \tilde{\omega}_{i;n} \exp(-f (\xx_{ij}, y_{ij}, \mtheta_n)) \right)^2} \, , \\
                                         & = \nabla f (\xx_{ij}, y_{ij}, \mtheta_x) h(\mtheta_x) \left(-1 + h(\mtheta_x)  \right) \, .
    \end{align}
    Then we have,
    \begin{align}
        \norm{\nabla h(\mtheta_x) - \nabla h(\mtheta_y)} & = \norm{h(\mtheta_x) \left(1 - h(\mtheta_x)\right) \nabla f (\xx_{ij}, y_{ij}, \mtheta_x)  - h(\mtheta_y) \left(1 - h(\mtheta_y)\right) \nabla f (\xx_{ij}, y_{ij}, \mtheta_y)} \, , \\
                                                         & \le \frac{L}{4} \norm{\mtheta_x - \mtheta_y}
        + \frac{1}{4} \norm{\nabla f (\xx_{ij}, y_{ij}, \mtheta_y)} \, .
    \end{align}
    On the other hand, for $h(\mtheta_x)$, we can always find that,

    \begin{align}
        h(\mtheta_x) & = h(\mtheta_y) + \int_0^1 \left\langle \nabla h(\mtheta_y + \tau(\mtheta_x - \mtheta_y)), \mtheta_x - \mtheta_y \right\rangle d \tau \, ,                                                                                         \\
                     & = h(\mtheta_y)+ \left\langle \nabla h(\mtheta_y), \mtheta_x-\mtheta_y \right\rangle + \int_0^1 \left\langle \nabla h(\mtheta_y + \tau(\mtheta_x - \mtheta_y)) - \nabla h(\mtheta_y), \mtheta_x - \mtheta_y \right\rangle d \tau .
    \end{align}
    Then we have,
    \begin{align}
         & \left | h(\mtheta_x) - h(\mtheta_y) - \left\langle \nabla h(\mtheta_y), \mtheta_x-\mtheta_y \right\rangle \right | \nonumber                                                     \\
         & = \left | \int_0^1 \left\langle \nabla h(\mtheta_y + \tau(\mtheta_x - \mtheta_y)) - \nabla h(\mtheta_y), \mtheta_x - \mtheta_y \right\rangle d \tau \right | \, ,                \\
         & \le \int_0^1 \left | \left\langle \nabla h(\mtheta_y + \tau(\mtheta_x - \mtheta_y)) - \nabla h(\mtheta_y), \mtheta_x - \mtheta_y \right\rangle \right | d \tau \, ,              \\
         & \le \int_0^1 \norm{\nabla h(\mtheta_y + \tau(\mtheta_x - \mtheta_y)) - \nabla h(\mtheta_y)} \norm{\mtheta_x - \mtheta_y} d \tau \, ,                                             \\
         & \le \int_0^1 \tau \left( \frac{L}{4} \norm{\mtheta_x - \mtheta_y}^2 + \frac{1}{4} \norm{\nabla f (\xx_{ij}, y_{ij}, \mtheta_y)} \norm{\mtheta_1 - \mtheta_2} \right) d \tau \, , \\
         & = \frac{L}{8} \norm{\mtheta_x - \mtheta_y}^2 + \frac{1}{8} \norm{\nabla f (\xx_{ij}, y_{ij}, \mtheta_y)}  \norm{\mtheta_x - \mtheta_y} \,.
    \end{align}
    Therefore, we have,
    \begin{align}
        |h(\mtheta_x) - h(\mtheta_y)| \le \frac{L}{8} \norm{\mtheta_1 - \mtheta_2}^2 + \frac{3}{8} \norm{\nabla f (\xx_{ij}, y_{ij}, \mtheta_y)}  \norm{\mtheta_x - \mtheta_y} \, .
    \end{align}
\end{proof}

\begin{lemma} Assume $f(x, y, \mtheta)$ is L-smooth (Assumption~\ref{Smoothness assumption}), define,
    \begin{align}
        g(\mtheta_k) = \frac{1}{N} \sum_{i=1}^{M} \sum_{j=1}^{N_i} \ln \left( \sum_{n=1}^{K} \omega_{i;n} \tilde{\cI}_{n} (\xx_{ij}, y_{ij}; \mtheta_n)  \right) \, ,
    \end{align}
    where $1 \le k \le K$. Then we have,
    \begin{align}
         & \norm{\nabla g(\mtheta_1) - \nabla g(\mtheta_2)} \nonumber \\
         & \le L \norm{\mtheta_1 - \mtheta_2}
        + \frac{1}{N} \sum_{i=1}^{M} \sum_{j=1}^{N_i} \left( \frac{L}{8} \norm{\mtheta_1 - \mtheta_2}^2 \norm{\nabla f (\xx_{ij}, y_{ij}, \mtheta_2)}
        + \frac{3}{8} \norm{\mtheta_1 - \mtheta_2} \norm{\nabla f (\xx_{ij}, y_{ij}, \mtheta_2)}^2 \right) \, ,
    \end{align}
    where $\gamma_{i,j;k}^{1}$ and $\gamma_{i,j;k}^{2}$ are defined in Theorem~\ref{E-M steps of our objective appendix} corresponding to $\mathbf{\Theta}_1$ and $\mathbf{\Theta}_2$ respectively. We can further prove that,
    \begin{align}
        g(\mtheta_1) & \le g(\mtheta_2) + \left\langle \nabla g(\mtheta_2), \mtheta_1-\mtheta_2 \right\rangle \nonumber                                                             \\
                     & + \left(\frac{L}{2} + \frac{3}{16 N} \sum_{i=1}^{M} \sum_{j=1}^{N_i}  \norm{\nabla f (\xx_{ij}, y_{ij}, \mtheta_2)}^2 \right) \norm{\mtheta_1 - \mtheta_2}^2
        \nonumber                                                                                                                                                                   \\
                     & + \left( \frac{L}{16 N} \sum_{i=1}^{M} \sum_{j=1}^{N_i} \norm{\nabla f (\xx_{ij}, y_{ij}, \mtheta_2)} \right) \norm{\mtheta_1 - \mtheta_2}^3 \, ,            \\
        g(\mtheta_1) & \ge g(\mtheta_2) + \left\langle \nabla g(\mtheta_2), \mtheta_1-\mtheta_2 \right\rangle \nonumber                                                             \\
                     & - \left(\frac{L}{2} + \frac{3}{16 N} \sum_{i=1}^{M} \sum_{j=1}^{N_i} \norm{\nabla f (\xx_{ij}, y_{ij}, \mtheta_2)}^2 \right) \norm{\mtheta_1 - \mtheta_2}^2
        \nonumber                                                                                                                                                                   \\
                     & - \left( \frac{L}{16 N} \sum_{i=1}^{M} \sum_{j=1}^{N_i} \norm{\nabla f (\xx_{ij}, y_{ij}, \mtheta_2)} \right) \norm{\mtheta_1 - \mtheta_2}^3 \, .
    \end{align}
    \label{smoothness of algopt}
\end{lemma}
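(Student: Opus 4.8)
The plan is to reduce the whole lemma to Lemma~\ref{difference on gamma lemma} together with the $L$-smoothness of $f$ (Assumption~\ref{Smoothness assumption}). The one preliminary fact I need is a closed form for $\nabla g$ as a function of $\mtheta_k$. Repeating the computation in the proof of Theorem~\ref{E-M steps of our objective appendix}---that is, with the normalizer $\cP(y;\mtheta_k)$ appearing inside $\tilde{\cI}$ held fixed with respect to $\mtheta_k$, exactly as the approximation $\tilde{\cI}$ is defined---I obtain
\begin{align*}
    \nabla g(\mtheta_k) = -\frac{1}{N} \sum_{i=1}^{M} \sum_{j=1}^{N_i} h_{ij}(\mtheta_k)\, \nabla f(\xx_{ij}, y_{ij}, \mtheta_k) \, ,
\end{align*}
where $h_{ij}(\mtheta_k) = \gamma_{i,j;k}$ is, for each fixed $(i,j)$, precisely the function $h(\cdot)$ of Lemma~\ref{difference on gamma lemma}; note in particular $0 \le h_{ij}(\cdot) \le 1$ thanks to its softmax form.

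For the gradient-difference bound I would, for each $(i,j)$, add and subtract a cross term,
\begin{align*}
    & h_{ij}(\mtheta_1) \nabla f(\xx_{ij}, y_{ij}, \mtheta_1) - h_{ij}(\mtheta_2) \nabla f(\xx_{ij}, y_{ij}, \mtheta_2) \\
    & \qquad = h_{ij}(\mtheta_1)\big( \nabla f(\xx_{ij}, y_{ij}, \mtheta_1) - \nabla f(\xx_{ij}, y_{ij}, \mtheta_2) \big) + \big( h_{ij}(\mtheta_1) - h_{ij}(\mtheta_2) \big) \nabla f(\xx_{ij}, y_{ij}, \mtheta_2) \, ,
\end{align*}
take norms, and bound the first summand by $L\norm{\mtheta_1 - \mtheta_2}$ using $h_{ij}(\mtheta_1) \le 1$ and Assumption~\ref{Smoothness assumption}, and the second by $\big(\tfrac{L}{8}\norm{\mtheta_1 - \mtheta_2}^2 + \tfrac{3}{8}\norm{\nabla f(\xx_{ij}, y_{ij}, \mtheta_2)}\,\norm{\mtheta_1 - \mtheta_2}\big)\norm{\nabla f(\xx_{ij}, y_{ij}, \mtheta_2)}$ via Lemma~\ref{difference on gamma lemma}. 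Averaging over $(i,j)$ (the $\frac{1}{N}\sum_{i,j}$ of a constant is that constant) and using the triangle inequality yields exactly the claimed bound on $\norm{\nabla g(\mtheta_1) - \nabla g(\mtheta_2)}$.

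For the two quadratic estimates I would use the exact first-order integral identity, just as in the proof of Lemma~\ref{difference on gamma lemma},
\begin{align*}
    g(\mtheta_1) = g(\mtheta_2) + \langle \nabla g(\mtheta_2),\, \mtheta_1 - \mtheta_2 \rangle + \int_0^1 \big\langle \nabla g(\mtheta_2 + \tau(\mtheta_1 - \mtheta_2)) - \nabla g(\mtheta_2),\, \mtheta_1 - \mtheta_2 \big\rangle\, d\tau \, ,
\end{align*}
then bound the remainder in absolute value by $\int_0^1 \norm{\nabla g(\mtheta_2 + \tau(\mtheta_1 - \mtheta_2)) - \nabla g(\mtheta_2)}\, \norm{\mtheta_1 - \mtheta_2}\, d\tau$, substitute the gradient-difference bound from the previous step evaluated at displacement $\tau(\mtheta_1 - \mtheta_2)$ (which produces factors $\tau$ and $\tau^2$ inside the integrand), and integrate, using $\int_0^1 \tau\, d\tau = \tfrac{1}{2}$ and the crude bound $\int_0^1 \tau^2\, d\tau \le \tfrac{1}{2}$. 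Adding, respectively subtracting, the resulting remainder bound to the linear term $\langle \nabla g(\mtheta_2), \mtheta_1 - \mtheta_2\rangle$ gives the stated upper and lower inequalities.

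The genuinely nontrivial input is Lemma~\ref{difference on gamma lemma}, which is already in hand, so the main obstacle is essentially bookkeeping: making sure the gradient formula for $g$ is derived under the same stop-gradient convention on $\cP(y;\mtheta_k)$ that defines $\tilde{\cI}$, so that the $h$ appearing here is literally the $h$ of that lemma; and carrying the constants correctly through the $\tau$-integral (the slack $\int_0^1 \tau^2\, d\tau \le \tfrac{1}{2}$ rather than $\tfrac{1}{3}$ is what yields the stated $\tfrac{L}{16}$ coefficient in front of $\norm{\mtheta_1 - \mtheta_2}^3$).
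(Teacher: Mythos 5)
Your proposal is correct and follows essentially the same route as the paper's own proof: the same gradient formula $\nabla g(\mtheta_k) = -\frac{1}{N}\sum_{i,j}\gamma_{i,j;k}\nabla f(\xx_{ij},y_{ij},\mtheta_k)$, the same add-and-subtract cross-term decomposition bounded via $\gamma\le 1$, Assumption~\ref{Smoothness assumption}, and Lemma~\ref{difference on gamma lemma}, and the same first-order integral identity with the slack $\tau^2\le\tau$ producing the $\nicefrac{L}{16}$ coefficient on the cubic term. Your remark about deriving $\nabla g$ under the same stop-gradient convention on $\cP(y;\mtheta_k)$ is a useful clarification that the paper leaves implicit.
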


\begin{proof}
    Based on the results in Theorem~\ref{E-M steps of our objective appendix}, Section~\ref{sec:Proof of Theorem EM steps}, we have,
    \begin{align}
        \derive{ g(\mtheta_k) }{\mtheta_k} = - \frac{1}{N} \sum_{i=1}^{M} \sum_{j=1}^{N_i} \gamma_{i,j;k} \nabla_{\mtheta_k} f (\xx_{ij}, y_{ij}, \mtheta_k) \, ,
    \end{align}
    then we have,
    \begin{align}
         & \norm{\nabla g(\mtheta_1) - \nabla g(\mtheta_2)} \nonumber                                                                                                                                                                   \\
         & = \norm{ \frac{1}{N} \sum_{i=1}^{M} \sum_{j=1}^{N_i} \gamma_{i,j;k}^{1} \nabla f (\xx_{ij}, y_{ij}, \mtheta_2) - \frac{1}{N} \sum_{i=1}^{M} \sum_{j=1}^{N_i} \gamma_{i,j;k}^{2} \nabla f (\xx_{ij}, y_{ij}, \mtheta_1)} \, , \\
         & \le \frac{1}{N} \sum_{i=1}^{M} \sum_{j=1}^{N_i} \norm{ \gamma_{i,j;k}^{1} \nabla f (\xx_{ij}, y_{ij}, \mtheta_2) - \gamma_{i,j;k}^{2} \nabla f (\xx_{ij}, y_{ij}, \mtheta_1) } \, ,                                          \\
         & \le \frac{1}{N} \sum_{i=1}^{M} \sum_{j=1}^{N_i} \gamma_{i,j;k}^{1} \norm{\nabla f (\xx_{ij}, y_{ij}, \mtheta_2) - \nabla f (\xx_{ij}, y_{ij}, \mtheta_1)}  \nonumber                                                         \\
         & + \frac{1}{N} \sum_{i=1}^{M} \sum_{j=1}^{N_i} | \gamma_{i,j;k}^{1} - \gamma_{i,j;k}^{2} |  \norm{\nabla f (\xx_{ij}, y_{ij}, \mtheta_2)} \, ,                                                                                \\
         & \le L \norm{\mtheta_1 - \mtheta_2} \nonumber                                                                                                                                                                                 \\
         & + \frac{1}{N} \sum_{i=1}^{M} \sum_{j=1}^{N_i} \left( \frac{L}{8} \norm{\mtheta_1 - \mtheta_2}^2 \norm{\nabla f (\xx_{ij}, y_{ij}, \mtheta_2)}
        + \frac{3}{8} \norm{\mtheta_1 - \mtheta_2} \norm{\nabla f (\xx_{ij}, y_{ij}, \mtheta_2)}^2 \right) \, .
    \end{align}
    On the other hand, for $f(\mtheta_k)$, we can always find that,

    \begin{align}
        g(\mtheta_1) & = g(\mtheta_2) + \int_0^1 \left\langle \nabla g(\mtheta_2 + \tau(\mtheta_1 - \mtheta_2)), \mtheta_1 - \mtheta_2 \right\rangle d \tau \, ,                                                                                         \\
                     & = g(\mtheta_2)+ \left\langle \nabla g(\mtheta_2), \mtheta_1-\mtheta_2 \right\rangle + \int_0^1 \left\langle \nabla g(\mtheta_2 + \tau(\mtheta_1 - \mtheta_2)) - \nabla g(\mtheta_2), \mtheta_1 - \mtheta_2 \right\rangle d \tau .
    \end{align}
    Then we have,
    \begin{small}
        \begin{align}
             & \left | g(\mtheta_1) - g(\mtheta_2) - \left\langle \nabla g(\mtheta_2), \mtheta_1-\mtheta_2 \right\rangle \right | \nonumber                                                                     \\
             & = \left | \int_0^1 \left\langle \nabla g(\mtheta_2 + \tau(\mtheta_1 - \mtheta_2)) - \nabla g(\mtheta_2), \mtheta_1 - \mtheta_2 \right\rangle d \tau \right | \, ,                                \\
             & \le \int_0^1 \left | \left\langle \nabla g(\mtheta_2 + \tau(\mtheta_1 - \mtheta_2)) - \nabla g(\mtheta_2), \mtheta_1 - \mtheta_2 \right\rangle \right | d \tau \, ,                              \\
             & \le \int_0^1 \norm{\nabla g(\mtheta_2 + \tau(\mtheta_1 - \mtheta_2)) - \nabla g(\mtheta_2)} \norm{\mtheta_1 - \mtheta_2} d \tau \, ,                                                             \\
             & \le \int_0^1 \tau \left( \left(L + \frac{3}{8 N} \sum_{i=1}^{M} \sum_{j=1}^{N_i} \norm{\nabla f (\xx_{ij}, y_{ij}, \mtheta_2)}^2 \right) \norm{\mtheta_1 - \mtheta_2}^2 \right) d \tau \nonumber \\
             & + \int_0^1 \tau \left( \left( \frac{L}{8 N} \sum_{i=1}^{M} \sum_{j=1}^{N_i} \norm{\nabla f (\xx_{ij}, y_{ij}, \mtheta_2)} \right) \norm{\mtheta_1 - \mtheta_2}^3 \right) d \tau \, ,             \\
             & = \left(\frac{L}{2} + \frac{3}{16 N} \sum_{i=1}^{M} \sum_{j=1}^{N_i} \norm{\nabla f (\xx_{ij}, y_{ij}, \mtheta_2)}^2 \right) \norm{\mtheta_1 - \mtheta_2}^2 \nonumber                            \\
             & + \left( \frac{L}{16 N} \sum_{i=1}^{M} \sum_{j=1}^{N_i} \norm{\nabla f (\xx_{ij}, y_{ij}, \mtheta_2)} \right) \norm{\mtheta_1 - \mtheta_2}^3 \,.
        \end{align}
    \end{small}
    Then we have,
    \begin{align}
        g(\mtheta_1) & \le g(\mtheta_2) + \left\langle \nabla g(\mtheta_2), \mtheta_1-\mtheta_2 \right\rangle \nonumber                                                             \\
                     & + \left(\frac{L}{2} + \frac{3}{16 N} \sum_{i=1}^{M} \sum_{j=1}^{N_i}  \norm{\nabla f (\xx_{ij}, y_{ij}, \mtheta_2)}^2 \right) \norm{\mtheta_1 - \mtheta_2}^2
        \nonumber                                                                                                                                                                   \\
                     & + \left( \frac{L}{16 N} \sum_{i=1}^{M} \sum_{j=1}^{N_i} \norm{\nabla f (\xx_{ij}, y_{ij}, \mtheta_2)} \right) \norm{\mtheta_1 - \mtheta_2}^3 \, ,            \\
        g(\mtheta_1) & \ge g(\mtheta_2) + \left\langle \nabla g(\mtheta_2), \mtheta_1-\mtheta_2 \right\rangle \nonumber                                                             \\
                     & - \left(\frac{L}{2} + \frac{3}{16 N} \sum_{i=1}^{M} \sum_{j=1}^{N_i} \norm{\nabla f (\xx_{ij}, y_{ij}, \mtheta_2)}^2 \right) \norm{\mtheta_1 - \mtheta_2}^2
        \nonumber                                                                                                                                                                   \\
                     & - \left( \frac{L}{16 N} \sum_{i=1}^{M} \sum_{j=1}^{N_i} \norm{\nabla f (\xx_{ij}, y_{ij}, \mtheta_2)} \right) \norm{\mtheta_1 - \mtheta_2}^3 \, .
    \end{align}
\end{proof}

\begin{theorem}[Convergence rate of \algopt] Assume $f$ is L-smooth (Assumption~\ref{Smoothness assumption}), setting $T$ as the number of iterations, and $\eta = \frac{8}{40L + 9\sigma^2}$, we have,
    \begin{align}
        \frac{1}{T} \sum_{t=0}^{T-1} \sum_{k=1}^{K}  \norm{\nabla_{\mtheta_k} \cL\left( \mathbf{\Theta}^{t} \right)}^2 \le O \left( \frac{(40 L + 9 \sigma^2) \left( \cL(\mathbf{\Theta}^{*}) - \cL(\mathbf{\Theta}^{0}) \right)}{4 T} \right) \, ,
    \end{align}
    which denotes the algorithm achieve sub-linear convergence rate.
    \label{convergence theorem appendix}
\end{theorem}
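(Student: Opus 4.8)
The plan is to follow the standard descent-lemma route for non-convex first-order methods, but using the perturbed descent inequality from Lemma~\ref{smoothness of algopt} in place of the usual $L$-smooth bound, since $\cL(\mTheta,\mOmega)$ is not globally smooth. First I would fix a round $t$ and argue that the $\gamma$- and $\omega$-updates in~\eqref{update gamma} are exactly the maximizers of $\cL$ in $\mOmega$ given $\mTheta^{t}$ (this is Theorem~\ref{E-M steps of our objective appendix}), so the $\mOmega$-update can only increase $\cL$; hence it suffices to control the change in $\cL$ coming from the gradient-ascent step on $\mTheta$. Since the blocks $\mtheta_k$ are updated independently and the coupling through the $\gamma$'s is already baked into $g(\mtheta_k)$, I would work one coordinate block at a time, writing $\mtheta_k^{t}=\mtheta_k^{t-1}+\eta\, v_k^{t}$ with $v_k^{t}=-\tfrac1N\sum_{i,j}\gamma^{t}_{i,j;k}\nabla f(\xx_{ij},y_{ij},\mtheta_k^{t-1})=\nabla_{\mtheta_k}\cL$.

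The core step is to plug $\mtheta_1=\mtheta_k^{t}$, $\mtheta_2=\mtheta_k^{t-1}$ into the lower bound of Lemma~\ref{smoothness of algopt}:
\begin{align*}
    \cL(\mTheta^{t},\mOmega^{t}) \;\ge\; \cL(\mTheta^{t-1},\mOmega^{t})
    + \eta\sum_{k}\norm{\nabla_{\mtheta_k}\cL}^2
    - \Bigl(\tfrac{L}{2}+\tfrac{3}{16}\sigma^2\Bigr)\eta^2\sum_k\norm{\nabla_{\mtheta_k}\cL}^2
    - \tfrac{L}{16}\,\sigma\,\eta^3\sum_k\norm{\nabla_{\mtheta_k}\cL}^3,
\end{align*}
where I have used Assumption~\ref{Bounded Gradient Assumption} to replace $\tfrac1N\sum_{i,j}\norm{\nabla f(\cdot,\mtheta_k^{t-1})}^2$ by $\sigma^2$ and $\tfrac1N\sum_{i,j}\norm{\nabla f}$ by $\sigma$ (Jensen). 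The one genuinely non-routine manipulation is the cubic term: I would bound $\norm{\nabla_{\mtheta_k}\cL}^3$ using $\norm{\nabla_{\mtheta_k}\cL}\le\sigma$ (which follows from the same bounded-gradient assumption, since $\nabla_{\mtheta_k}\cL$ is a convex combination of the $\nabla f$'s with weights $\gamma^{t}_{i,j;k}$ summing to at most $1$ — here I may need to be slightly careful because the $\gamma$'s sum to $\omega_{i;k}\le 1$, not to $1$, so the bound is actually $\le\sigma$), converting the cubic term into $\tfrac{L}{16}\sigma^2\eta^3\sum_k\norm{\nabla_{\mtheta_k}\cL}^2$. Collecting coefficients gives
\[
    \cL(\mTheta^{t},\mOmega^{t})-\cL(\mTheta^{t-1},\mOmega^{t})
    \;\ge\; \Bigl(\eta - \bigl(\tfrac{L}{2}+\tfrac{3}{16}\sigma^2+\tfrac{L}{16}\sigma^2\eta\bigr)\eta^2\Bigr)\sum_k\norm{\nabla_{\mtheta_k}\cL}^2 .
\]

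Then I would choose $\eta=\tfrac{8}{40L+9\sigma^2}$ (possibly after also using $\eta\le 1$ or $\eta\le$ some constant to dominate the $\eta^3$ piece) so that the bracket is at least a constant multiple of $\tfrac{1}{40L+9\sigma^2}$, i.e. the per-step increase is $\ge c(40L+9\sigma^2)^{-1}\sum_k\norm{\nabla_{\mtheta_k}\cL(\mTheta^{t-1},\mOmega^{t})}^2$ for an absolute constant $c$; combined with the fact that the $\mOmega$-step is non-decreasing, this says $\cL$ increases monotonically along the trajectory. Summing the telescoping inequality over $t=1,\dots,T$, using $\cL(\mTheta^{0},\mOmega^{0})\le\cL(\mTheta^{t},\mOmega^{t})\le\cL^\star$ at both ends, and dividing by $T$ yields
\[
    \frac{1}{T}\sum_{t=0}^{T-1}\sum_{k=1}^{K}\norm{\nabla_{\mtheta_k}\cL(\mTheta^{t},\mOmega^{t})}^2
    \;\le\; \cO\!\left(\frac{(40L+9\sigma^2)(\cL^\star-\cL^0)}{T}\right),
\]
which is the claimed $\cO(1/T)$ rate (equivalently $\cO(1/\epsilon)$ iterations to reach squared-gradient-norm $\epsilon$).

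The main obstacle I anticipate is pinning down the cubic remainder term from Lemma~\ref{smoothness of algopt} cleanly: one has to justify that $\norm{\nabla_{\mtheta_k}\cL}$ itself is bounded (not just the $\nabla f$'s) and choose the step size so that the $\eta^3$ contribution is absorbed without spoiling the constant $\tfrac{8}{40L+9\sigma^2}$. A secondary subtlety is bookkeeping the index shift between $(\mTheta^{t},\mOmega^{t})$ and $(\mTheta^{t-1},\mOmega^{t})$ so that the telescoping is valid — this relies precisely on the $\mOmega$-update being the exact argmax given $\mTheta$, which is why I would establish that monotonicity fact up front.
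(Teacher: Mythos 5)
Your overall route is the same as the paper's: apply the perturbed block-wise descent inequality of Lemma~\ref{smoothness of algopt} to each $\mtheta_k$, absorb the quadratic and cubic remainders with the bounded-gradient assumption (Assumption~\ref{Bounded Gradient Assumption}), observe that the $\mOmega$-step is an exact maximization so only the $\mTheta$-step needs control, and telescope. Your handling of the cubic term via $\norm{\nabla_{\mtheta_k}\cL}\le\sigma$ is legitimate (the paper instead writes $\norm{\mtheta_k^{t+1}-\mtheta_k^{t}}^3=\norm{\cdot}\cdot\norm{\cdot}^2$ and bounds the extra factor by $\eta\sigma$; both work).

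There is, however, one concrete gap in your per-round inequality. You telescope over blocks, i.e.\ you compare $\cL$ at $\mTheta_k^{t}=[\mtheta_1^{t+1},\dots,\mtheta_k^{t+1},\mtheta_{k+1}^{t},\dots,\mtheta_K^{t}]$ against $\mTheta_{k-1}^{t}$, and for the $k$-th increment you write the first-order term as $\eta\norm{\nabla_{\mtheta_k}\cL(\mTheta^{t-1})}^2$. But the gradient appearing in Lemma~\ref{smoothness of algopt} for that increment is $\nabla F_k(\mtheta_k^{t})$, whose weights $\gamma_{i,j;k}$ are computed at the \emph{partially updated} iterate $\mTheta_k^{t}$, whereas the actual update direction is $\eta\nabla F_0(\mtheta_k^{t})$, with weights computed at the start-of-round $\mTheta^{t}$. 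These two gradients differ, so $\lin{\nabla F_k(\mtheta_k^{t}),\,\mtheta_k^{t+1}-\mtheta_k^{t}}$ is not equal to $\eta\norm{\nabla_{\mtheta_k}\cL(\mTheta^{t})}^2$. The paper's proof spends the ``$\langle\nabla F_k-\nabla F_0,\cdot\rangle$'' step (Equation~\eqref{A2 term}) precisely on this: it lower-bounds that cross term by $-\norm{\nabla F_k-\nabla F_0}\,\norm{\mtheta_k^{t+1}-\mtheta_k^{t}}$ and controls $\norm{\nabla F_k-\nabla F_0}$ with the Lipschitz-type bound of Lemma~\ref{smoothness of algopt}, which is where the additional $2L+\tfrac{3}{8}\sigma^2$ in the coefficient --- and hence the specific constants $40L+9\sigma^2$ and the step size $\eta=\tfrac{8}{40L+9\sigma^2}$ in the statement --- come from. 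Your version, which omits this correction, would still yield an $\cO(1/T)$ rate, but with a more optimistic bracket ($\tfrac{L}{2}+\tfrac{3}{16}\sigma^2$ instead of $\tfrac{5L}{2}+\tfrac{9}{16}\sigma^2$), so it does not justify the theorem with the stated step size and constant without adding this step. (Alternatively, you could avoid block telescoping by proving a joint descent lemma for $\cL$ in all of $\mTheta$ at once, but Lemma~\ref{smoothness of algopt} as stated is only block-wise, so some version of this correction is unavoidable.)
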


\begin{proof}
    Each M step of \algopt can be seen as optimizing $\mtheta_1, \cdots, \mtheta_K$ respectively. Then we have,
    \begin{align}
        \cL(\mathbf{\Theta}^{t+1}) - \cL(\mathbf{\Theta}^{t}) = \sum_{k=1}^{K} \cL(\mathbf{\Theta}_{k}^{t}) - \cL(\mathbf{\Theta}_{k - 1}^{t}) \, ,
    \end{align}
    where we define,
    \begin{align}
        \mathbf{\Theta}_{k}^{t} = \left[ \mtheta_{1}^{t+1}, \cdots, \mtheta_{k}^{t+1}, \mtheta_{k+1}^{t} \cdots, \mtheta_{K}^{t} \right] \, ,
    \end{align}
    and $\mathbf{\Theta}_{0}^{t} = \mathbf{\Theta}^{t}$, $\mathbf{\Theta}_{K}^{t} = \mathbf{\Theta}^{t+1}$. Then we define,
    \begin{align}
        F_k(\mtheta_{\nu}) & = \frac{1}{N}\sum_{i=1}^{M} \sum_{j=1}^{N_i} \ln \left( \sum_{n=1}^{K} \omega_{i;n} \tilde{\cI}_{in} (\xx_{ij}, y_{ij}; \mtheta_n)  \right) \, ,
    \end{align}
    where $\mtheta_n \in \mathbf{\Theta}_{k}^{t}$ for $n \not = \nu$, and $\mtheta_{\nu}$ is the variable in $F_k(\mtheta_{\nu})$. Define $\gamma_{i,j;k}^{k}$ that corresponding to $\mathbf{\Theta}_k$, we have,
    \begin{align}
         & \cL(\mathbf{\Theta}^{t+1}) - \cL(\mathbf{\Theta}^{t}) \nonumber                                                                                                            \\
         & = \sum_{k=1}^{K} \cL(\mathbf{\Theta}_{k}^{t}) - \cL(\mathbf{\Theta}_{k - 1}^{t}) \, ,                                                                                      \\
         & = \sum_{k=1}^{K} F_k(\mtheta_{k}^{t+1}) - F_k(\mtheta_{k}^{t}) \, ,                                                                                                        \\
         & \ge \sum_{k=1}^{K} \left\langle \nabla F_k(\mtheta_k^{t}), \mtheta_{k}^{t+1}-\mtheta_{k}^{t} \right\rangle \nonumber \, ,                                                  \\
         & - \left(\frac{L}{2} + \frac{3}{16 N} \sum_{i=1}^{M} \sum_{j=1}^{N_i}  \norm{\nabla f (\xx_{ij}, y_{ij}, \mtheta_k^{t})}^2 \right) \norm{\mtheta_k^{t+1} - \mtheta_k^{t}}^2
        \nonumber                                                                                                                                                                     \\
         & - \left( \frac{L}{16 N} \sum_{i=1}^{M} \sum_{j=1}^{N_i} \norm{\nabla f (\xx_{ij}, y_{ij}, \mtheta_k^{t})} \right) \norm{\mtheta_k^{t+1} - \mtheta_k^{t}}^3  \, .
        \label{full version step1}
    \end{align}
    The last inequality cones from Lemma~\ref{smoothness of algopt}. From the results in Theorem~\ref{E-M steps of our objective appendix}, Section~\ref{sec:Proof of Theorem EM steps}, we have,
    \begin{align}
        \nabla F_k(\mtheta_k^{t}) = - \frac{1}{N} \sum_{i=1}^{M} \sum_{j=1}^{N_i} \gamma_{i,j;k}^{k} \nabla_{\mtheta_k} f (\xx_{ij}, y_{ij}, \mtheta_k) \, ,
    \end{align}
    at the same time, we have,
    \begin{align}
        \mtheta_{k}^{t+1} - \mtheta_{k}^{t} = \eta \nabla F_0(\mtheta_k^{t}) = - \frac{\eta}{N} \sum_{i=1}^{M} \sum_{j=1}^{N_i} \gamma_{i,j;k}^{0} \nabla_{\mtheta_k} f (\xx_{ij}, y_{ij}, \mtheta_k) \, .
    \end{align}
    Then we can obtain,
    \begin{align}
         & \left\langle \nabla F_k(\mtheta_k^{t}), \mtheta_{k}^{t+1}-\mtheta_{k}^{t} \right\rangle \nonumber                                                                                                                     \\
         & = \left\langle \nabla F_k(\mtheta_k^{t}) - \nabla F_0(\mtheta_k^{t}) + \nabla F_0(\mtheta_k^{t}), \mtheta_{k}^{t+1}-\mtheta_{k}^{t} \right\rangle \, ,                                                                \\
         & = \left\langle \nabla F_k(\mtheta_k^{t}) - \nabla F_0(\mtheta_k^{t}) , \mtheta_{k}^{t+1}-\mtheta_{k}^{t} \right\rangle + \left\langle \nabla F_0(\mtheta_k^{t}), \mtheta_{k}^{t+1}-\mtheta_{k}^{t} \right\rangle \, , \\
         & \ge - \norm{\nabla F_k(\mtheta_k^{t}) - \nabla F_0(\mtheta_k^{t})} \norm{\mtheta_{k}^{t+1}-\mtheta_{k}^{t}} + \frac{1}{\eta} \norm{\mtheta_{k}^{t+1}-\mtheta_{k}^{t}}^2 \, ,                                          \\
         & \ge \left( \frac{1}{\eta} - 2L - \frac{3}{8 N} \sum_{i=1}^{M} \sum_{j=1}^{N_i} \norm{\nabla f (\xx_{ij}, y_{ij}, \mtheta_k^{t})}^2 \right) \norm{\mtheta_{k}^{t+1}-\mtheta_{k}^{t}}^2 \nonumber                       \\
         & - \left( \frac{L}{8 N} \sum_{i=1}^{M} \sum_{j=1}^{N_i} \norm{\nabla f (\xx_{ij}, y_{ij}, \mtheta_k^{t})} \right) \norm{\mtheta_{k}^{t+1}-\mtheta_{k}^{t}}^3 \, .
        \label{A2 term}
    \end{align}
    Combine Equation~\eqref{full version step1} and Equation~\eqref{A2 term}, we have,
    \begin{align}
         & \cL(\mathbf{\Theta}^{t+1}) - \cL(\mathbf{\Theta}^{t}) \nonumber                                                                                                                                                                                                                    \\
         & \ge \sum_{k=1}^{K} \left( \frac{1}{\eta} - \frac{5L}{2} - \frac{9}{16 N} \sum_{i=1}^{M} \sum_{j=1}^{N_i} \norm{\nabla f (\xx_{ij}, y_{ij}, \mtheta_k^{t})}^2 \right) \norm{\mtheta_{k}^{t+1}-\mtheta_{k}^{t}}^2  \nonumber                                                         \\
         & - \left( \frac{3L}{16 N} \sum_{i=1}^{M} \sum_{j=1}^{N_i}  \norm{\nabla f (\xx_{ij}, y_{ij}, \mtheta_k^{t})} \right) \norm{\mtheta_{k}^{t+1} - \mtheta_{k}^{t}}^3
        \, ,                                                                                                                                                                                                                                                                                  \\
         & \ge \sum_{k=1}^{K} \left( \frac{1}{\eta} -\frac{5L}{2} - \frac{9 \sigma^2}{16} \right) \norm{\mtheta_{k}^{t+1}-\mtheta_{k}^{t}}^2 \nonumber                                                                                                                                        \\
         & - \frac{3 L}{16 N} \left( \sum_{i=1}^{M} \sum_{j=1}^{N_i} \norm{\nabla f (\xx_{ij}, y_{ij}, \mtheta_k^{t})} \right) \norm{\frac{\eta}{N} \sum_{i=1}^{M} \sum_{j=1}^{N_i} \gamma_{i,j;k}^{0} \nabla f (\xx_{ij}, y_{ij}, \mtheta_k^{t})} \norm{\mtheta_{k}^{t+1}-\mtheta_{k}^{t}}^2
        \, ,                                                                                                                                                                                                                                                                                  \\
         & \ge \sum_{k=1}^{K} \left( \frac{1}{\eta} -\frac{5L}{2} - \frac{3 \sigma^2}{16} \right) \norm{\mtheta_{k}^{t+1}-\mtheta_{k}^{t}}^2
        - \frac{3 \eta L}{16} \left( \frac{1}{N} \sum_{i=1}^{M} \sum_{j=1}^{N_i} \norm{\nabla f (\xx_{ij}, y_{ij}, \mtheta_k^{t})} \right)^2 \norm{\mtheta_{k}^{t+1}-\mtheta_{k}^{t}}^2  \, ,                                                                                                 \\
         & \ge \sum_{k=1}^{K} \left( \frac{1}{\eta} -\frac{5L}{2} - \frac{9 \sigma^2}{16} - \frac{3 \eta L \sigma^2}{16} \right) \norm{\mtheta_{k}^{t+1}-\mtheta_{k}^{t}}^2 \, ,                                                                                                              \\
         & = \sum_{k=1}^{K} \left( \eta - \frac{5 L \eta^2}{2} - \frac{9 \sigma^2 \eta^2}{16} - \frac{3 \eta^3  L \sigma^2}{16} \right) \norm{\nabla_{\mtheta_k} \cL(\mathbf{\Theta}^t)}^2  \, .
    \end{align}
    Then we can observe that the objective $\cL(\mathbf{\Theta})$ converge when,
    \begin{align}
        \eta \le \frac{((1600 L^2 + 912 L \sigma^2 + 81 \sigma^4)^{1/2} - 40L - 9 \sigma^2}{6 L \sigma^2} \, ,
    \end{align}
    and when we set,
    \begin{align}
        \eta & \le \frac{((1600 L^2 + 816 L \sigma^2 + 81 \sigma^4)^{1/2} - 40L - 9 \sigma^2}{6 L \sigma^2} \, , \\
             & = \frac{\sqrt{(40L + 9 \sigma^2)^2 + 96 L \sigma^2} - (40L + 9 \sigma^2)}{6 L \sigma^2} \, ,
    \end{align}
    we have,
    \begin{align}
        \cL(\mathbf{\Theta}^{t+1}) - \cL(\mathbf{\Theta}^{t})
         & \ge \frac{\eta}{2} \sum_{k=1}^{K} \norm{ \nabla F_0(\mtheta_k^{t})}^2 \, ,                             \\
         & = \frac{\eta}{2} \sum_{k=1}^{K} \norm{\nabla_{\mtheta_k} \cL\left( \mathbf{\Theta}^{t} \right)}^2 \, .
    \end{align}
    Then we have,
    \begin{align}
        \cL(\mathbf{\Theta}^{T}) - \cL(\mathbf{\Theta}^{0})
         & = \sum_{t=0}^{T-1} \cL(\mathbf{\Theta}^{t+1}) - \cL(\mathbf{\Theta}^{t}) \, ,                                              \\
         & \ge \frac{\eta}{2} \sum_{t=0}^{T-1} \sum_{k=1}^{K}  \norm{\nabla_{\mtheta_k} \cL\left( \mathbf{\Theta}^{t} \right)}^2 \, .
    \end{align}
    Because
    \begin{align}
        \frac{1}{\eta} & = \frac{6 L \sigma^2}{\sqrt{(40L + 9 \sigma^2)^2 + 96 L \sigma^2} - (40L + 9 \sigma^2)} \, ,                            \\
                       & = \frac{6 L \sigma^2 \left(\sqrt{(40L + 9 \sigma^2)^2 + 96 L \sigma^2} + (40L + 9 \sigma^2)\right)}{96 L \sigma^2} \, , \\
                       & = \frac{\sqrt{(40L + 9 \sigma^2)^2 + 96 L \sigma^2} + (40L + 9 \sigma^2)}{16} \, ,                                      \\
                       & \le \frac{80L + 18\sigma^2}{16} \, ,                                                                                    \\
                       & = \frac{40 L + 9 \sigma^2}{8} \, .
    \end{align}
    That is,
    \begin{align}
        \frac{1}{T} \sum_{t=0}^{T-1} \sum_{k=1}^{K}  \norm{\nabla_{\mtheta_k} \cL\left( \mathbf{\Theta}^{t} \right)}^2 \le \frac{(40 L + 9 \sigma^2) \left( \cL(\mathbf{\Theta}^{*}) - \cL(\mathbf{\Theta}^{0}) \right)}{4 T} \, .
    \end{align}
\end{proof}

\section{Proof of the Convergence Rate of \algfed}

\label{sec: convergence of algfed}

\begin{assumption}[Unbiased gradients and bounded variance]
    Each client $i \in[M]$ can sample a random batch $\xi$ from $\mathcal{D}_i$ and compute an unbiased estimator $\mathrm{g}_i(\xi, \mtheta)$ of the local gradient with bounded variance,
    i.e., $\mathbb{E}_{\xi} \left[\mathrm{g}_i(\xi, \mtheta)\right]=\frac{1}{N_i} \sum_{j=1}^{N_i} \nabla_{\mtheta} f\left(\xx_{i,j}, y_{i,j}, \mtheta \right)$ and
    $\mathbb{E}_{\xi}\left\|\mathrm{g}_i(\xi, \mtheta)-\frac{1}{N_i} \sum_{i=1}^{N_i} \nabla_{\mtheta} f \left( \xx_{i,j}, y_{i,j}, \mtheta \right) \right\|^2 \leq \delta^2$.
\end{assumption}

\begin{assumption}[Bounded dissimilarity]
    \label{Bounded dissimilarity}
    There exist $\beta$ and $G$ such that :
    $$
        \sum_{i=1}^M \frac{N_i}{N} \left\|\frac{1}{N_i} \sum_{j=1}^{N_i} \sum_{k=1}^K   f\left(\xx_{i,j}, y_{i,j}, \mtheta_k \right)\right\|^2 \leq G^2+\beta^2\left\|\frac{1}{N} \sum_{i=1}^M \sum_{j=1}^{N_i} \sum_{k=1}^K f\left(\xx_{i,j}, y_{i,j}, \mtheta_k \right) \right\|^2 \, .
    $$
\end{assumption}

\begin{theorem}[Convergence rate of \algfed] Under Assumptions~\ref{Smoothness assumption}-~\ref{Bounded dissimilarity}, when clients use SGD as local solver with learning rate $\eta \le \frac{1}{\sqrt{T}}$, and run \algfed for $T$ communication rounds, we have
    \begin{align}
        \frac{1}{T} \sum_{t=1}^{T} \Eb{\norm{\nabla_{\mTheta} \cL(\mOmega^{t}, \mTheta^{t})}_{F}^{2}} & \le  \cO \left( \frac{1}{\sqrt{T}} \right) \, ,       \\
        \frac{1}{T} \sum_{t=1}^{T} \Eb{\Delta_{\mOmega} \cL(\mOmega^{t}, \mTheta^{t})}                & \le \cO \left( \frac{1}{T^{\frac{3}{4}}} \right) \, ,
    \end{align}
    where $\Delta_{\mOmega} \cL(\mOmega^{t}, \mTheta^{t}) = \cL(\mOmega^{t}, \mTheta^{t+1}) - \cL(\mOmega^{t}, \mTheta^{t})$.
\end{theorem}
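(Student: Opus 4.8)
The plan is to mirror the centralized analysis behind Theorem~\ref{Convergence rate of algopt}, now reading one communication round of \algfed as two steps: (i) an \emph{exact} block maximization of $\cL$ over $\mOmega$ under the constraints $\sum_k\omega_{i;k}=1$ --- this is exactly the closed form of~\eqref{update gamma}, as established in Theorem~\ref{E-M steps of our objective appendix} --- followed by (ii) an \emph{inexact, stochastic, multi-local-step} ascent step on $\mTheta$. Because step (i) can only increase $\cL$, i.e.\ $\cL(\mOmega^{t},\mTheta^{t})\ge\cL(\mOmega^{t-1},\mTheta^{t})$ and $\cL(\mOmega^{t+1},\mTheta^{t+1})\ge\cL(\mOmega^{t},\mTheta^{t+1})$ pathwise, it suffices to produce a per-round lower bound on $\cL(\mOmega^{t},\mTheta^{t+1})-\cL(\mOmega^{t},\mTheta^{t})$ and telescope.

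First I would invoke the block-wise quadratic-plus-cubic sandwich of Lemma~\ref{smoothness of algopt}, applied coordinate block $\mtheta_k$ by coordinate block exactly as in the proof of Theorem~\ref{convergence theorem appendix}, and replace the empirical gradient-norm sums by $\sigma^2$ via Assumption~\ref{Bounded Gradient Assumption}, obtaining
\begin{align*}
\Eb{\cL(\mOmega^{t},\mTheta^{t+1})-\cL(\mOmega^{t},\mTheta^{t})}\ \ge\ \Eb{\lin{\nabla_{\mTheta}\cL(\mOmega^{t},\mTheta^{t}),\,\mTheta^{t+1}-\mTheta^{t}}}-c_1\Eb{\norm{\mTheta^{t+1}-\mTheta^{t}}_{F}^{2}}-c_2\Eb{\norm{\mTheta^{t+1}-\mTheta^{t}}_{F}^{3}}\,,
\end{align*}
with $c_1=\tfrac{L}{2}+\tfrac{3\sigma^2}{16}$ and $c_2=\tfrac{L\sigma}{16}$. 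The technical core is then to relate the aggregated FedAvg update $\mTheta^{t+1}-\mTheta^{t}=\tfrac{\eta_g}{\sum_i N_i}\sum_{i\in\cS^t}N_i\,\Delta\mtheta^t_{i,k}$ to the idealized full-batch step $-\eta\,\nabla_{\mTheta}\cL(\mOmega^{t},\mTheta^{t})$ with $\eta:=\eta_g\eta_l\cT$. I would split the difference into a \emph{client-drift} term (the local iterates $\mtheta^{t,\tau}_{i,k}$ moving away from $\mtheta^t_k$ over the $\cT$ local steps) and a \emph{stochastic} term. The drift is handled by the standard local-SGD recursion: $L$-smoothness (Assumption~\ref{Smoothness assumption}), the bounded-variance assumption, and bounded dissimilarity (Assumption~\ref{Bounded dissimilarity}) give $\sum_\tau\Eb{\norm{\mtheta^{t,\tau}_{i,k}-\mtheta^t_k}^2}=\cO(\eta_l^2\cT^2(G^2+\delta^2+\sigma^2))$ once $\eta_l\cT L$ is small; the stochastic term is handled by noting that $\gamma^t_{i,j;k}$ is frozen for the whole round, so each local gradient stays an unbiased estimator of $\nabla_{\mtheta_k}\cL(\mOmega^{t},\cdot)$ with variance $\le\delta^2$. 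A Young split of the inner product then gives, for $\eta$ small enough, the per-round inequality $\Eb{\cL(\mOmega^{t},\mTheta^{t+1})-\cL(\mOmega^{t},\mTheta^{t})}\ge\tfrac{\eta}{2}\Eb{\norm{\nabla_{\mTheta}\cL(\mOmega^{t},\mTheta^{t})}_{F}^{2}}-C(\eta^2+\eta^3)$, where $C$ depends on $L,\beta,\cT,\sigma,\delta,G$.

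Summing over $t=1,\dots,T$, telescoping via the $\mOmega$-monotonicity of step (i), and using $\cL\le\cL^{\star}$ yields $\tfrac1T\sum_t\Eb{\norm{\nabla_{\mTheta}\cL(\mOmega^{t},\mTheta^{t})}_{F}^{2}}\le\tfrac{2(\cL^{\star}-\cL^{0})}{\eta T}+\cO(\eta+\eta^2)$, and $\eta=\Theta(1/\sqrt T)$ (which is also what $\eta\le1/\sqrt T$ forces for the bound to be nontrivial) balances the two dominant terms at $\cO(1/\sqrt T)$, giving the first claim. For the second claim I would telescope the \emph{same} per-round quantity directly: since $\cL(\mOmega^{t+1},\mTheta^{t+1})\ge\cL(\mOmega^{t},\mTheta^{t+1})$, one has $\sum_t\Eb{\Delta_{\mOmega}\cL(\mOmega^{t},\mTheta^{t})}\le\Eb{\cL(\mOmega^{T+1},\mTheta^{T+1})}-\cL^{0}\le\cL^{\star}-\cL^{0}$, hence $\tfrac1T\sum_t\Eb{\Delta_{\mOmega}\cL}\le\cO(1/T)$; combined with the per-round lower bound above (which gives $\Eb{\Delta_{\mOmega}\cL}\ge-C(\eta^2+\eta^3)=-\cO(1/T)$), the average is $\cO(1/T)$, comfortably inside the stated $\cO(T^{-3/4})$ --- the weaker exponent simply leaves slack for the residual $\cT$- and dissimilarity-dependent drift bookkeeping.

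The main obstacle, in my view, is Step two rather than the telescoping: \algfed's local updates differ from vanilla FedAvg because each local gradient for $\mtheta_{i,k}$ is reweighted by $\gamma^t_{i,j;k}$, which depends on \emph{all} of $\mTheta^t$ through the softmax-type normalization in~\eqref{update gamma}. One must verify that (a) freezing $\gamma^t$ throughout the round keeps the local stochastic gradients unbiased for $\nabla_{\mtheta_k}\cL(\mOmega^{t},\cdot)$ rather than for some surrogate, and (b) the per-block objective governing the $\mtheta_k$-update coincides with the function $g$ of Lemma~\ref{smoothness of algopt}, so that its ``$2L$-type'' effective smoothness and its gradient-norm-weighted cubic corrections --- which appear precisely because $\cL$ is \emph{not} globally $L$-smooth --- transfer to the federated analysis. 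Assumption~\ref{Bounded Gradient Assumption} is exactly what converts those cubic terms into benign $\eta^3\sigma^2$ contributions; tracking how $\cT$, $G$, and $\beta$ enter the drift and cubic terms without destroying the $\Theta(1/\sqrt T)$ step size is where the bulk of the careful computation lives.
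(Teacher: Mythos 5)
Your route is genuinely different from the paper's. The paper does not redo a FedAvg-style drift analysis at all: it constructs the per-client surrogate $g_i^{t}(\mOmega, \mTheta) = \frac{1}{N_i}\sum_{j,k}\gamma_{i,j;k}^{t}\bigl(f(\xx_{i,j},y_{i,j},\mtheta_k)+\log \cP_k(y)-\log\omega_{i;k}+\log\gamma_{i,j;k}^{t}\bigr)$, verifies four conditions --- $L$-smoothness in $\mTheta$, $g_i^{t}\ge -\cL_i$ with the gap $g_i^{t}+\cL_i$ equal to a nonnegative KL divergence, matching gradients, and equality at $(\mOmega^{t-1},\mTheta^{t-1})$ because of~\eqref{update gamma} --- and then invokes Theorem $3.2'$ of \citep{marfoq2021federated} for federated surrogate optimization, which absorbs all of the local-step, client-sampling, variance, and dissimilarity bookkeeping. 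Your plan instead extends the centralized telescoping proof of Theorem~\ref{convergence theorem appendix} directly, via Lemma~\ref{smoothness of algopt} plus a local-SGD drift recursion. That buys a self-contained argument (and, for the second claim, a cleaner $\cO(1/T)$ bound via the monotonicity of the $\mOmega$-step), at the cost of redoing the entire federated analysis by hand.

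There is, however, one step in your outline that is wrong as stated and happens to be exactly the point the paper's surrogate construction exists to handle. You claim that because $\gamma^{t}_{i,j;k}$ is frozen for the whole round, ``each local gradient stays an unbiased estimator of $\nabla_{\mtheta_k}\cL(\mOmega^{t},\cdot)$.'' This holds only at $\tau=0$. For $\tau\ge 1$ the true gradient $\nabla_{\mtheta_k}\cL(\mOmega^{t},\mTheta)$ at the displaced iterate is $-\frac{1}{N}\sum_{i,j}\gamma_{i,j;k}(\mTheta)\,\nabla f(\xx_{ij},y_{ij},\mtheta_k)$ with the responsibilities \emph{re-evaluated} at the new $\mTheta$ (they depend on all blocks through the normalization in~\eqref{update gamma}), whereas the algorithm uses the frozen $\gamma^{t}$; the local stochastic gradients are therefore unbiased for $\nabla g_i^{t}$, i.e.\ for a surrogate, not for $\cL(\mOmega^{t},\cdot)$. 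To salvage your direct route you must explicitly bound the mismatch $\norm{\nabla g_i^{t}(\mtheta)-\nabla_{\mtheta_k}\cL(\mOmega^{t},\mtheta)}$ along the local trajectory --- Lemma~\ref{difference on gamma lemma} gives the needed Lipschitz-type control of $\gamma$ in $\mTheta$, at the price of extra drift and cubic terms --- or else analyze ascent of $g_i^{t}$ and transfer to $\cL$ via the majorization $g_i^{t}\ge-\cL_i$ with touching at $\mTheta^{t}$, which is precisely the paper's reduction. A smaller point of the same flavor: the $\mOmega$-update of~\eqref{update gamma} is an EM-type fixed-point step, not an exact block $\argmax$ of $\cL$ over the simplices; it does increase $\cL$, but that monotonicity itself requires the Jensen/KL minorization argument (the computation showing $g_i^{t}+\cL_i$ is a KL divergence), so your telescoping premise is true but not free.
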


\begin{proof}
    Constructing
    \begin{align}
        g_i^{t}(\mOmega, \mTheta) = \frac{1}{N_i} \sum_{j=1}^{N_i} \sum_{k=1}^{K} \gamma_{i,j;k}^{t} \left( f(\xx_{i,j}, y_{i,j}, \mtheta_k) + \log(\cP_k(y)) - \log(\omega_{i;k}) + \log(\gamma_{i,j;k}^t) \right) \, .
    \end{align}
    Then we would like to show (1) $g_i^{t}(\mOmega, \mTheta)$ is L-smooth to $\mTheta$, (2) $g_i^{t}(\mOmega, \mTheta) \ge - \cL_i(\mOmega, \mTheta)$, (3) $g_i^{t}(\mOmega, \mTheta)$ and $- \cL_i(\mOmega, \mTheta)$ have the same gradient on $\mtheta$, and (4)  $g_i^{t}(\mOmega^{t-1}, \mTheta^{t-1}) = \cL_i(\mOmega^{t-1}, \mTheta^{t-1})$.
    When these conditions are satisfied, we can directly use Theorem $3.2^{'}$ of \citep{marfoq2021federated} to derive the final convergence rate.
    Firstly, it is obviously that $g_i^{t}(\mOmega, \mTheta)$ is L-smooth to $\mTheta$ as it is a linear combination of $K$ smooth functions. Besides, we can easily obtain that
    \begin{align}
        \derive{g_i^{t}(\mOmega, \mTheta)}{\mtheta_k} = \frac{1}{N_i} \sum_{j=1}^{N_i} \gamma_{i,j;k}^{t} \nabla_{\mtheta_k} f(\xx_{i,j}, y_{i,j}, \mtheta_k) \, .
    \end{align}
    This is align with the gradient of $-\cL_i(\mOmega, \mTheta)$ as shown in Theorem~\ref{E-M steps of our objective appendix}. Then define $r(\mOmega, \mTheta) = g_i^{t}(\mOmega, \mTheta) + \cL_i(\mOmega, \mTheta)$, we will have
    \begin{align}
        r(\mOmega, \mTheta) & = g_i^{t}(\mOmega, \mTheta) + \cL_i(\mOmega, \mTheta)                                                                                                                                                                                      \\
                            & = \frac{1}{N_i} \sum_{j=1}^{N_i} \sum_{k=1}^{K} \gamma_{i,j;k}^{t} \left( \log(\gamma_{i,j;k}^t) - \log(\omega_{i;k} \tilde{\cI}(\xx, y, \mtheta_k)) \right) + \cL_i(\mOmega, \mTheta)                                                     \\
                            & = \frac{1}{N_i} \sum_{j=1}^{N_i} \sum_{k=1}^{K} \gamma_{i,j;k}^{t} \left( \log(\gamma_{i,j;k}^t) - \log(\omega_{i;k} \tilde{\cI}(\xx, y, \mtheta_k)) + \log(\sum_{n=1}^{K} \omega_{i;n} \tilde{\cI}(\xx, y, \mtheta_n)) \right)            \\
                            & = \frac{1}{N_i} \sum_{j=1}^{N_i} \sum_{k=1}^{K} \gamma_{i,j;k}^{t} \left( \log(\gamma_{i,j;k}^t) - \log\left(\frac{\omega_{i;k} \tilde{\cI}(\xx, y, \mtheta_k)}{\sum_{n=1}^{K} \omega_{i;n} \tilde{\cI}(\xx, y, \mtheta_n)}\right) \right) \\
                            & = \frac{1}{N_i} \sum_{j=1}^{N_i} KL \left(\gamma_{i,j;k}^{t} \|    \frac{\omega_{i;k} \tilde{\cI}(\xx, y, \mtheta_k)}{\sum_{n=1}^{K} \omega_{i;n} \tilde{\cI}(\xx, y, \mtheta_n)} \right) \ge 0 \, .
    \end{align}
    Besides, from Equation~\ref{update gamma}, we can found that
    \begin{align}
        \gamma_{i,j;k}^{t} = \frac{\omega_{i;k}^{t-1} \tilde{\cI}(\xx, y, \mtheta_k^{t-1})}{\sum_{n=1}^{K} \omega_{i;n}^{t-1} \tilde{\cI}(\xx, y, \mtheta_n^{t-1})} \, ,
    \end{align}
    then the last condition is also satisfied. Therefore, we have shown that \algfed is a special case of the Federated Surrogate Optimization defined in \citep{marfoq2021federated}, and the convergence rate is obtained by
    \begin{align}
        \frac{1}{T} \sum_{t=1}^{T} \Eb{\norm{\nabla_{\mTheta} \cL(\mOmega^{t}, \mTheta^{t})}_{F}^{2}} & \le  \cO \left( \frac{1}{\sqrt{T}} \right) \, ,       \\
        \frac{1}{T} \sum_{t=1}^{T} \Eb{\Delta_{\mOmega} \cL(\mOmega^{t}, \mTheta^{t})}                & \le \cO \left( \frac{1}{T^{\frac{3}{4}}} \right) \, .
    \end{align}
\end{proof}

\section{Related Works}

\label{sec:related works appendix}

\paragraph{Federated Learning with label distribution shifts.}
As the de facto FL algorithm, \citep{mcmahan2016communication,lin2020dont} proposes to use local SGD steps to alleviate the communication bottleneck.
However, the non-iid nature of local distribution hinders the performance of FL algorithms~\citep{lifederated2018,wang2020tackling,karimireddy2020scaffold,karimireddy2020mime,guo2021towards,jiang2023test}.
Therefore, designing algorithms to deal with the distribution shifts over clients is a key problem in FL~\citep{kairouz2021advances}.
Most existing works only consider the label distribution skew among clients.
Some techniques address local distribution shifts by training robust global models~\citep{lifederated2018,li2021model}, while others use variance reduction methods~\citep{karimireddy2020scaffold,karimireddy2020mime}.
However, the proposed methods cannot be used directly for concept shift because the decision boundary changes.
Combining \algfed with other methods that address label distribution shifts may be an interesting future direction, but it is orthogonal to our approach in this work.
\looseness=-1

\paragraph{Federated Learning with feature distribution shifts.}
Studies about feature distribution skew in FL mostly focus on domain generalization (DG) problem that aims to train robust models that can generalize to unseen feature distributions.
\citep{reisizadeh2020robust} investigates a special case that the local distribution is perturbed by an affine function, i.e.\ from $x$ to $Ax + b$.
Many studies focus on adapting DG algorithms for FL scenarios. For example, combining FL with Distribution Robust Optimization (DRO), resulting in robust models that perform well on all clients~\citep{mohri2019agnostic,deng2021distributionally}; combining FL with techniques that learn domain invariant features~\citep{peng2019federated,wang2022framework,shen2021fedmm,sun2022multi,gan2021fruda} to improve the generalization ability of trained models.
All of the above methods aim to train a single robust feature extractor that can generalize well on unseen distributions. However, using a single model cannot solve the diverse distribution shift challenge, as the decision boundary changes when concept shifts occur.
\looseness=-1

\paragraph{Federated Learning with concept shifts.}
Concept shift is not a well-studied problem in FL. However, some special cases become an emerging topic recently.
For example, research has shown that label noise can negatively impact model performance~\citep{ke2022quantifying}. Additionally, methods have been proposed to correct labels that have been corrupted by noise~\citep{fang2022robust,xu2022fedcorr}.
Recently, \citep{jothimurugesan2022federated} also investigate the concept shift problem under the assumption that clients do not have concept shifts at the beginning of the training. However, it is difficult to ensure the non-existence of concept shifts when information about local distributions is not available in practice.
Additionally, \citep{jothimurugesan2022federated} did not address the issue of label and feature distribution skew.
In this work, we consider a more realistic but more challenging scenario where clients could have all three kinds of shifts with each other, unlike~\citep{jothimurugesan2022federated} that needs to restrict the non-occurrence of concept shift at the initial training phase.
\looseness=-1

\paragraph{Clustered Federated Learning.} Clustered Federated Learning (clustered FL) is a technique that groups clients into clusters based on their local data distribution to address the distribution shift problem.
Various methods have been proposed for clustering clients, including using local loss values~\citep{ghosh2020efficient}, communication time/local calculation time~\citep{wang2022accelerating}, fuzzy $c$-Means~\citep{stallmann2022towards}, and hierarchical clustering~\citep{zhao2020cluster, briggs2020federated, sattler2020clustered}.
Some approaches, such as FedSoft~\citep{ruan2022fedsoft}, combine clustered FL with personalized FL by using a soft cluster mechanism that allows clients to contribute to multiple clusters.
Other methods, such as FeSEM~\citep{long2023multi} and FedEM~\citep{marfoq2021federated}, employ an Expectation-Maximization (EM) approach and utilize the log-likelihood objective function, as in traditional EM.
Our proposed \algfed overcomes the pitfalls of other clustered FL methods, and shows strong empirical effectiveness over other clustered FL methods.
It is noteworthy that \algfed could further complement other clustered FL methods for larger performance gain.

\section{Discussion: Adding Stochastic Noise on $\gamma_{i,j;k}$ Preserves Client Privacy.}
The approximation of $\cP(y;\mtheta_k)$ in~\eqref{our objective practical} needs to gather $\sum_{i=1}^{M} \sum_{j=1}^{N_i} \1_{ \{ y_{i,j}=y \}} \gamma_{i, j; k}$, which may expose the local label distributions. To address this issue, additional zero-mean Gaussian noise can be added: $C_{y, i} = c_{\xi_{i}} + \sum_{j=1}^{N_i} \1_{ \{ y_{i,j}=y \} } \gamma_{i, j; k}$, where $c_{\xi_{i}} \sim \cN \left( 0, \xi_{i}^2 \right)$, and $\xi_{i}$ is the standard deviation of the Gaussian noise for client $i$ (which can be chosen by clients).
By definition of $C_y = \nicefrac{1}{N} \sum_{i=1}^{M} C_{y, i}$, we have
\begin{small}
    \begin{align}
        \textstyle
        C_y \!:=\!  c_{\xi_{g}} \!+\! \frac{1}{N} \sum_{i=1}^{M} \sum_{j=1}^{N_i} \left( \1_{ \{ y_{i,j}=y \} } \gamma_{i, j; k} \right) \,,
        \label{adding noise to enhance privacy}
    \end{align}
\end{small}%
where $ c_{\xi_{g}} \sim \cN \left( 0, \nicefrac{\sum_{i=1}^{M} \xi_i^2}{N^2} \right)$, which standard deviation decreases as $N$ increases.
Therefore, when $N$ is large,
we can obtain a relatively precise $C_y \approx \frac{1}{N} \sum_{i=1}^{M} \sum_{j=1}^{N_i} \1_{\{y_{i,j}=y\}} \gamma_{i, j; k}$ without accessing the true $\sum_{j=1}^{N_i} \1_{\{y_{i,j}=y\}} \gamma_{i, j; k}$.
Note that the performance of \algfed will not be significantly affected by the magnitude of noise, as justified in Figure~\ref{fig:ablation study on noise level} and Table~\ref{tab:magnitude-of-noise}.

\section{Discussion: Adaptive \algfed for Deciding the Number of Concepts}
\label{sec:intuition of adapt}

In previous sections, the algorithm design assumes a fixed number of clusters equivalent to the number of concepts. However, in real-world scenarios, the number of concepts may be unknown.
This section explores how to determine the number of clusters adaptively in such cases.
\looseness=-1

\paragraph{Proposal: extending \algfed to determine the number of concepts.}
\algfed effectively avoids ``bad clustering'' dominated by label skew and feature skew, i.e., variations in $\cP (y;\mtheta_k)$ or $\cP (\xx;\mtheta_k)$ among clusters.
For example, in the case of data $(\xx_1, y_1)$ that $\cP(y_1;\mtheta_k)$ or $\cP (\xx_1;\mtheta_k)$ is small, the data will assign larger weights to model $\mtheta_k$ in the next E-step (i.e.~\eqref{update gamma}) of \algfed.
It would increase the values of $\cP (y_1;\mtheta_k)$ and $\cP (\xx_1;\mtheta_k)$ to avoid the unbalanced label or feature distribution within each cluster.\looseness=-1

As a result, data only have feature or label distribution shifts will not be assigned to different clusters, and empty clusters can be removed: we conjecture that when (1) the weights $\gamma_{ijk}$ converge and (2) $K$ is larger than the number of concepts, we can determine the number of concepts by removing model $\mtheta_k$ when $\sum_{i, j} \gamma_{i, j; k} \rightarrow 0$.
More discussions in Appendix~\ref{sec:intuition of adapt}.
\looseness=-1

\paragraph{Verifying the proposal via empirical experiments.}
We conduct experiments on CIFAR10 using the same settings as in Section~\ref{sec:Revisiting Clustered FL: A Diverse Distribution Shift Perspective} and run \algfed with $K=4$ (greater than the concept number 3). The clustering results of \algfed are shown in Figures~\ref{fig:clustering results of algfed on label 4}-~\ref{fig:clustering results of algfed on concept 4}. The results indicate that data with concept shifts are assigned to clusters $2, 3, 4$, while data without concept shifts are not divided into different clusters, leading to minimal data in cluster $1$. Hence, cluster $1$ can be removed.
\looseness=-1

\paragraph{Design of adaptive \algfed.}
The process of adaptive \algfed involves initially setting a large number of clusters $K$ and checking if any model $\mtheta_k$ meets the following condition when the weights converge: $ \frac{1}{N} \sum_{i=1}^{M} \sum_{j=1}^{N_i} \gamma_{i, j; k} < \delta $. The value of $\delta$ sets the threshold for model removal. For more details regarding the model removal, refer to Appendix \ref{sec:remove the models}.

\begin{figure*}[!t]
    \centering
    \subfigure[\small Convergence curves]{
        \includegraphics[width=0.23\textwidth]{./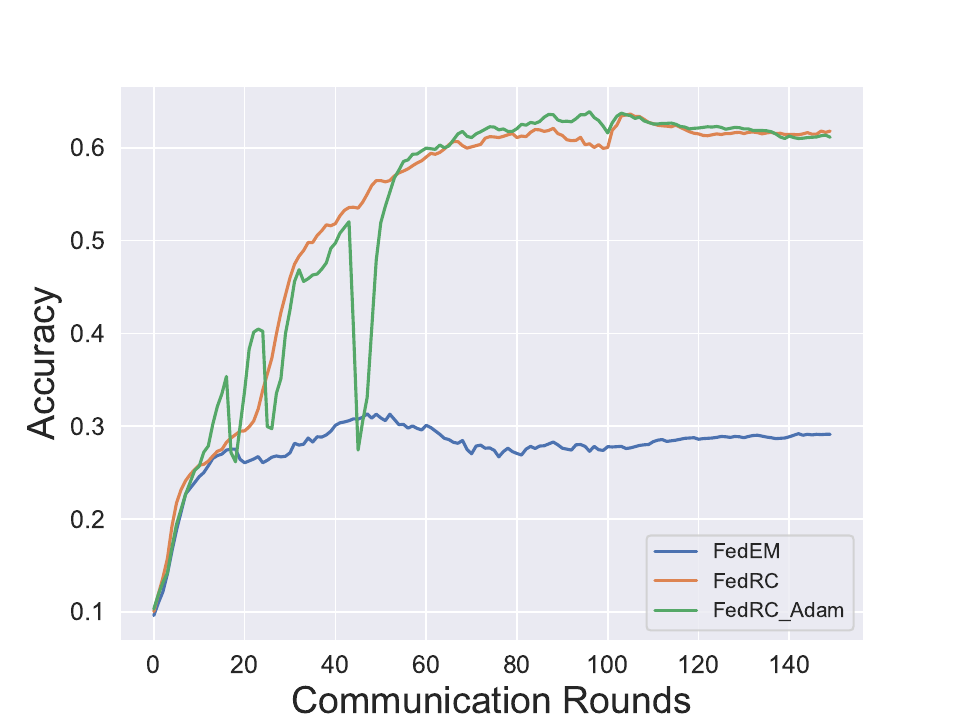}
    }
    \subfigure[ \scriptsize{FedEM clustering results}]{
        \includegraphics[width=0.23\textwidth]{./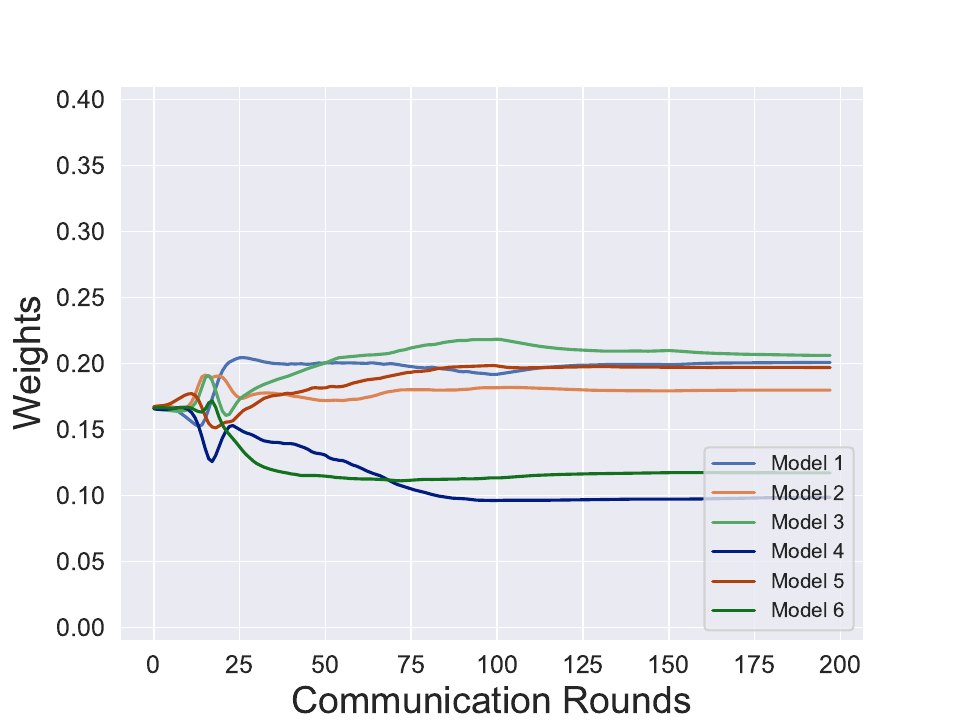}
    }
    \subfigure[ \tiny{\algfed clustering results}]{
        \includegraphics[width=0.23\textwidth]{./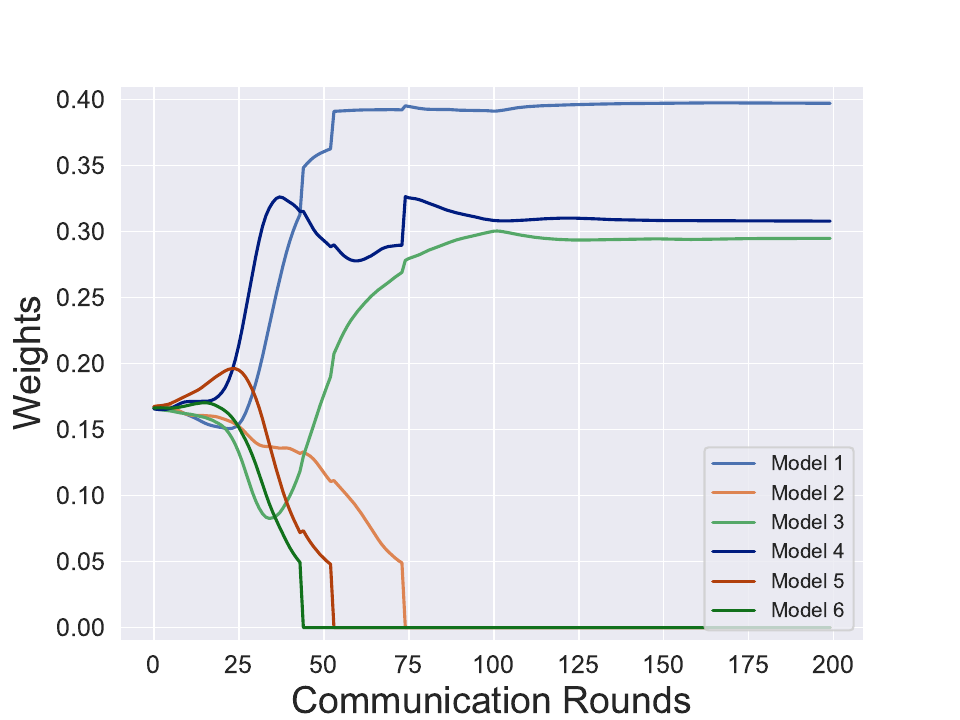}
    }
    \subfigure[ \tiny{\algfed-Adam results}]{
        \includegraphics[width=0.23\textwidth]{./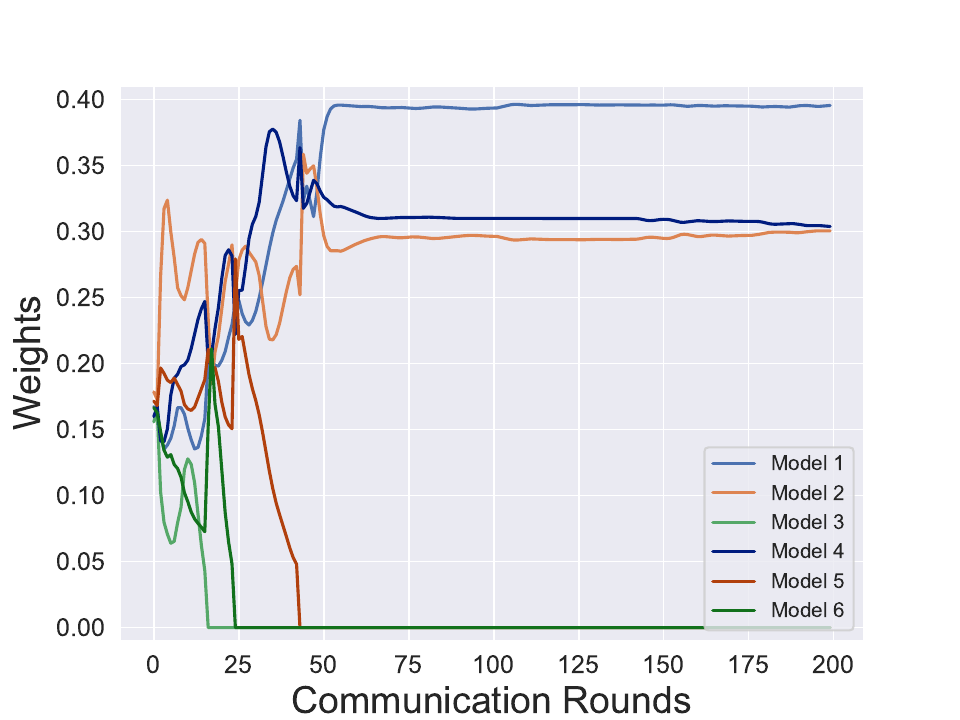}
    }
    \vspace{-1em}
    \caption{\small
        \textbf{Adaptive process of \algfed.}
        We split CIFAR10 dataset into 300 clients, initialize 6 clusters, and report the global accuracy per round for the convergence result.
        We also show the clustering results by calculating weights by $\sum_{i, j} \gamma_{i,j;k} / \sum_{i, j, k} \gamma_{i,j;k}$, representing the proportion of clients that select model $k$.
        We use $\delta = 0.05$ here. Ablation study on $\delta$ refers to Appendix~\ref{sec:ablation study on adaptive appendix}.
    }
    \vspace{-1.5em}
    \label{fig:adaptive process results}
\end{figure*}

\subsection{Results of Adaptive \algfed}

In this section, we set the number of clusters $K = 6$, which is greater than the number of concepts, to demonstrate the effectiveness of the \algfed on automatically deciding the number of concepts.

\paragraph{Accelerating the convergence of $\gamma_{i,j;k}$ by Adam.}
The adaptive \algfed needs a full convergence of $\gamma_{i, j; k}$ before removing models, emphasizing the need to accelerate the convergence of $\gamma_{i, j; k}$. One solution is incorporating Adam~\citep{kingma2014adam} into the optimization of $\gamma_{i,j,k}$, by treating $\gamma_{i, j; k}^{t+1} - \gamma_{i, j; k}^{t}$ as the gradient of $\gamma_{i, j; k}$ at round $t$. Please refer to Appendix~\ref{sec: E Steps of algfed-Adam} for an enhanced optimization on $\gamma_{i,j;k}$.  Figure~\ref{fig:adaptive process results} shows that \algfed exhibits faster convergence after introducing Adam.
\looseness=-1

\paragraph{Effectiveness of \algfed in determining the number of concepts.}
In Figure~\ref{fig:adaptive process results}, we show 1) \algfed can correctly find the number of concepts; 2) weights $\gamma_{ijk}$ in \algfed-Adam converge significantly faster than \algfed (from 75 to 40 communication rounds); 3) FedEM failed to decide the number of concepts, and the performance is significantly worse than \algfed.
\looseness=-1

\section{Discussion: Interpretation of the objective function}

\label{sec: Interpretation of the objective function}

In this section, we would like to give a more clear motivation about why maximizing our objective function defined by Equation~\eqref{our objective} can achieve \problem. In Figure~\ref{fig:toy case on objective}, we construct a toy case to show: (1) The value of $\cL(\mOmega, \mTheta)$ decreases when the data has concept shifts with the distribution of cluster $k$ is assigned to the cluster $k$. (2) The value of $\cL(\mOmega, \mTheta)$ is not sensitive to the feature or label distribution shifts.

Furthermore, in Figure~\ref{fig:toy case}, we demonstrate that \algopt can solve \problem by assigning data with concept shifts to different clusters while maintaining data without concept shifts in the same clusters.

\begin{figure*}[!t]
    \centering
    \includegraphics[width=1.\textwidth]{./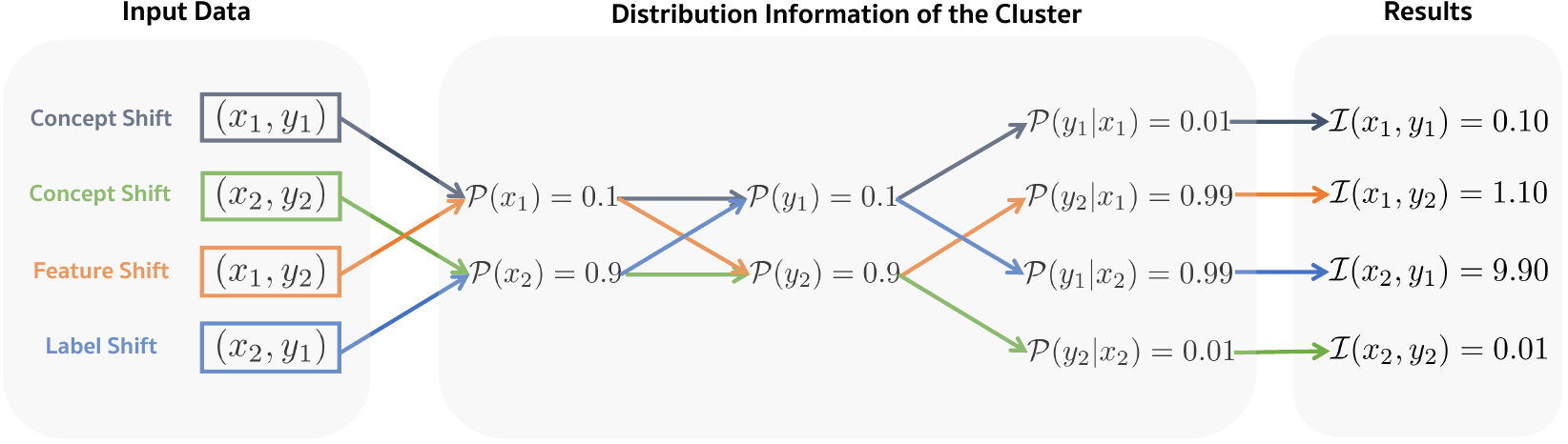}
    \caption{\small
        \textbf{A toy case to illustrate the change of objective functions with different distribution shifts.} We created a simple example to demonstrate how the value of $\cL(\xx, y, \mtheta_k)$ changes as the type of distribution shifts changes. Note that larger $\cI(\xx, y)$ indicates larger $\cL(\mOmega, \mTheta)$.
    }
    \label{fig:toy case on objective}
\end{figure*}

\begin{figure*}[!t]
    \centering
    \includegraphics[width=1.\textwidth]{./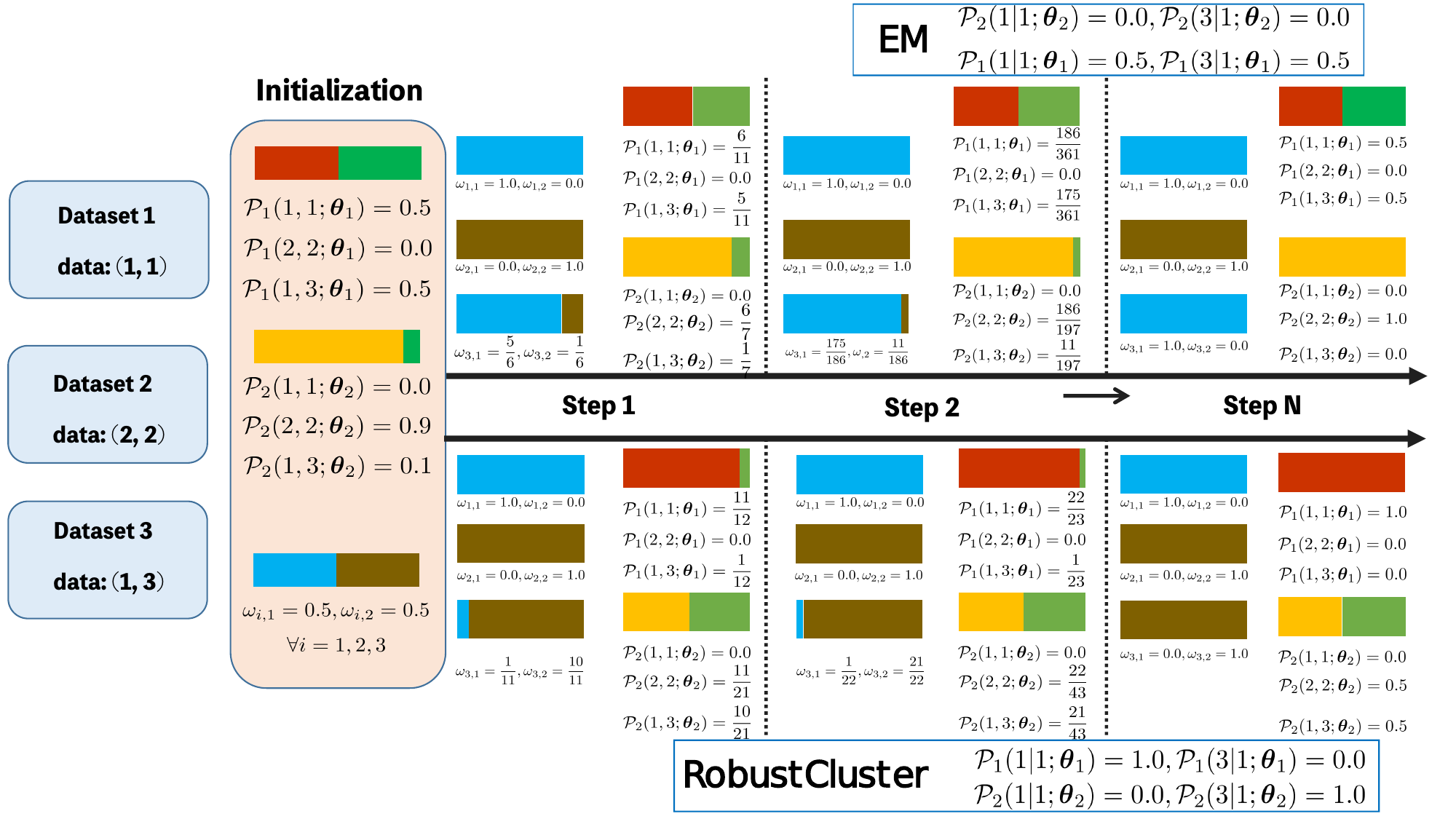}
    \caption{\small
        \textbf{Toy case compare \algopt and EM algorithms.}
        We create a simple example with three datasets that have an equal number of data points and compare results of EM algorithm and \algopt.
        We show $\omega_{i;k}$ and $\cP(\xx, y;\mtheta_k)$ at each step, as well as the conditional distribution $\cP(y|\xx)$ for classification tasks.
        EM algorithm has low classification results on dataset 1 and 3, but \algopt always finds a model $\mtheta_k$ where $\cP(y|\xx;\mtheta_k) = 1$ for all data in all datasets.
    }
    \label{fig:toy case}
\end{figure*}

\section{Experiment Details}

\subsection{Algorithm Framework of \algfed}

\label{sec: Algorithm Framework of algfed}

\begin{algorithm}[!t]
\caption{\small \algfed  Algorithm Framework}
    \begin{algorithmic}[1]
        \STATE {\bfseries Input:} The number of models $K$, initial parameters $\mathbf{\Theta}^{0} = [\mtheta_1^{0}, \cdots, \mtheta_{K}^{0}]$, global learning rate $\eta_g$, initial weights $\omega_{i;k}^{0} = \frac{1}{K}$ for any $i, k$, and $\gamma_{i, j; k} = \frac{1}{K}$ for any $i, j, k$, local learning rate $\eta_l$.
        \STATE {\bfseries Output:} Trained parameters $\mTheta^{T} = [\mtheta_1^{T}, \cdots, \mtheta_{K}^{T}]$ and weights $\omega_{i;k}^{T}$ for any $i, k$.
        \FOR{round $t = 1, \ldots, T$}
        \STATE  (Optional) \textbf{Adapt:} Check \& remove model via Algorithm~\ref{Check and Remove Algorithm Framework}. 
        \STATE \textbf{Communicate} $\mTheta^{t}$ to the chosen clients.
        \FOR{\textbf{client} $i \in \cS^t$ \textbf{in parallel}}
        \STATE Initialize local model $\mtheta_{i,k}^{t, 0} = \mtheta_{k}^{t}$, $\forall k$. 
        \STATE Update $\gamma_{i, j; k}^{t}, \omega_{i;k}^{t}$ by~\eqref{update gamma}, $\forall j, k$.
        \STATE Local update $\mtheta_{i,k}^{t, \cT}$ by~\eqref{gradient step} , $\forall k$
        \STATE\textbf{Communicate} $\Delta \mtheta^t_{i,k} \gets \mtheta_{i,k}^{t, \cT} - \mtheta_{i,k}^{t, 0}$, $\forall k$.
        \ENDFOR
        \STATE$\mtheta_{k}^{t+1} \gets \mtheta_{k}^{t} + \frac{\eta_g}{\sum_{i \in \cS^t} N_i} \sum_{i \in \cS^t} N_i \Delta \mtheta^t_{i,k}$, $\forall k$.
        \ENDFOR
    \end{algorithmic}
    \label{Algorithm Framework appendix}
\end{algorithm}

In Algorithm~\ref{Algorithm Framework appendix}, we summarize the whole process of \algfed. In detail, at the beginning of round $t$, the server transmits parameters $\mathbf{\Theta}^{t} = [\mtheta_1^{t}, \cdots, \mtheta_{K}^{t}]$ to clients. Clients then locally update $\gamma_{i, j; k}^{t+1}$ and $\omega_{i;k}^{t+1}$ using~\eqref{update gamma} and~\eqref{global gradient step}, respectively.
Then clients initialize $\mtheta_{i, k}^{t, 0} = \mtheta_{k}^{t}$, and update parameters $\mtheta_{i, k}^{t, \tau}$ for $\mathcal{T}$ local steps:
\begin{small}
    \begin{align}
        \textstyle
        \mtheta_{i\!, k}^{t\!, \tau} & \!=\! \mtheta_{i\!, k}^{t\!, \tau\!-\!1} \!-\! \frac{\eta_l}{N_i} \sum_{j\!=\!1}^{N_i} \gamma_{i\!, j; k}^{t} \nabla_{\mtheta} f_{i\!;k} (\xx_{i\!,j} \!, y_{i\!,j} \!, \mtheta_{i\!,k}^{t\!, \tau\!-\!1}) \, ,
        \label{gradient step appendix}
    \end{align}
\end{small}%
where $\tau$ is the current local iteration, and $\eta_l$ is the local learning rate.In our algorithm, the global aggregation step uses the FedAvg method as the default option.
However, other global aggregation methods e.g.~\citep{wang2020federated,wang2020tackling} can be implemented if desired. 
We summarize the optimization of \algfed in Figure~\ref{fig:optimization-process}, and include more details about the model prediction in Appendix~\ref{sec:model prediction}.

We also include the discussion about the convergence rate of \algfed in Appendix~\ref{sec: convergence of algfed}.

\begin{figure}
    \centering
    \includegraphics[width=1.\textwidth]{./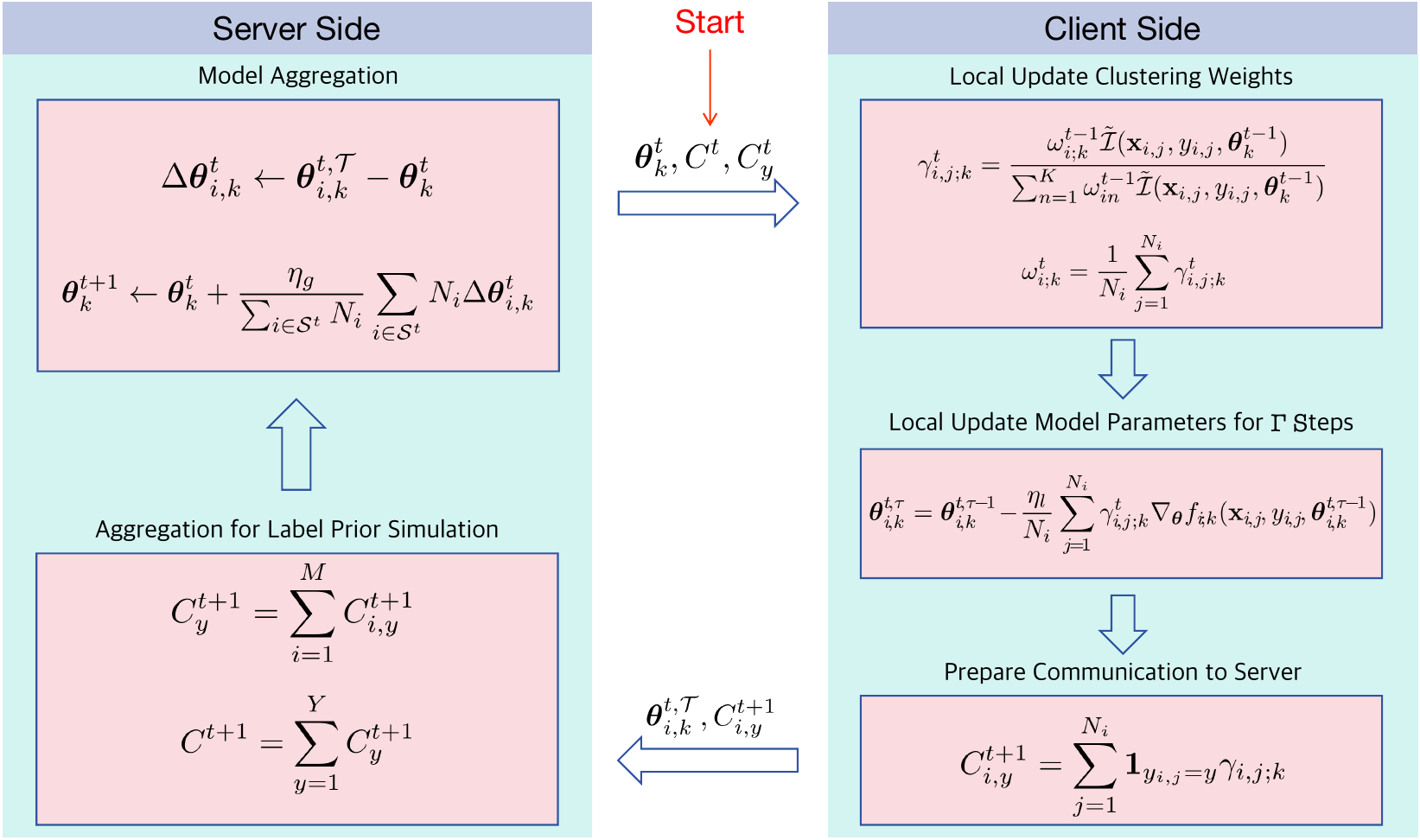}
    \caption{\textbf{Illustration of the optimization process of \algfed.}}
    \label{fig:optimization-process}
\end{figure}

\subsection{Model Prediction of \algfed for Supervised Tasks.}
\label{sec:model prediction}
The model prediction of \algfed for supervised tasks is performed using the same method as in FedEM~\citep{marfoq2021federated}.
I.e., given data $\xx$ for client $i$, the predicted label is calculated as
$
    y_{\text{pred}} = \sum_{k=1}^{K} \omega_{i;k} \sigma(m_{i,k}(\xx, \mtheta_k))
$,
where $m_{i,k}(\xx, \mtheta_k)$ is the output of model $\mtheta_k$ given data $\xx$, and $\sigma$ is the softmax function.
Note that the aforementioned $f_{i; k} (\xx, y; \mtheta_k)$ is the cross entropy loss of $\sigma(m_{i,k}(\xx, \mtheta_k))$ and $y$.
When new clients join the system, they must first perform the E-step to obtain the optimal weights $\gamma_{i, j; k}$ and $\omega_{i;k}$ by~\eqref{update gamma} on their local train (or validation) datasets before proceeding to test.

\subsection{Optimizing $\gamma_{i,j;k}$ for \algfed-Adam}

\label{sec: E Steps of algfed-Adam}

We summarize the optimization steps of \algfed-Adam on $\gamma_{i,j;k}$ as follows:
\begin{align}
    \tilde{\gamma}_{i,j;k}^{t} & = \frac{\omega_{i;k}^{t-1} \tilde{\cI} (\xx_{i,j}, y_{i,j}, \mtheta_k) }{\sum_{n=1}^{K} \omega_{i;n}^{t-1} \tilde{\cI} (\xx_{i,j}, y_{i,j}, \mtheta_k) } \, , \label{adam update prior start} \\
    g_{i,j;k}                  & = \gamma_{i,j;k}^{t-1} - \tilde{\gamma}_{i,j;k}^{t} \, ,                                                                                                                                      \\
    \nu_{i,j;k}^{t}            & = (1 - \beta_1 ) g_{i,j;k} + \beta_1 \nu_{i,j;k}^{t-1} \, ,                                                                                                                                   \\
    a_{i,j;k}^{t}              & = (1 - \beta_2 ) g_{i,j;k}^{2} + \beta_2 a_{i,j;k}^{t-1} \, ,                                                                                                                                 \\
    \gamma_{i,j;k}^{t}         & = \gamma_{i,j;k}^{t-1} - \alpha \frac{\nu_{i,j;k}^{t} / (1 - \beta_1)}{ \sqrt{a_{i,j;k}^{t} / (1 - \beta_2)} + \epsilon }        \, ,                                                         \\
    \gamma_{i,j;k}^{t}         & = \frac{max( \gamma_{i,j;k}^{t} , 0)}{\sum_{n=1}^{K} max(\gamma_{i,j;n}^{t}, 0)} \, ,
    \label{adam update prior}
\end{align}
where we set $\alpha = 1.0$, $\beta_1 = 0.9$, $\beta_2 = 0.99$, and $\epsilon = 1e-8$ in practice by default.
The Equation~\eqref{adam update prior} is to avoid $\gamma_{i,j;k}^{t} < 0$ after adding the momentum.

\subsection{Removing the Models in Adaptive Process}

\label{sec:remove the models}

In Algorithm~\ref{Check and Remove Algorithm Framework}, we show how to remove the models once we decide the model could be removed. Once the model $\mtheta_k$ is decided to be removed, the server will broadcast to all the clients, and clients will remove the local models, and normalize $\gamma_{i,j;k}$ and $\omega_{i;k}$ by Line 7 and Line 9 in Algorithm~\ref{Check and Remove Algorithm Framework}.

\begin{algorithm}[!t]
\caption{\small Check and remove model in \algfed}
    \begin{algorithmic}[1]
        \STATE {\bfseries Input:} Threshold value $\delta$, number of clients $M$, number of data in each client $N_i$, number of model $K$, $\gamma_k = \frac{1}{N} \sum_{i=1}^{M} \sum_{j=1}^{N_i} \gamma_{i,j;k}$ for each $k$.
        \FOR{model $k = 1, \ldots, K$}
        \IF{$\frac{1}{N} \sum_{i=1}^{M} \sum_{j=1}^{N_i} \gamma_{i,j;k} < \delta$}
        \STATE Remove model $k$.
        \FOR{client $i = 1, \ldots, M$}
        \STATE Remove $\gamma_{i,j;k}$ for all $j$.
        \FOR{client $\tau = 1, \ldots, K, \tau \not = k$}
        \STATE $\gamma_{i,j;\tau} = \frac{\gamma_{i,j; \tau}}{\sum_{\tau^{'}} \gamma_{i,j;\tau^{'}}}$ for all $j$.
        \ENDFOR
        \STATE Remove $\omega_{i;k}$.
        \STATE Update $\omega_{i;\tau}$ by Equation~\eqref{update gamma} for all $\tau \not = k$.
        \ENDFOR
        \ENDIF
        \ENDFOR
    \end{algorithmic}
    \label{Check and Remove Algorithm Framework}
\end{algorithm}

\subsection{Framework and Baseline Algorithms}
\label{sec:Framework and Baseline Algorithms}
We extend the public code provided by \citep{marfoq2021federated} in this work. For personalized algorithms that don't have a global model, we average the local models to create a global model and evaluate it on global test datasets. When testing non-participating clients on clustered FL algorithms (IFCA, CFL, and FeSEM), we assign them to the cluster they performed best on during training.

\subsection{Experimental Environment} For all experiments, we use NVIDIA GeForce RTX 3090 GPUs. Each simulation trail with 200 communication rounds and 3 clusters takes about 9 hours.

\subsection{Models and Hyper-parameter Settings}

We use a three-layer CNN for the FashionMNIST dataset, and use pre-trained MobileNetV2~\citep{sandler2018mobilenetv2} for CIFAR10 and CIFAR100 datasets.
We set the batch size to 128, and run 1 local epoch in each communication round by default.
We use SGD optimizer and set the momentum to 0.9.
The learning rates are chosen in $[0.01, 0.03, 0.06, 0.1]$, and we run each algorithm for 200 communication rounds, and report the best result of each algorithm.
We also include the results of CIFAR10 and CIFAR100 datasets using ResNet18~\citep{he2016deep} in Table~\ref{tab:Performance of algorithms on Resnet18}.
For each clustered FL algorithm (including \algfed), we initialize 3 models by default following the ideal case.
We also investigate the adaptive process by initialize 6 models at the beginning in Figure~\ref{fig:adaptive process results}.

\subsection{Datasets Construction}

\label{sec: Datasets Construction}

\paragraph{FashionMNIST dataset.}
We split the FashionMNIST dataset into 300 groups of clients using LDA with a value of 1.0 for alpha.
$30\%$ of the clients will not have any changes to their images or labels.
For $20\%$ of the clients, labels will not be changed, and we will add synthetic corruptions following the idea of FashionMNIST-C with a random level of severity from 1 to 5.
$25\%$ of the clients will have their labels changed to $C-1-y$, where $C$ is the number of classes, and $y$ is the true label. We will also add synthetic corruption to $20\%$ of these clients ($5\%$ of 300 clients).
Finally, $25\%$ of the clients will have their labels changed to $(y + 1) \mod C$, and we will also add synthetic corruptions to $20\%$ of these clients.

\paragraph{CIFAR10 and CIFAR100 datasets.}
For the MobileNetV2 model, we divided the dataset into 300 smaller groups of data (called "clients"). To ensure that each group had enough data to be well-trained, we first made three copies of the original dataset. Then, we used a technique called Latent Dirichlet Allocation (LDA) with a parameter of $\alpha=1.0$ to divide the dataset into the 300 clients. For the ResNet18 model, we also used LDA with $\alpha=1.0$, but we divided the dataset into 100 clients without making any copies.
Then $30\%$ of the clients will not have any changes to their images or labels.
For $20\%$ of the clients, labels will not be changed, and we will add synthetic corruptions following the idea of FashionMNIST-C with a random level of severity from 1 to 5.
$25\%$ of the clients will have their labels changed to $C-1-y$, where $C$ is the number of classes, and $y$ is the true label. We will also add synthetic corruption to $20\%$ of these clients ($5\%$ of 300 clients).
Finally, $25\%$ of the clients will have their labels changed to $(y + 1) \mod C$, and we will also add synthetic corruptions to $20\%$ of these clients.

\paragraph{Test clients construction.} We directly use the test datasets provided by FashionMNIST, CIFAR10, and CIFAR100 datasets, and construct 3 test clients. The labels for the first client remain unchanged. For the second client, the labels are changed to $C-1-y$, and for the third client, the labels are changed to $(y + 1) \mod C$.

\section{Additional Experiment Results}

\label{sec:Additional Experiment Results}

\subsection{Results on Real-World Concept Shift Datasets}

In this section, we include two real-world datasets that naturally have concept shifts, including Airline and Electricity datasets, and the prepossess we used is the same as \citep{tahmasbi2021driftsurf}:
\begin{itemize}[leftmargin=12pt,nosep]
    \item \textbf{Airline~\citep{ikonomovska2020airline}:} The real-world dataset used in this study comprises flight schedule records and binary class labels indicating whether a flight was delayed or not. Concept drift may arise due to changes in flight schedules, such as alterations in the day, time, or duration of flights. For our experiments, we utilized the initial 58100 data points of the dataset. The dataset includes 13 features.
    \item \textbf{Electricity~\citep{harries1999splice}:} The real-world dataset utilized in this study includes records from the New South Wales Electricity Market in Australia, with binary class labels indicating whether the price changed (i.e., up or down). Concept drift in this dataset may arise from shifts in consumption patterns or unforeseen events.
\end{itemize}
For each dataset, we first construct two test datasets that have concept shifts with each other, and have balanced label distribution. For the remaining data, we split the dataset to 300 clients using LDA with $\alpha = 0.1$.
We use a three layer MLP for training, and initialize 3 models for all the algorithms. We set the batch-size to 128, learning rate to $0.01$, and run algorithms for 500 communication rounds. Results in Table~\ref{tab:Results of algorithms on Airline and Electricity} show \algfed outperform other clustered FL algorithms significantly on these two datasets.

\begin{table}[!h]
    \centering
    \caption{\textbf{Results of algorithms on Airline and Electricity datasets.} We report the best mean accuracy on test datasets. }
    \begin{tabular}{c c c}
        \toprule
        Algorithms & Airline        & Electricity    \\
        \midrule
        FedAvg     & 51.69          & 48.83          \\
        FeSEM      & 51.25          & 50.21          \\
        IFCA       & 50.14          & 49.67          \\
        FedEM      & 50.36          & 54.79          \\
        \algfed    & \textbf{59.14} & \textbf{60.92} \\
        \bottomrule
    \end{tabular}
    \label{tab:Results of algorithms on Airline and Electricity}
\end{table}

\subsection{Visualization of Cluster Division Results}

In this section, we provide the comprehensive visualization of the cluster division results of FedRC. The experiment settings are the same to Section~\ref{sec: experiment settings}.

\paragraph{Cluster division results of baseline algorithms on CIFAR10.} 
In Figure~\ref{fig:clustering results appendix} and Figure~\ref{fig:clustering results of algfed appendix}, we present the cluster division results of both existing methods and \algfed on CIFAR-10 datasets. The results reveal that: (1) existing methods tend to assign data samples with the same labels to the same clusters (refer to Figures~\ref{fig:fesem-classes-appendix},~\ref{fig:fedem-classes-appendix},~\ref{fig:ifca-classes-appendix}); (2) existing methods are ineffective in assigning data samples with concept shifts to different clusters; instead, samples with the same concepts are uniformly distributed across various clusters (refer to Figures~\ref{fig:fesem-concepts-appendix},~\ref{fig:fedem-concepts-appendix},~\ref{fig:ifca-concepts-appendix},~\ref{fig:fedsoft-concepts-appendix}); and (3) the \algfed correctly assign samples with concept shifts into different clusters (refer to Figures~\ref{fig:clustering results of algfed on concept appendix},~\ref{fig:clustering results of algfed on concept 4 appendix}).

\paragraph{The effectiveness of \algfed remains on various datasets. }

In Figure~\ref{fig:clustering-algfed-various-datasets}, we present the cluster division results of \algfed for the CIFAR10, CIFAR100, and Tiny-ImageNet datasets. The findings demonstrate that \algfed consistently assigns data samples with concept shifts to distinct clusters.

\begin{figure}[!t]
    \centering
    \vspace{-1em}
    \subfigure[$K=3$, w.r.t. class]{
        \includegraphics[width=0.31\textwidth]{./figures/cluster-stats-label-conceptEM-scatter.pdf}
        \label{fig:clustering results of algfed on label appendix}
    }
    \subfigure[$K=3$, w.r.t. feature]{
        \includegraphics[width=0.31\textwidth]{./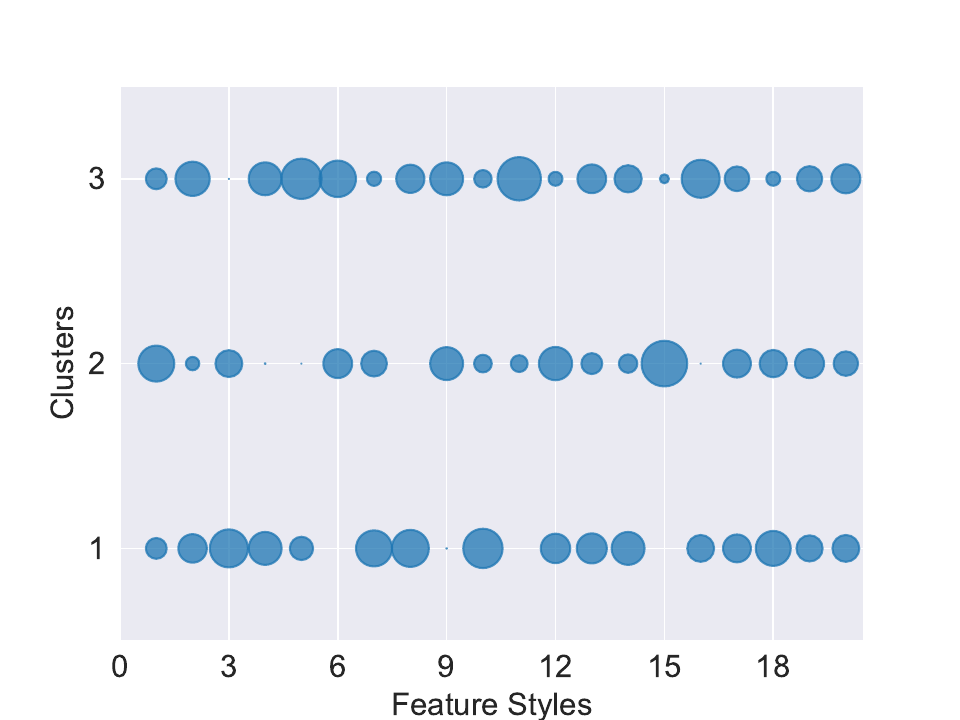}
        \label{fig:clustering results of algfed on feature appendix}
    }
    \subfigure[$K=3$, w.r.t. concept]{
        \includegraphics[width=0.31\textwidth]{./figures/cluster-stats-concept-conceptEM-scatter.pdf}
        \label{fig:clustering results of algfed on concept appendix}
    }
    \\
    \subfigure[$K=4$,  w.r.t. class]{
        \includegraphics[width=0.31\textwidth]{./figures/cluster-stats-label-conceptEM-4-scatter.pdf}
        \label{fig:clustering results of algfed on label 4 appendix}
    }
    \subfigure[$K=4$, w.r.t. feature]{
        \includegraphics[width=0.31\textwidth]{./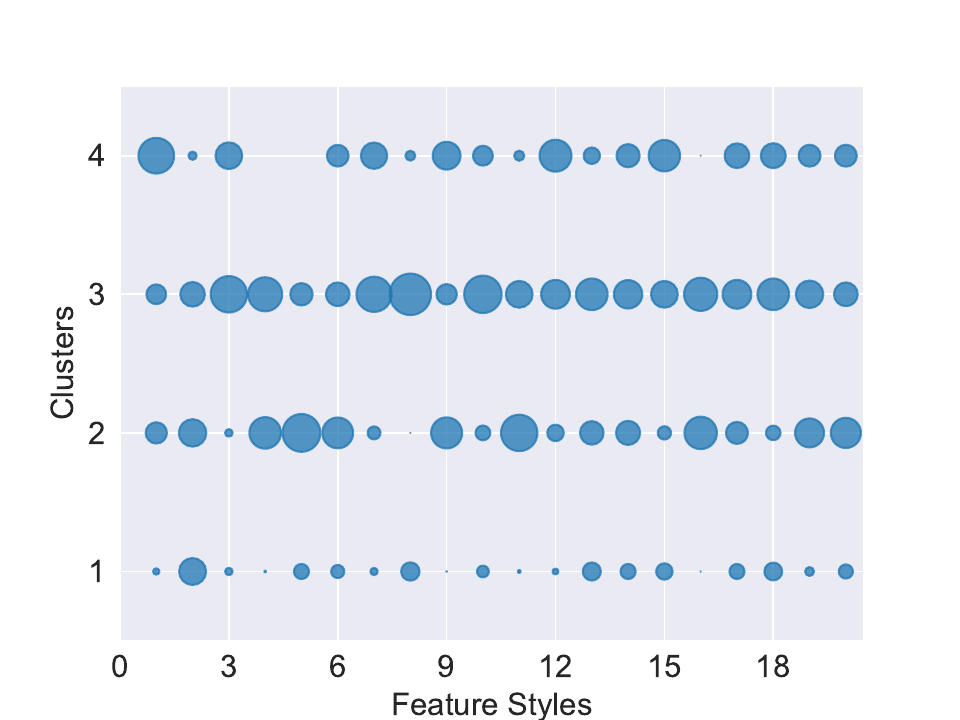}
        \label{fig:clustering results of algfed on feature 4 appendix}
    }
    \subfigure[$K=4$, w.r.t. concept]{
        \includegraphics[width=0.31\textwidth]{./figures/cluster-stats-concept-conceptEM-4-scatter.pdf}
        \label{fig:clustering results of algfed on concept 4 appendix}
    }
    \caption{\textbf{Clustering results of \algfed w.r.t. classes/feature styles/concepts.} A larger circle size indicates that more data points associated with a class, feature style, or concept are assigned to cluster $k$. The number of clusters is selected from the range $[3, 4]$, and the number of concepts is consistently set to 3 for all experiments.}
    \label{fig:clustering results of algfed appendix}
\end{figure}

\begin{figure}[!t]
    \centering
    \vspace{-1em}
    \subfigure[FeSEM, w.r.t. classes]{
        \includegraphics[width=.31\textwidth]{./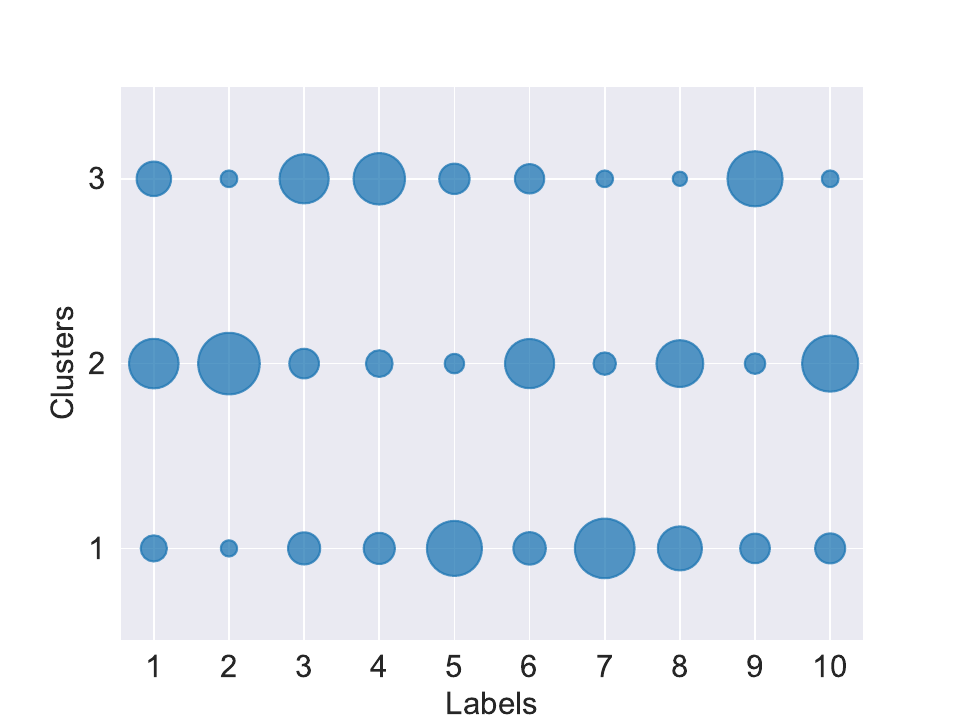}
        \label{fig:fesem-classes-appendix}
    }
    \subfigure[FeSEM, feature styles]{
        \includegraphics[width=.31\textwidth]{./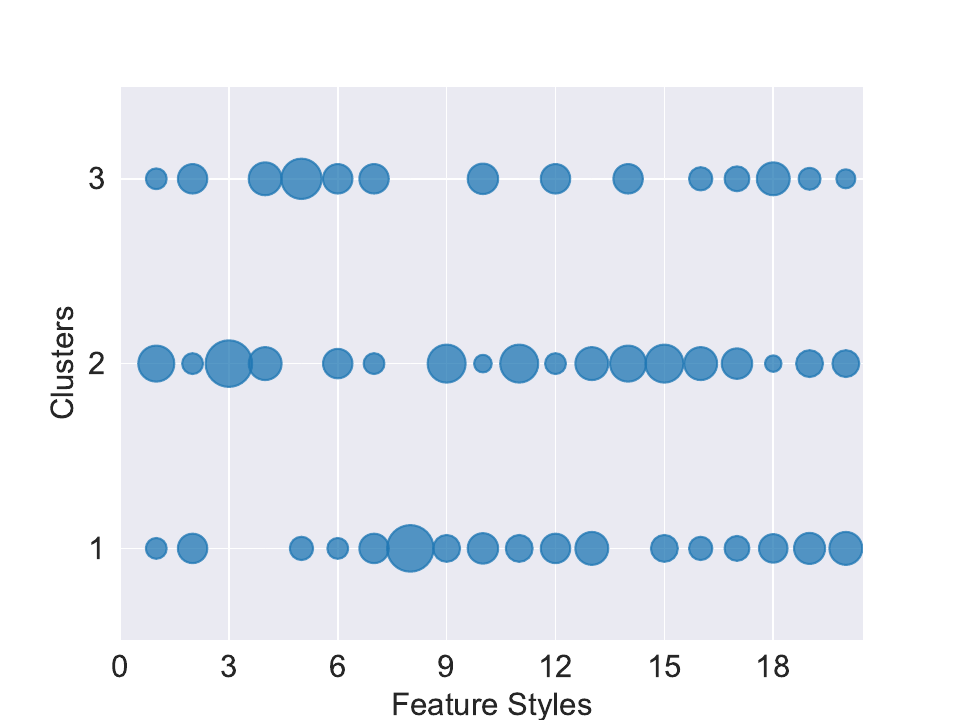}
        \label{fig:fesem-features-appendix}
    }
    \subfigure[FeSEM, w.r.t. concepts]{
        \includegraphics[width=.31\textwidth]{./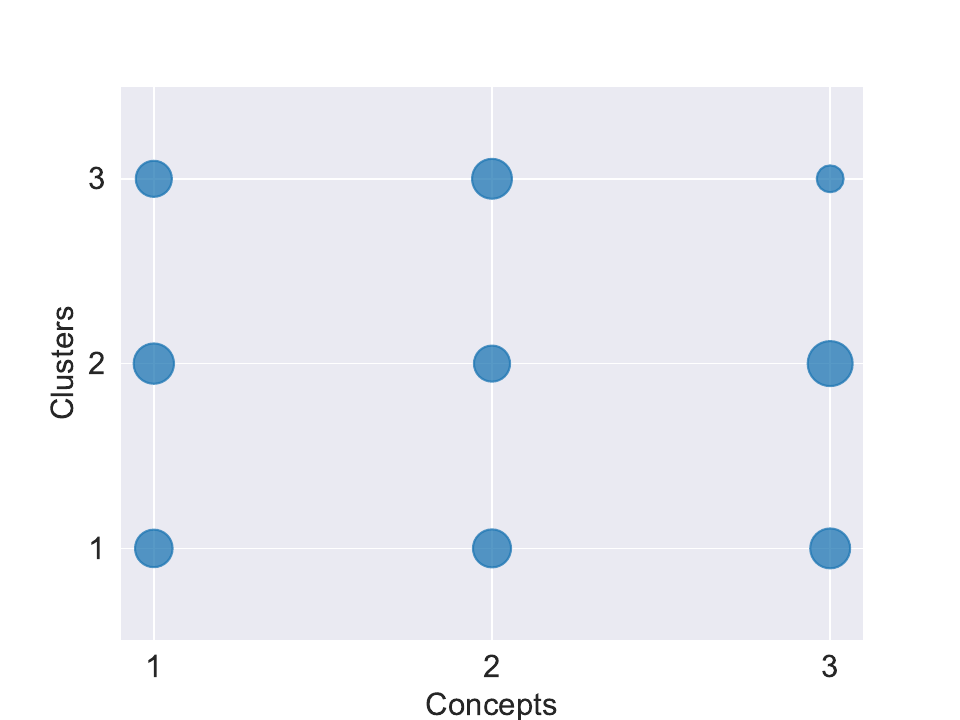}
        \label{fig:fesem-concepts-appendix}
    }
    \\
    \subfigure[FedEM, w.r.t. classes]{
        \includegraphics[width=.31\textwidth]{./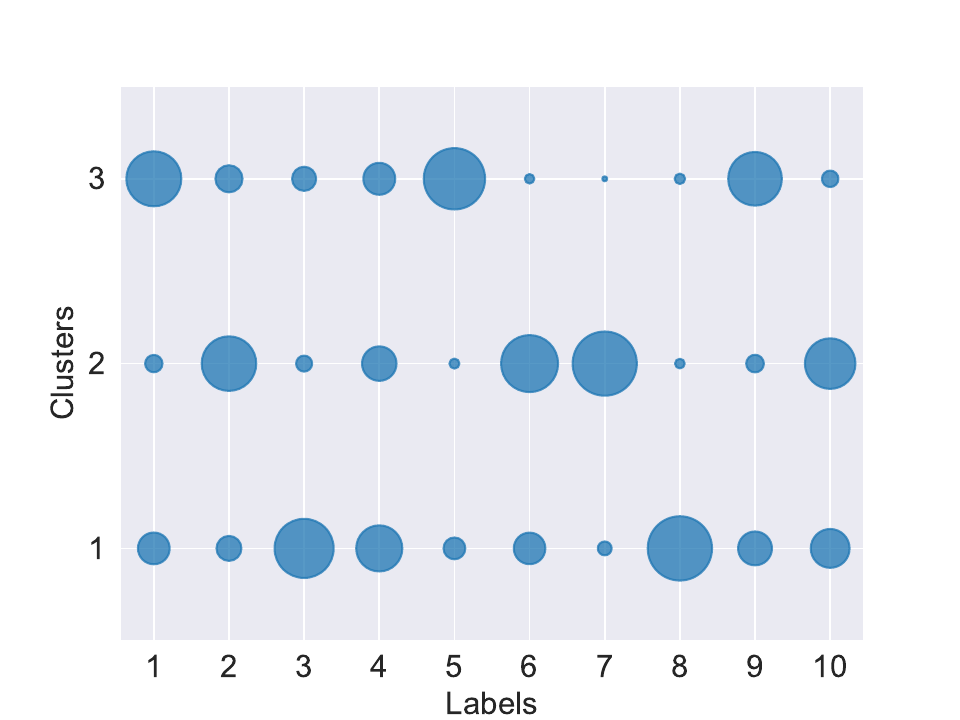}
        \label{fig:fedem-classes-appendix}
    }
    \subfigure[FedEM, feature styles]{
        \includegraphics[width=.31\textwidth]{./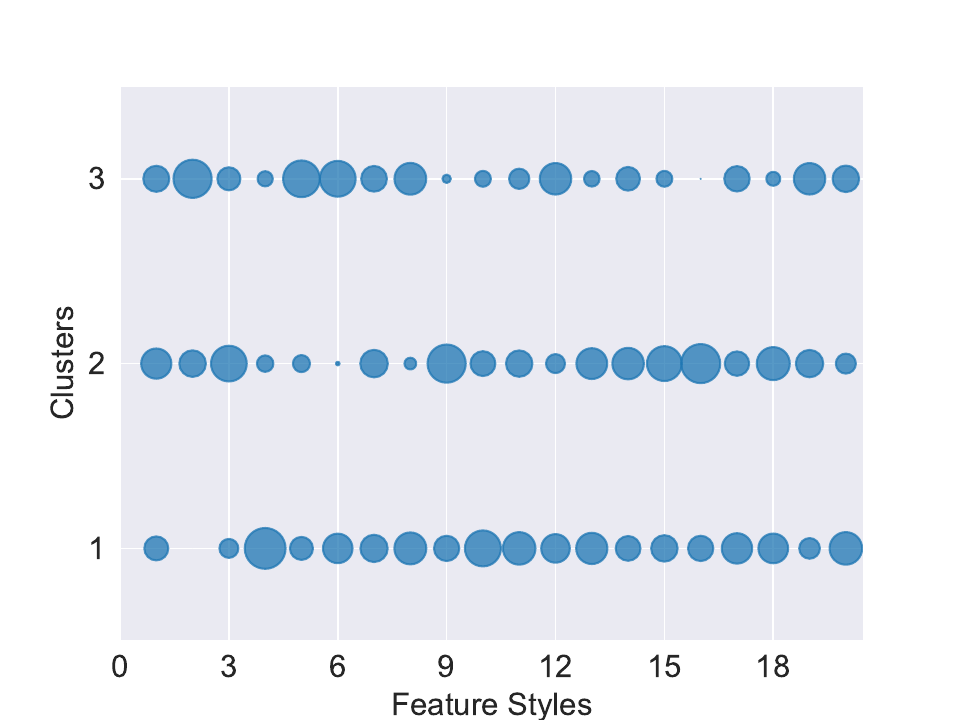}
        \label{fig:fedem-features-appendix}
    }
    \subfigure[FedEM, w.r.t. concepts]{
        \includegraphics[width=.31\textwidth]{./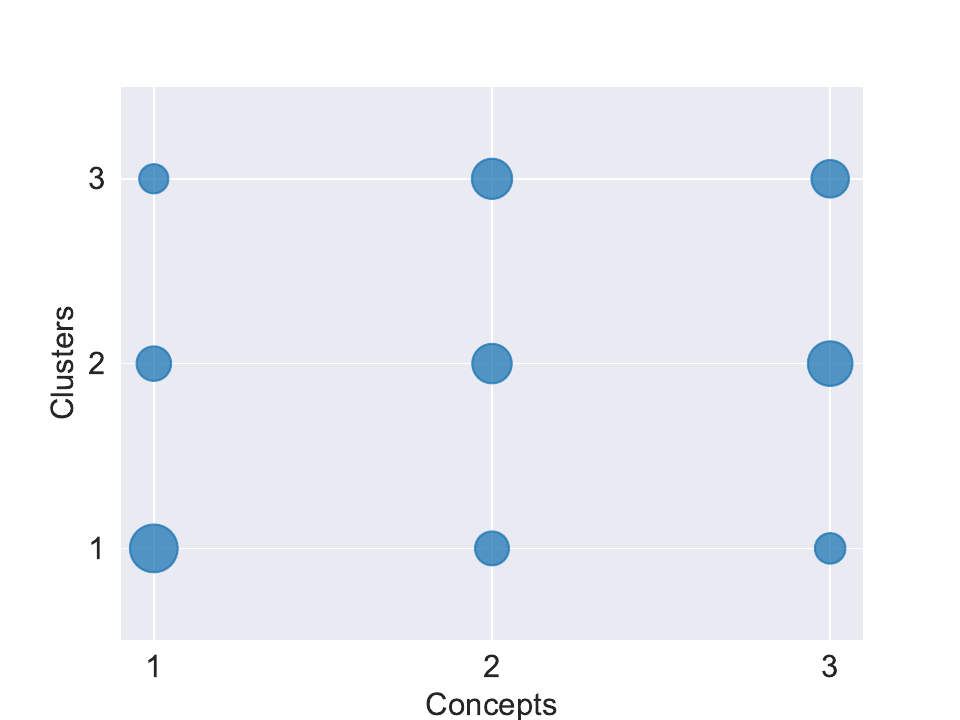}
        \label{fig:fedem-concepts-appendix}
    }
    \\
    \subfigure[IFCA, w.r.t. classes]{
        \includegraphics[width=.31\textwidth]{./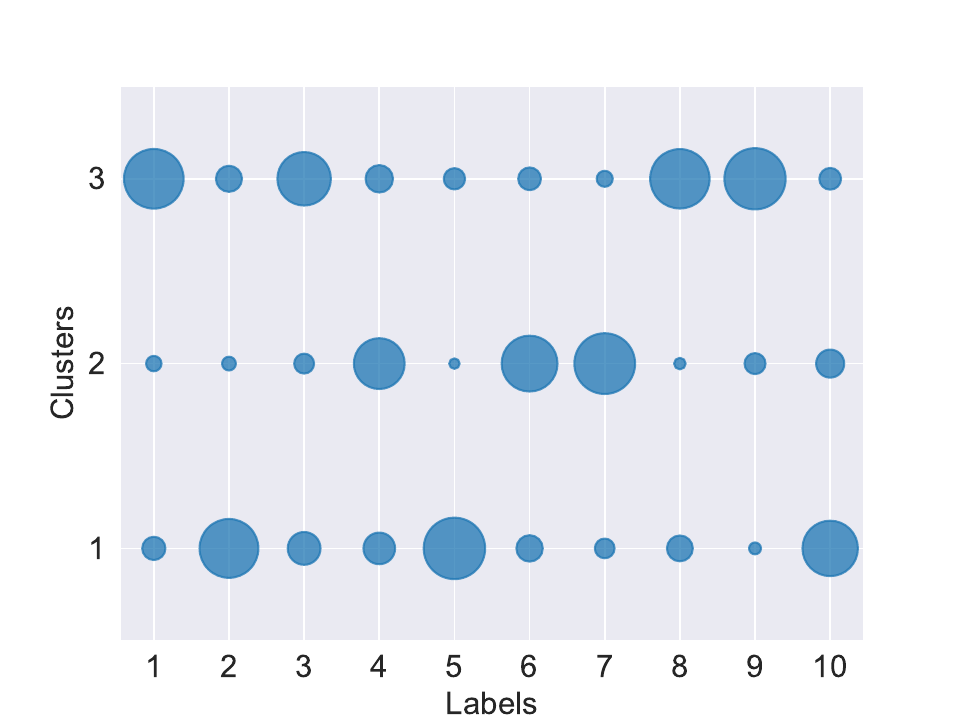}
        \label{fig:ifca-classes-appendix}
    }
    \subfigure[IFCA, feature styles]{
        \includegraphics[width=.31\textwidth]{./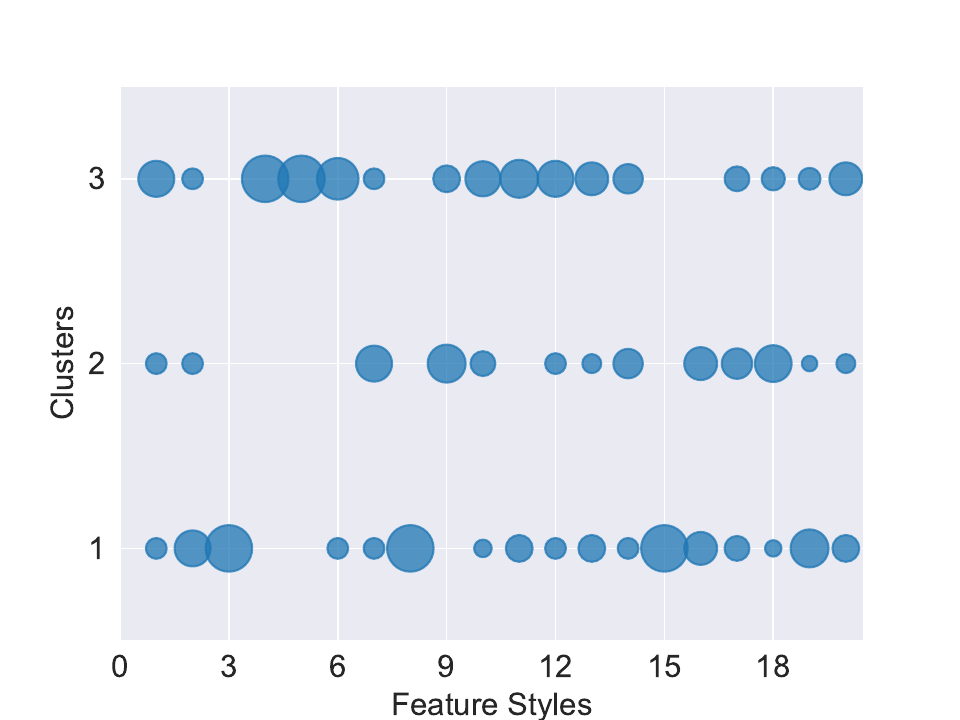}
        \label{fig:ifca-features-appendix}
    }
    \subfigure[IFCA, w.r.t. concepts]{
        \includegraphics[width=.31\textwidth]{./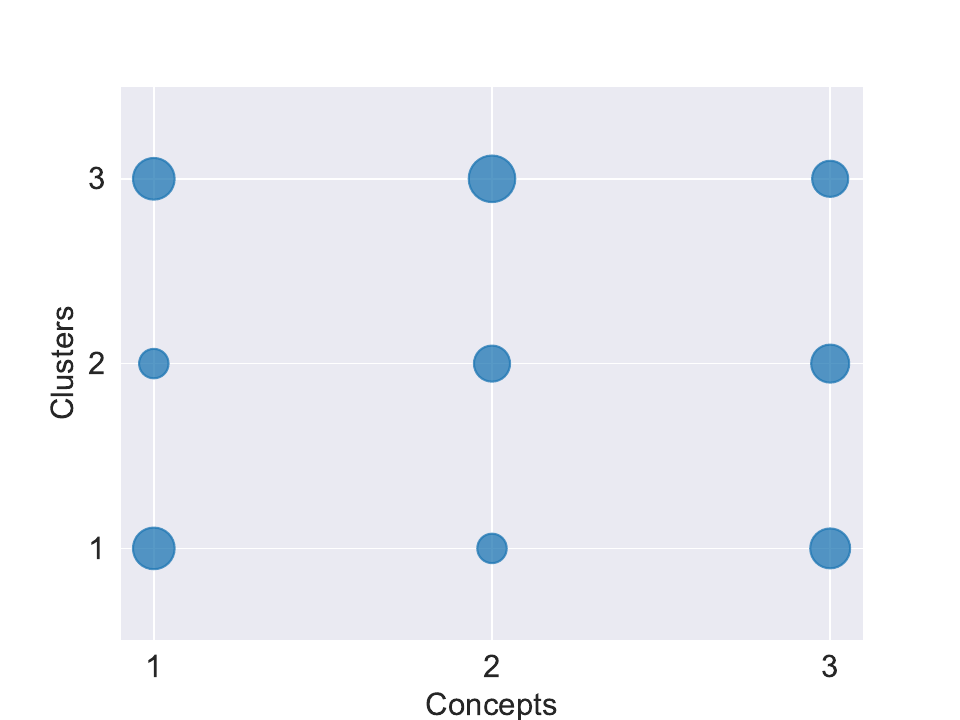}
        \label{fig:ifca-concepts-appendix}
    }
    \\
    \subfigure[FedSoft, w.r.t. classes]{
        \includegraphics[width=.31\textwidth]{./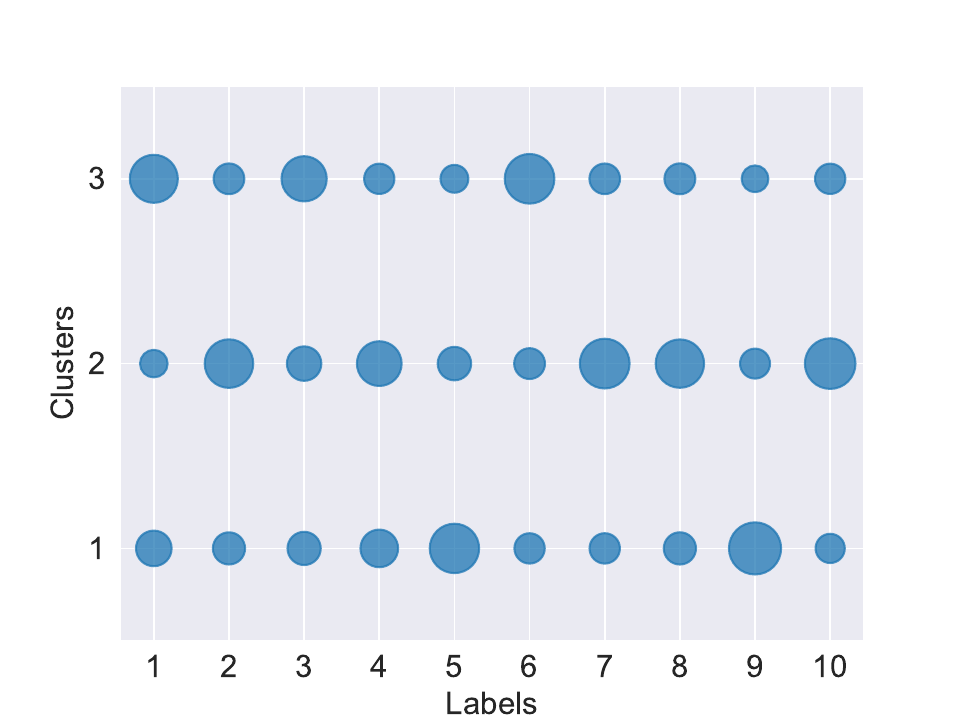}
        \label{fig:fedsoft-classes-appendix}
    }
    \subfigure[FedSoft, feature styles]{
        \includegraphics[width=.31\textwidth]{./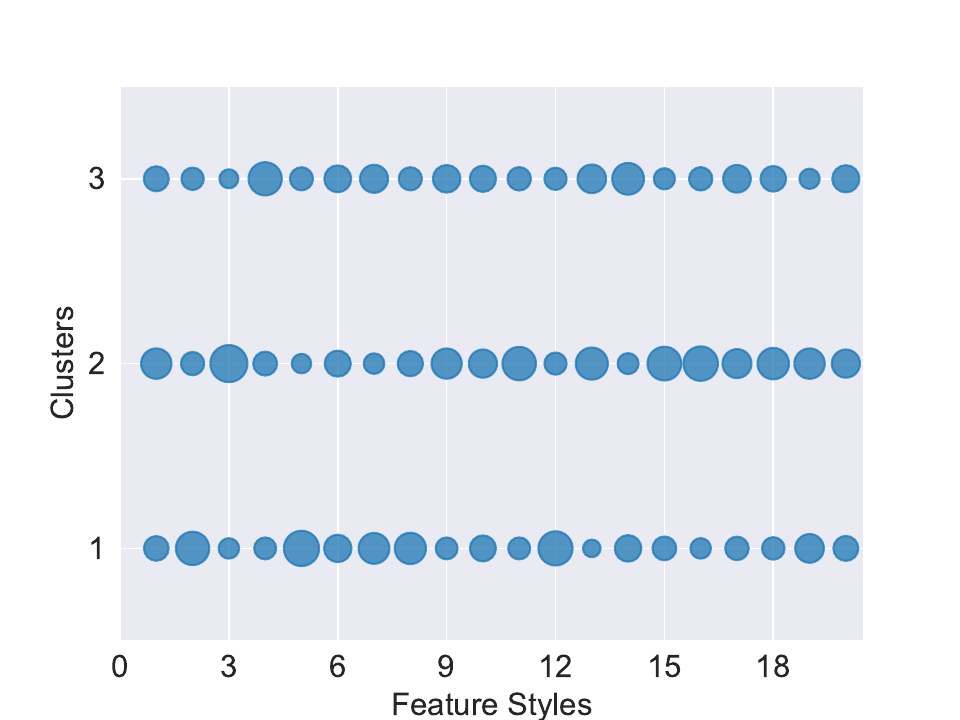}
        \label{fig:fedsoft-features-appendix}
    }
    \subfigure[FedSoft, w.r.t. concepts]{
        \includegraphics[width=.31\textwidth]{./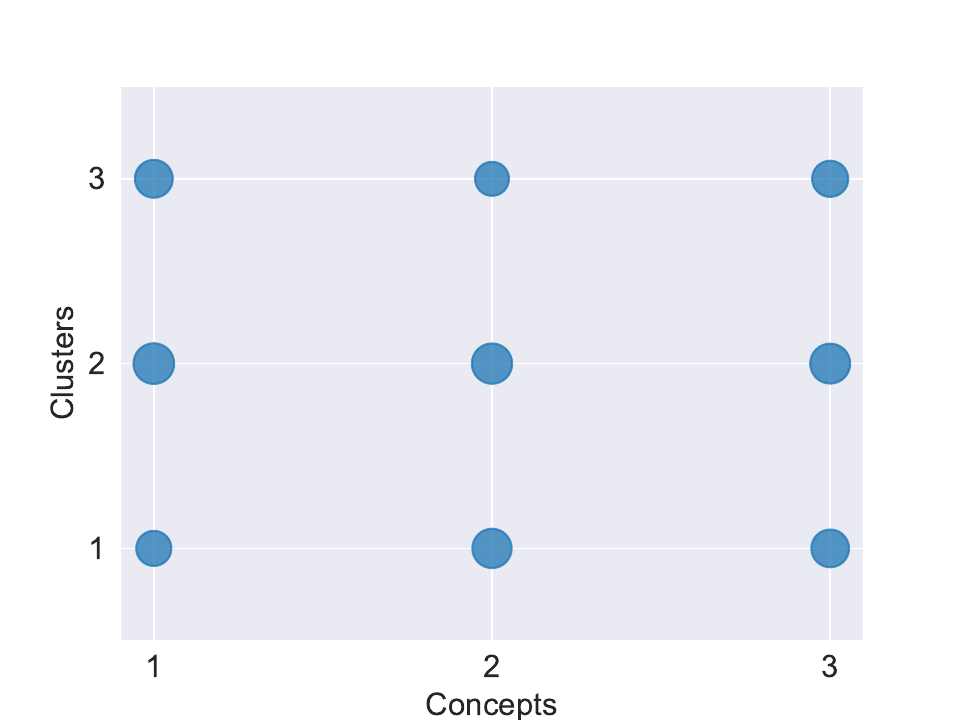}
        \label{fig:fedsoft-concepts-appendix}
    }
    \caption{\textbf{Clustering results w.r.t. classes/feature styles/concepts.} After data construction, each data $\xx$ will have a class $y$, feature style $f$, and concept $c$. We report the percentage of data points associated with a class, feature style, or concept are assigned to cluster k.
        For example, for a circle centered at position $(y, k)$, a larger circle size signifies that more data points with class $y$ are assigned to cluster $k$. }
    \vspace{-1em}
    \label{fig:clustering results appendix}
\end{figure}

\begin{figure}[!t]
    \centering
    \vspace{-1em}
    \subfigure[CIFAR10]{
        \includegraphics[width=.31\textwidth]{./figures/cluster-stats-concept-conceptEM-scatter.pdf}
        \label{fig:algfed-concepts-appendix-cifar10}
    }
    \subfigure[CIFAR100]{
        \includegraphics[width=.31\textwidth]{./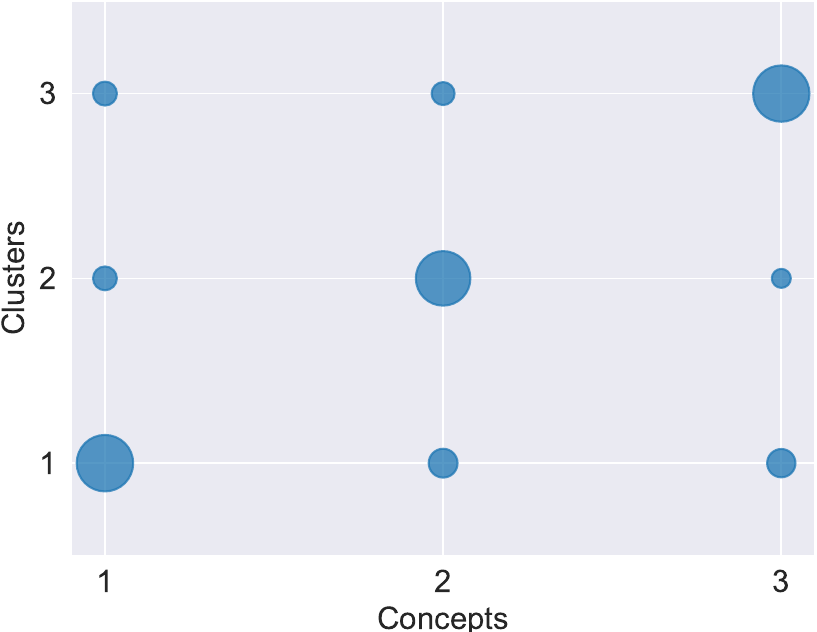}
        \label{fig:algfed-concepts-appendix-cifar100}
    }
    \subfigure[Tiny-ImageNet]{
        \includegraphics[width=.31\textwidth]{./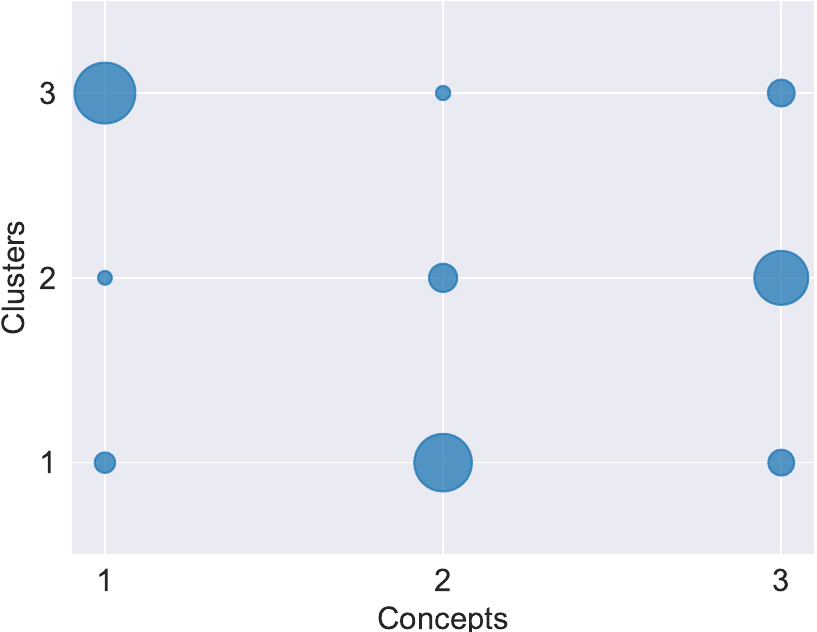}
        \label{fig:algfed-concepts-appendix-tiny-imagenet}
    }
    \caption{\textbf{Clustering results of \algfed w.r.t. concepts on various datasets.} After data construction, each data $\xx$ will have a class $y$, feature style $f$, and concept $c$. We report the percentage of data points associated with the concepts that are assigned to cluster k.
        For example, for a circle centered at position $(c, k)$, a larger circle size signifies that more data points with concept $c$ are assigned to cluster $k$. }
    \vspace{-1em}
    \label{fig:clustering-algfed-various-datasets}
\end{figure}

\subsection{Ablation Study on \algfed}
\label{sec:ablation study appendix}

In this section, we present additional findings from our ablation study on local steps, concept count, and client involvement.

\paragraph{Results with additional PFL baselines.} In Table~\ref{tab:Performance of algorithms on mobileNetV2 appendix}, we include more personalized FL methods, including local, pFedMe~\citep{t2020personalized}, and APFL~\citep{deng2020adaptive}. Results show personalized FL methods struggle to generalize to global distributions, in contrast to the \algfedt constructed by fine-tuning \algfed for one local epoch can achieve comparable performance with personalized FL methods on local accuracy while maintaining the high global performance.

\begin{table*}[!t]
    \centering
    \caption{\small
        \textbf{Performance of algorithms over various datasets and neural architectures.}
        We evaluated the performance of our algorithms using the FashionMNIST, CIFAR10 and CIFAR100 datasets split into 300 clients.
        We initialized 3 models for clustered FL methods and reported mean local and global test accuracy on the round that achieved the best train accuracy for each algorithm.
        We report \algfedt by fine-tuning \algfed for one local epoch; ``CFL (3)'' refers to restricting the number of models in CFL~\citep{sattler2020byzantine} to 3.
        We highlight the best and the second best results for each of the two main blocks, using \textbf{bold font} and \textcolor{blue}{blue text}.
        \looseness=-1
    }
    \resizebox{1.0\textwidth}{!}{
        \begin{tabular}{c c c c c c c c}
            \toprule
            \multirow{2}{*}{Algorithm} & \multicolumn{2}{c}{FashionMNIST (CNN)}                         & \multicolumn{2}{c}{CIFAR10 (MobileNetV2)}                      & \multicolumn{2}{c}{CIFAR100 (MobileNetV2)}                                                                                                                                                                                                                       \\
            \cmidrule(lr){2-3} \cmidrule(lr){4-5} \cmidrule(lr){6-7}
                                       & Local                                                          & Global                                                         & Local                                                          & Global                                                         & Local                                                          & Global                                                        \\
            \midrule
            FedAvg                     & $42.12$ \small{\transparent{0.5} $\pm 0.33$}                   & $34.35$ \small{\transparent{0.5} $\pm 0.92$}                   & $30.28$ \small{\transparent{0.5} $\pm 0.38$}                   & $30.47$ \small{\transparent{0.5} $\pm 0.76$}                   & $12.72$ \small{\transparent{0.5} $\pm 0.82$}                   & $10.53$ \small{\transparent{0.5} $\pm 0.57$}                  \\
            \midrule
            IFCA                       & $47.90$ \small{\transparent{0.5} $\pm 0.60$}                   & $31.30$ \small{\transparent{0.5} $\pm 2.69$}                   & $43.76$ \small{\transparent{0.5} $\pm 0.40$}                   & $26.62$ \small{\transparent{0.5} $\pm 3.34$}                   & $17.46$ \small{\transparent{0.5} $\pm 0.10$}                   & $9.12$ \small{\transparent{0.5} $\pm 0.78$}                   \\
            CFL (3)                    & $41.77$ \small{\transparent{0.5} $\pm 0.40$}                   & $33.53$ \small{\transparent{0.5} $\pm 0.35$}                   & $41.49$ \small{\transparent{0.5} $\pm 0.64$}                   & $29.12$ \small{\transparent{0.5} $\pm 0.02$}                   & $\mathbf{26.36}$ \small{\transparent{0.5} $\pm 0.33$}          & $7.15$ \small{\transparent{0.5} $\pm 1.10$}                   \\
            FeSEM                      & \textcolor{blue}{$60.99$} \small{\transparent{0.5} $\pm 1.01$} & \textcolor{blue}{$47.63$} \small{\transparent{0.5} $\pm 0.99$} & $45.32$ \small{\transparent{0.5} $\pm 0.16$}                   & $30.79$ \small{\transparent{0.5} $\pm 0.02$}                   & $18.46$ \small{\transparent{0.5} $\pm 3.96$}                   & \textcolor{blue}{$9.76$} \small{\transparent{0.5} $\pm 0.64$} \\
            FedEM                      & $56.64$ \small{\transparent{0.5} $\pm 2.14$}                   & $28.08$ \small{\transparent{0.5} $\pm 0.92$}                   & \textcolor{blue}{$51.31$} \small{\transparent{0.5} $\pm 0.97$} & \textcolor{blue}{$43.35$} \small{\transparent{0.5} $\pm 2.29$} & $17.95$ \small{\transparent{0.5} $\pm 0.08$}                   & $9.72$ \small{\transparent{0.5} $\pm 0.22$}                   \\
            \algfed                    & $\mathbf{66.51}$ \small{\transparent{0.5} $\pm 2.39$}          & $\mathbf{59.00}$ \small{\transparent{0.5} $\pm 4.91$}          & $\mathbf{62.74}$ \small{\transparent{0.5} $\pm 2.37$}          & $\mathbf{63.83}$ \small{\transparent{0.5} $\pm 2.26$}          & \textcolor{blue}{$21.64$} \small{\transparent{0.5} $\pm 0.33$} & $\mathbf{18.72}$ \small{\transparent{0.5} $\pm 1.90$}         \\
            \midrule
            local                      & $92.79$ \small{\transparent{0.5} $\pm 0.58$}                   & $12.92$ \small{\transparent{0.5} $\pm 4.93$}                   & $82.71$ \small{\transparent{0.5} $\pm 0.25$}                   & $10.59$ \small{\transparent{0.5} $\pm 0.69$}                   & $86.58$ \small{\transparent{0.5} $\pm 0.08$}                   & $1.00$ \small{\transparent{0.5} $\pm 0.18$}                   \\
            pFedMe                     & \textcolor{blue}{$93.13$} \small{\transparent{0.5} $\pm 0.32$} & $14.77$ \small{\transparent{0.5} $\pm 3.39$}                   & $82.63$ \small{\transparent{0.5} $\pm 0.05$}                   & $9.87$ \small{\transparent{0.5} $\pm 0.85$}                    & $86.83$ \small{\transparent{0.5} $\pm 0.23$}                   & $1.20$ \small{\transparent{0.5} $\pm 0.04$}                   \\
            APFL                       & $\mathbf{93.29}$ \small{\transparent{0.5} $\pm 0.16$}          & \textcolor{blue}{$33.40$} \small{\transparent{0.5} $\pm 0.89$} & $\mathbf{85.03}$ \small{\transparent{0.5} $\pm 0.54$}          & \textcolor{blue}{$27.57$} \small{\transparent{0.5} $\pm 2.50$} & $\mathbf{88.09}$ \small{\transparent{0.5} $\pm 0.13$}          & \textcolor{blue}{$8.52$} \small{\transparent{0.5} $\pm 0.73$} \\
            CFL                        & $42.47$ \small{\transparent{0.5} $\pm 0.02$}                   & $32.37$ \small{\transparent{0.5} $\pm 0.09$}                   & $67.14$ \small{\transparent{0.5} $\pm 0.87$}                   & $26.19$ \small{\transparent{0.5} $\pm 0.26$}                   & $52.90$ \small{\transparent{0.5} $\pm 1.48$}                   & $1.65$ \small{\transparent{0.5} $\pm 0.96$}                   \\
            FedSoft                    & $91.35$ \small{\transparent{0.5} $\pm 0.04$}                   & $19.88$ \small{\transparent{0.5} $\pm 0.50$}                   & $\textcolor{blue}{83.08}$ \small{\transparent{0.5} $\pm 0.02$} & $22.00$ \small{\transparent{0.5} $\pm 0.50$}                   & $85.80$ \small{\transparent{0.5} $\pm 0.29$}                   & $1.85$ \small{\transparent{0.5} $\pm 0.21$}                   \\
            \algfedt                   & $91.02$ \small{\transparent{0.5} $\pm 0.26$}                   & $\mathbf{62.37}$ \small{\transparent{0.5} $\pm 1.09$}          & \textcolor{blue}{$82.81$} \small{\transparent{0.5} $\pm 0.90$} & $\mathbf{65.33}$ \small{\transparent{0.5} $\pm 1.80$}          & \textcolor{blue}{$87.14$} \small{\transparent{0.5} $\pm 0.39$} & $\mathbf{16.27}$ \small{\transparent{0.5} $\pm 0.09$}         \\
            \bottomrule
        \end{tabular}%
    }
    \label{tab:Performance of algorithms on mobileNetV2 appendix}
\end{table*}

\paragraph{Ablation studies on local steps.} In Table~\ref{tab:ablation study on the number of local epochs}, we demonstrate the impact of the number of local epochs on the performance of different algorithms. Our results indicate that \algfed consistently outperforms the other baseline algorithms, even as the number of local epochs increases. However, we also observe that \algfed is relatively sensitive to the number of local epochs. This may be due to the fact that a large number of local steps can cause the parameters to drift away from the true global optima, as previously reported in several FL studies (e.g.~\citep{karimireddy2020scaffold,li2018federated,li2021model}). In the E step, this can make it more difficult for the algorithm to accurately find $\gamma_{i,j;k}$ based on sub-optimal parameters.

\paragraph{Ablation studies on partial client participation.} We have included the ablation study about the number of clients participating in each round in the main paper. In Table~\ref{tab:number of clients participate in each round}, we report the final accuracies on both local and global test datasets to include more details. Results show \algfed is robust to the partial client participation.

\paragraph{Ablation study on the number of concepts.} We change the number of concepts in $[ 1, 3, 5]$, and report the local and global accuracy in Table~\ref{tab:ablation study on the number of concepts}. Results show that:
1) \algfed always achieves the best global accuracy compare with other algorithms, indicating the robustness of trained models by \algfed. 2) Although FedEM may achieve better local accuracy, the generalization ability of trained models is poor and we believe that it is an over-fitting to local data.

\paragraph{Ablation studies when the number of clusters is less than the number of concepts.} We conduct experiments with $K = 2$ and the number of concepts to 3 in Table~\ref{tab:number-of-cluster-2}, results show that \algfed still outperforms other methods.

\paragraph{Ablation studies on different magnitudes of noise.} We experimented with varying noise magnitudes, particularly focusing on larger magnitudes as shown in Table~\ref{tab:magnitude-of-noise}. The results demonstrate that: (1) \algfed can effectively handle a relatively high magnitude of noise. In our experiments, the expected value of $\xi_i$ is 45 before noise addition. We found that setting $\xi_i = 25$ ensures privacy without sacrificing performance. (2) Systems with more clients can accommodate larger levels of noise. As illustrated in Figure~\ref{fig:ablation study on noise level}, adding noise up to 50 has a slight impact on performance with 300 clients, while it significantly affects the performance of \algfed with 100 clients.

\paragraph{Ablation studies using shared feature extractors.} To reduce the communication and computation costs of \algfed, we are considering using shared feature extractors among clusters. As reported in Table~\ref{tab:shared-feature-extractors}, utilizing shared feature extractors not only significantly reduces communication and computation costs but also improves the performance of \algfed.

\paragraph{Ablation studies on scenarios with an imbalance in the number of samples across clusters.} We conducted experiments in which the number of samples in each cluster followed a ratio of 8:1:1, as shown in Table~\ref{tab:imbalance-in-samples-among-clusters}. Results show that \algfed maintained a significantly higher global accuracy when compared to other methods.

\paragraph{Ablation studies on the number of clusters.} We conducted ablation studies on the number of clusters, as shown in Table~\ref{tab:baselines-more-clusters}. The results indicate that (1) \algfed consistently achieves the highest global accuracy across all algorithms, and (2) while increasing the number of clusters improves local accuracy in clustered FL approaches, it often has a detrimental effect on global accuracy, which is the key metric in this study.

\paragraph{\algfed has the potential to handle label noise scenarios.} Following the settings in \citet{fang2022robust, xu2022fedcorr}, we examined the performance of \algfed in label noise scenarios. The results show that \algfed outperforms other clustered FL methods and FedAvg in label noise scenarios.

\subsection{Ablation Study on Adaptive \algfed}
\label{sec:ablation study on adaptive appendix}

In this section, we present the ablation studies on value of $\delta$ and the convergence curve of \algfed compare with \algfed-Adam.

\begin{figure}[!t]
    \centering
    \subfigure[Value of $\xi_i$]{
        \includegraphics[width=0.5\textwidth]{./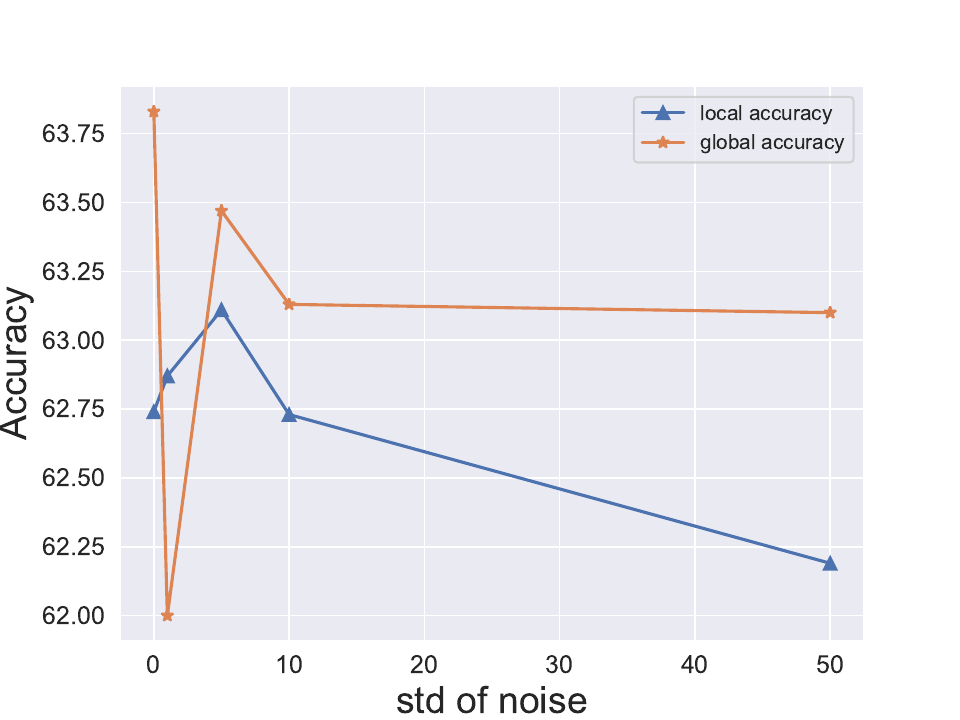}
        \label{fig:ablation study on noise level}
    }
    \caption{\small
        \textbf{Ablation studies on the magnitude of noise.} We shows \algfed's local and global accuracies with different noise std values $\xi_i$.
        \looseness=-1
    }
\end{figure}

\begin{table}[!t]
    \centering
    \caption{\textbf{Ablation study on the number of local epochs.} We split CIFAR10 dataset into 300 clients, and set the number of local epochs to $\{ 1, 5 \}$. We run algorithms for 200 communication rounds, and report the global accuracy on the round that achieves the best train accuracy. }
    \begin{tabular}{c c c c c}
        \toprule
        \multirow{2}{*}{Algorithm} & \multicolumn{2}{c}{1} & \multicolumn{2}{c}{5}                  \\
        \cmidrule(lr){2-3} \cmidrule(lr){4-5}
                                   & Local                 & Global                & Local & Global \\
        \midrule
        FedAvg                     & 30.28                 & 30.47                 & 29.75 & 29.60  \\
        IFCA                       & 43.76                 & 26.62                 & 43.49 & 35.50  \\
        FeSEM                      & 45.32                 & 30.79                 & 38.32 & 24.33  \\
        FedSoft                    & 83.08                 & 22.00                 & 82.20 & 19.67  \\
        FedEM                      & 51.31                 & 43.35                 & 55.69 & 50.17  \\
        \algfed                    & 62.74                 & 63.83                 & 57.31 & 57.43  \\
        \bottomrule
    \end{tabular}
    \label{tab:ablation study on the number of local epochs}
\end{table}

\begin{table*}[!t]
    \centering
    \caption{\textbf{Ablation study on the number of clients participating in each round.} We Split CIFAR10 dataset into 300 clients, and choose $\{ 20\%, 40\%, 60\%, 80\%, 100\% \}$ clients in each round. We run algorithms for 200 communication rounds, and report the global accuracy on the round that achieves the best train accuracy. }
    \resizebox{1.0\textwidth}{!}{
        \begin{tabular}{c c c c c c c c c c c}
            \toprule
            \multirow{2}{*}{Algorithm} & \multicolumn{2}{c}{0.2} & \multicolumn{2}{c}{0.4} & \multicolumn{2}{c}{0.6} & \multicolumn{2}{c}{0.8} & \multicolumn{2}{c}{1.0}                                            \\
            \cmidrule(lr){2-3} \cmidrule(lr){4-5} \cmidrule(lr){6-7} \cmidrule(lr){8-9} \cmidrule(lr){10-11}
                                       & Local                   & Global                  & Local                   & Global                  & Local                   & Global & Local & Global & Local & Global \\
            \midrule
            FedAvg                     & 29.22                   & 31.10                   & 29.61                   & 29.43                   & 30.63                   & 31.17  & 30.46 & 29.03  & 30.28 & 30.47  \\
            IFCA                       & 31.95                   & 16.67                   & 38.07                   & 23.33                   & 41.26                   & 29.57  & 58.86 & 49.50  & 43.76 & 26.62  \\
            FeSEM                      & 29.86                   & 25.37                   & 38.31                   & 28.00                   & 36.05                   & 32.93  & 40.53 & 27.53  & 45.32 & 30.79  \\
            FedSoft                    & 66.59                   & 10.03                   & 67.37                   & 9.20                    & 70.76                   & 10.00  & 85.37 & 10.20  & 83.08 & 22.00  \\
            FedEM                      & 52.65                   & 36.17                   & 51.75                   & 32.23                   & 52.81                   & 44.87  & 52.44 & 47.17  & 51.31 & 43.35  \\
            \algfed                    & 61.33                   & 62.10                   & 62.02                   & 64.2                    & 65.24                   & 65.07  & 63.97 & 63.90  & 62.74 & 63.83  \\
            \bottomrule
        \end{tabular}
    }
    \label{tab:number of clients participate in each round}
\end{table*}

\begin{table}[]
    \centering
    \caption{\textbf{Ablation study on the number of concepts.} We split CIFAR10 dataset to 300 clients, and change the number of concepts to $[1, 3, 5]$, and initialize $[3, 3, 5]$ models respectively. We report the local and global accuracy on the round that achieves the best train accuracy for each algorithm.}
    \begin{tabular}{c c c c c c c c c c }
        \toprule
        \multirow{2}{*}{Algorithm} & \multicolumn{2}{c}{1} & \multicolumn{2}{c}{3} & \multicolumn{2}{c}{5}                           \\
        \cmidrule(lr){2-3} \cmidrule(lr){4-5} \cmidrule(lr){6-7}
                                   & Local                 & Global                & Local                 & Global & Local & Global \\
        \midrule
        FedAvg                     & 53.63                 & 61.90                 & 30.28                 & 30.47  & 18.55 & 16.54  \\
        FeSEM                      & 53.25                 & 48.00                 & 45.32                 & 30.79  & 26.84 & 17.96  \\
        FedEM                      & 64.37                 & 58.00                 & 51.31                 & 43.35  & 51.82 & 17.76  \\
        \algfed                    & 58.78                 & 64.20                 & 62.74                 & 63.83  & 39.27 & 39.34  \\
        \bottomrule
    \end{tabular}
    \label{tab:ablation study on the number of concepts}
\end{table}

\begin{figure}
    \centering
    \subfigure[Clustering results with $\delta=0.05$]{
        \includegraphics[width=0.45\textwidth]{./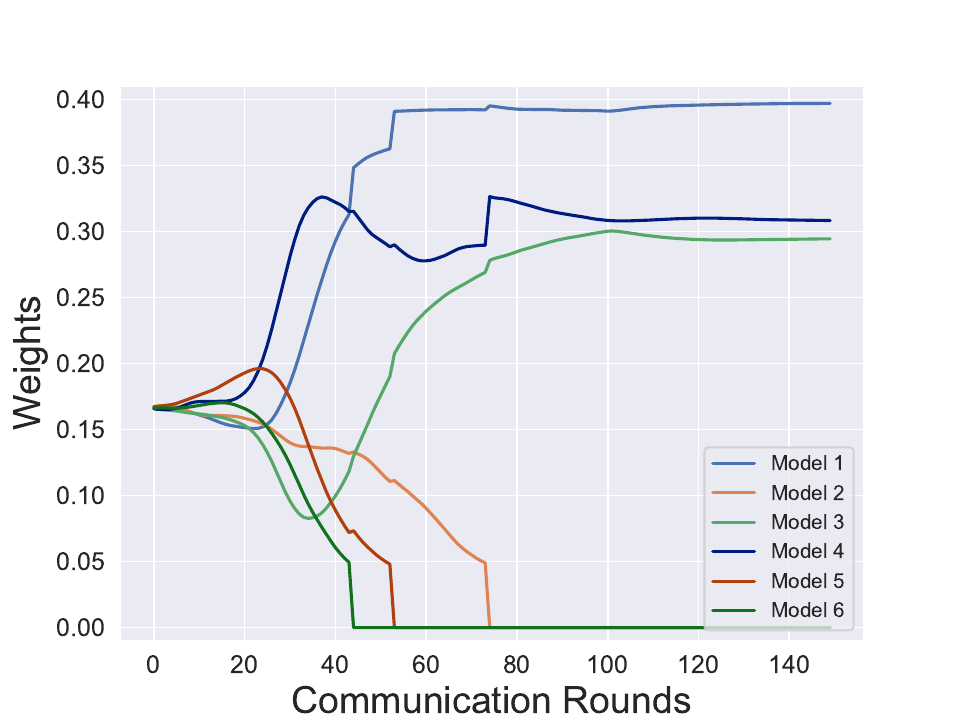}
        \includegraphics[width=0.45\textwidth]{./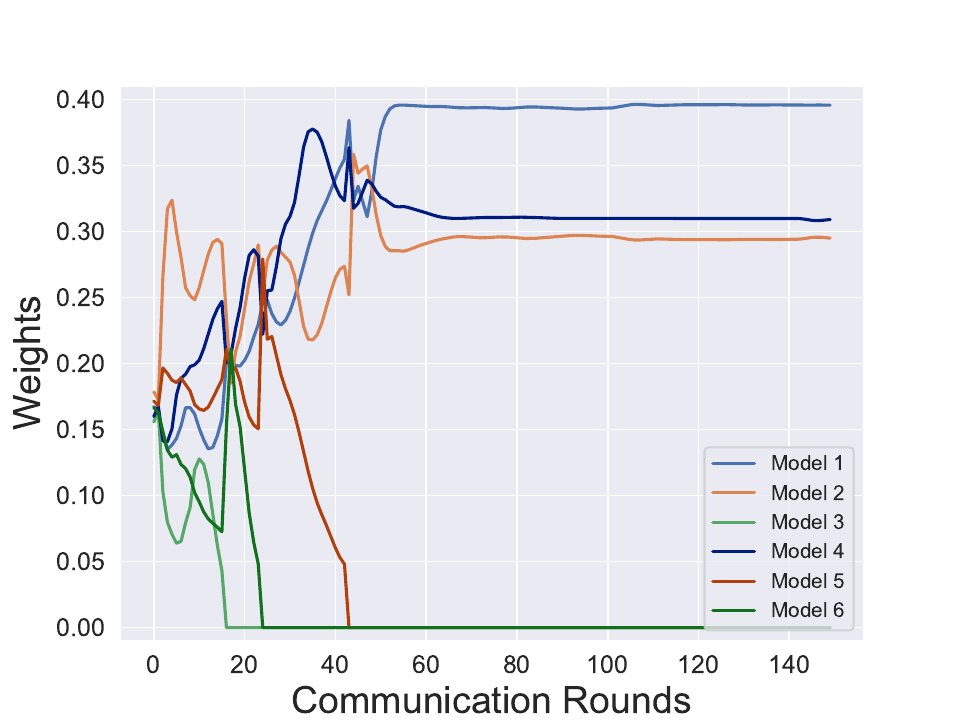}
    }
    \\
    \subfigure[Clustering results with $\delta=0.025$]{
        \includegraphics[width=0.45\textwidth]{./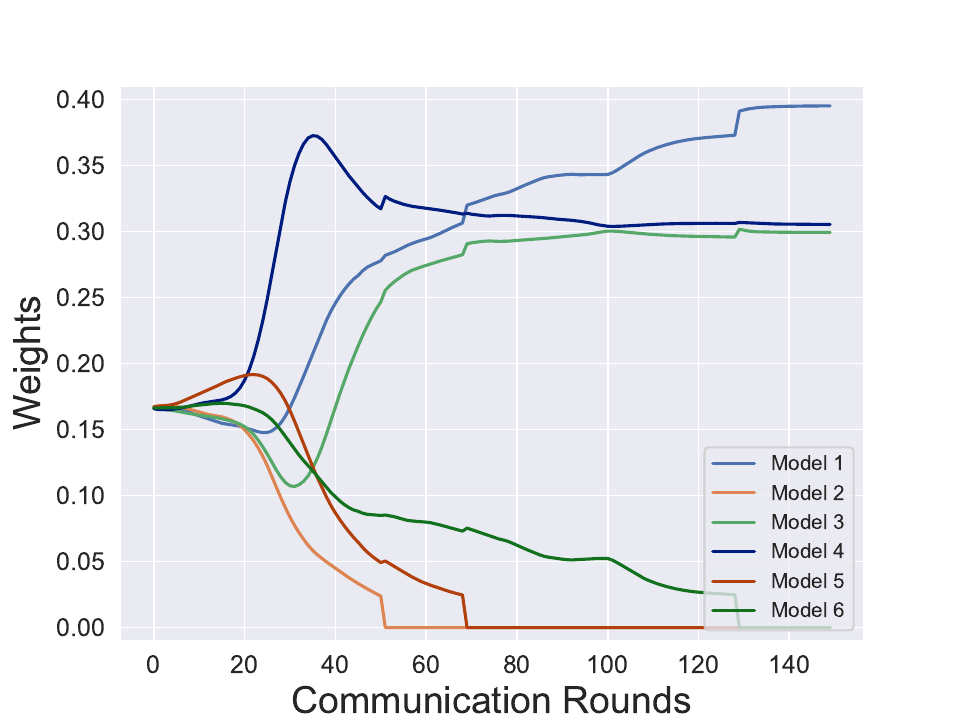}
        \includegraphics[width=0.45\textwidth]{./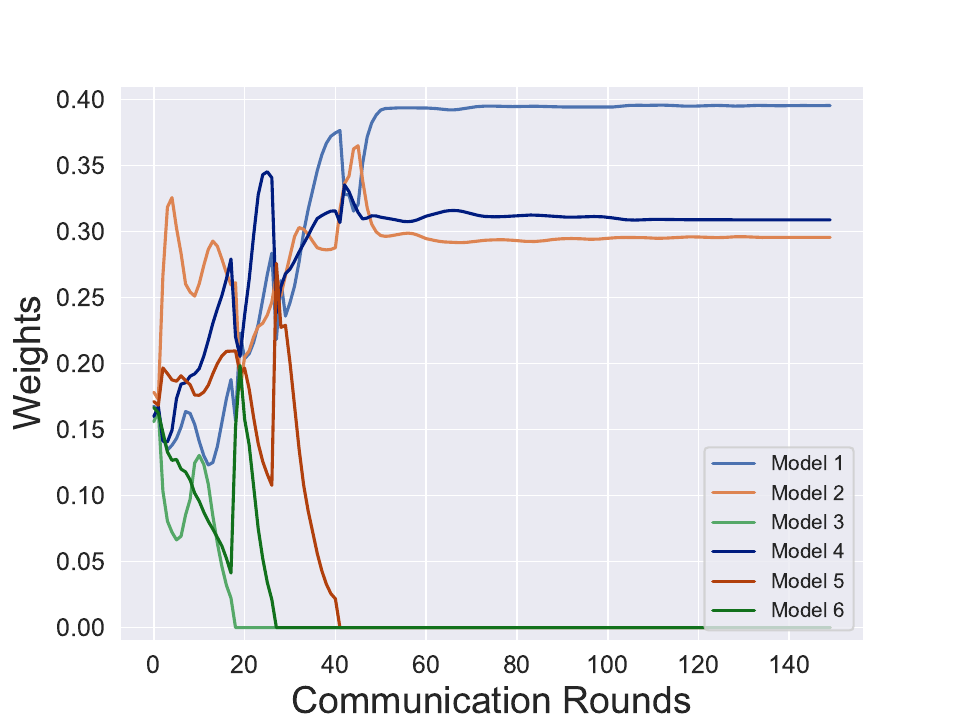}
    }
    \caption{\textbf{Clustering results of \algfed and \algfed-Adam on different $\delta$} We split CIFAR10 dataset to 300 clients, initialize 6 models, and report clustering results by calculating weights by $\sum_{i, j} \gamma_{i,j;k} / \sum_{i, j, k} \gamma_{i,j;k}$, which represents the portion of clients that choose the model $k$.}
    \label{fig:ablation study on delta clustering}
\end{figure}

\begin{figure}
    \centering
    \subfigure[Curvergence curve with $\delta=0.05$]{
        \includegraphics[width=0.45\textwidth]{./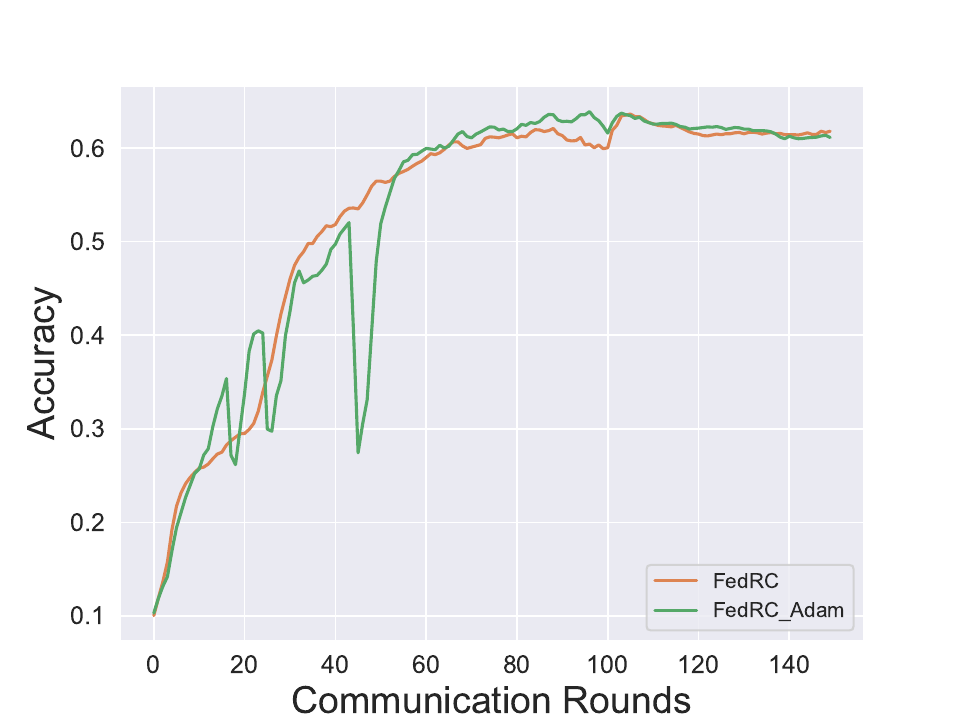}
    }
    \subfigure[Curvergence curve with $\delta=0.025$]{
        \includegraphics[width=0.45\textwidth]{./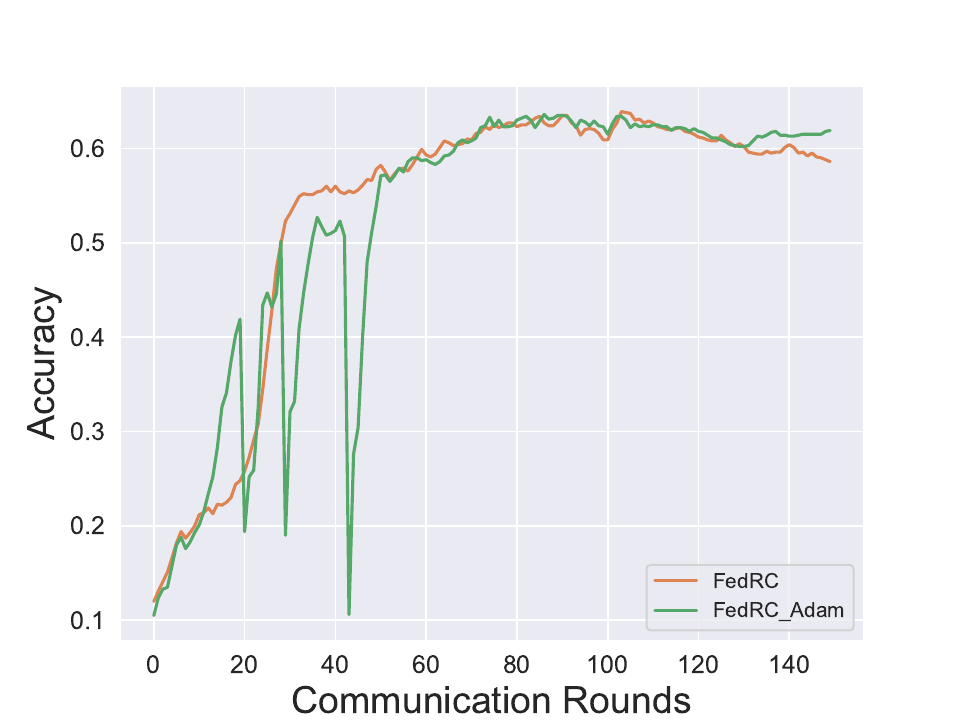}
    }
    \caption{\textbf{Convergence curve of \algfed and \algfed-Adam with different $\delta$.} We split CIFAR10 dataset to 300 clients, initialize 6 models, and report the convergence curve of \algfed and \algfed-Adam with $\delta = [0.05, 0.025]$. We use adaptive process, and models is removed to 3 as in Figure~\ref{fig:ablation study on delta clustering}.}
    \label{fig:ablation study on delta converge}
\end{figure}

\paragraph{Ablation studies on value of $\delta$.} We vary the value of $\delta$ as the threshold for removing models, and results in Figures~\ref{fig:ablation study on delta clustering} and~\ref{fig:ablation study on delta converge} show that: 1) both \algfed and \algfed-Adam can find the true number of concepts. 2) \algfed-Adam is more robust to $\delta$, while $\gamma_{ijk}$ in \algfed converge slower as $\delta$ decreases. 3) \algfed-Adam converges faster initially, but the final results are similar.

\subsection{Ablation Studies on Single-type Distribution Shift Scenarios.}
\label{sec:Ablation Studies on Single-type Distribution Shift Scenarios}

In this section, we evaluate the performance of clustered FL algorithms on scenarios that only have one type of distribution shifts as in traditional FL scenarios. As shown in Table~\ref{tab:ablation study on single-type distribution shift scenarios}, we can find that \algfed achieve comparable performance with other clustered FL methods on these less complex scenarios.

\begin{table}[!t]
    \centering
    \caption{\textbf{Ablation study on single-type distribution shift scenarios} We split CIFAR10 dataset into 100 clients, and set the number of local epochs to 1. We run algorithms for 200 communication rounds, and report the global accuracy and local accuracy on the round that achieves the best train performance. }
    \begin{tabular}{c c c c c c c}
        \toprule
        \multirow{2}{*}{Algorithm} & \multicolumn{2}{c}{Feature Shift Only} & \multicolumn{2}{c}{Concept Shift Only} & \multicolumn{2}{c}{Label Shift Only}                           \\
        \cmidrule(lr){2-3} \cmidrule(lr){4-5} \cmidrule(lr){6-7}
                                   & Local                                  & Global                                 & Local                                & Global & Local & Global \\
        \midrule
        FedAvg                     & 70.92                                  & 79.30                                  & 33.76                                & 33.00  & 65.88 & 65.80  \\
        IFCA                       & 71.68                                  & 80.90                                  & 77.68                                & 77.87  & 36.68 & 14.80  \\
        FeSEM                      & 66.78                                  & 77.10                                  & 40.18                                & 38.97  & 50.52 & 26.80  \\
        FedEM                      & 69.28                                  & 80.00                                  & 77.60                                & 78.57  & 78.04 & 70.00  \\
        \algfed                    & 68.94                                  & 81.40                                  & 77.22                                & 78.00  & 70.70 & 70.60  \\
        \bottomrule
    \end{tabular}
    \label{tab:ablation study on single-type distribution shift scenarios}
\end{table}

\begin{table}[!t]
    \centering
    \caption{\small
        \textbf{Performance of algorithms with $K = 2$.}
        We evaluated the performance of algorithms with insufficient number of clusters.
        We initialized 2 clusters for clustered FL methods and reported mean local and global test accuracy on the round that achieved the best train accuracy for each algorithm. The CIFAR10 dataset is split to 100 clients and has 3 concepts.
        \looseness=-1
    }
    \begin{tabular}{c c c c c c c c}
        \toprule
        Algorithms & FedAvg & FeSEM & FedEM & FedRC \\
        \midrule
        Local Acc  & 28.74  & 30.50 & 42.48 & 43.82 \\
        Global Acc & 28.43  & 21.03 & 30.57 & 42.50 \\
        \bottomrule
    \end{tabular}%
    \label{tab:number-of-cluster-2}
\end{table}

\begin{table}[!t]
    \centering
    \caption{\small
        \textbf{Performance of FedRC with various magnitude of the noise.}
        We evaluated the performance of FedRC using CIFAR10 dataset with 100 clients, and vary the magnitude of the noise from $0$ to $100$.
        We initialized 3 clusters for FedRC and reported mean local and global test accuracy on the round that achieved the best train accuracy for each algorithm.
        \looseness=-1
    }
    \begin{tabular}{c c c c c c c c}
        \toprule
        Magnitude of the noise & $\xi_i = 0$ & $\xi_i = 10$ & $\xi_i = 25$ & $\xi_i = 50$ & $\xi_i = 100$ \\
        \midrule
        Local Acc              & 48.70       & 44.72        & 45.20        & 42.60        & 41.52         \\
        Global Acc             & 44.37       & 46.23        & 47.10        & 36.97        & 28.23         \\
        \bottomrule
    \end{tabular}%
    \label{tab:magnitude-of-noise}
\end{table}

\begin{table}[!t]
    \centering
    \caption{\small
        \textbf{Performance of FedRC using shared feature extractors.}
        We evaluated the performance of FedRC using CIFAR10 dataset with 100 clients using shared feature extractors for all the clusters to mitigate the communication and computation overhead.
        We initialized 3 clusters for FedRC and reported global test accuracy on the round that achieved the best train accuracy for each algorithm. We also report the size of parameters we transmitted and trained in each communication round as the \textbf{Size of Parameters}.
        \looseness=-1
    }
    \resizebox{1.\textwidth}{!}{
        \begin{tabular}{c c c c c c c c}
            \toprule
            \multirow{2}{*}{Algorithm}       & \multicolumn{2}{c}{CIFAR10} & \multicolumn{2}{c}{Tiny-ImageNet}                                   \\
            \cmidrule(lr){2-3} \cmidrule(lr){4-5}
                                             & Size of Parameters          & Global Acc                        & Size of Parameters & Global Acc \\
            \midrule
            FedRC                            & 6.71M                       & 44.37                             & 7.44M              & 28.47      \\
            FedRC (shared feature extractor) & 2.26M                       & 54.53                             & 2.99M              & 33.80      \\
            \bottomrule
        \end{tabular}%
    }
    \label{tab:shared-feature-extractors}
\end{table}

\begin{table}[!t]
    \centering
    \caption{\small
        \textbf{Performance of algorithms with imbalance in the number of samples across clusters.}
        We evaluated the performance of algorithms with imbalance in the number of samples across clusters. In detail, we split CIFAR10 into 100 clients, and each concept has [80, 10, 10] clients.
        We initialized 3 clusters for clustered FL methods and reported mean local and global test accuracy on the round that achieved the best train accuracy for each algorithm.
        \looseness=-1
    }
    \begin{tabular}{c c c c c c c c}
        \toprule
        Algorithms & FeSEM & IFCA  & FedEM & FedRC \\
        \midrule
        Local Acc  & 38.20 & 39.72 & 55.96 & 55.64 \\
        Global Acc & 16.70 & 16.37 & 26.30 & 45.77 \\
        \bottomrule
    \end{tabular}%
    \label{tab:imbalance-in-samples-among-clusters}
\end{table}

\begin{table}[!t]
    \centering
    \caption{\small
        \textbf{Performance of algorithms in label noise scenarios.}
        We evaluated the performance of algorithms in the label noise scenarios. In detail, we split CIFAR10 into 100 clients, and Pairflip is to randomly convert the labels. Symflip is to change $y$ to $(y + 1) \% 10$. $\sigma$ is the noisy rate.
        We initialized 3 clusters for clustered FL methods and reported the global test accuracy on the round that achieved the best train accuracy for each algorithm. The CIFAR10 dataset is split to 100 clients.
        \looseness=-1
    }
    \begin{tabular}{c c c c c c c c}
        \toprule
        Algorithms               & FedAvg & FeSEM & IFCA  & FedEM & FedRC \\
        \midrule
        Pairflip, $\sigma = 0.2$ & 52.35  & 35.25 & 20.55 & 57.55 & 59.95 \\
        Symflip, $\sigma = 0.2$  & 52.60  & 32.40 & 30.35 & 53.00 & 55.25 \\
        \bottomrule
    \end{tabular}%
    \label{label-noise-scenarios}
\end{table}

\begin{table}[!t]
    \centering
    \caption{\small
        \textbf{Performance of algorithms using different number of clusters.}
        We evaluated the performance of algorithms using CIFAR10 and Tiny-ImageNet datasets with 100 clients.
        We initialized 3 and 5 clusters for clustered FL algorithms and reported local and global test accuracy on the round that achieved the best train accuracy for each algorithm.
        \looseness=-1
    }
    \resizebox{1.\textwidth}{!}{
        \begin{tabular}{c c c c c c c c c c}
            \toprule
            \multirow{2}{*}{Algorithm} & \multicolumn{2}{c}{IFCA} & \multicolumn{2}{c}{FeSEM} & \multicolumn{2}{c}{FedEM} & \multicolumn{2}{c}{FedRC}                                                   \\
            \cmidrule(lr){2-3} \cmidrule(lr){4-5} \cmidrule(lr){6-7} \cmidrule(lr){8-9}
                                       & Local Acc                & Global Acc                & Local Acc                 & Global Acc                & Local Acc & Global Acc & Local Acc & Global Acc \\
            \midrule
            CIFAR10 (3 cluster)        & 41.12                    & 16.53                     & 31.74                     & 14.80                     & 49.18     & 26.00      & 48.70     & 44.37      \\
            CIFAR10 (5 cluster)        & 45.42                    & 11.07                     & 41.26                     & 12.63                     & 61.14     & 27.27      & 51.48     & 46.67      \\
            Tiny-ImageNet (3 cluster)  & 23.34                    & 13.57                     & 23.59                     & 11.93                     & 27.51     & 15.83      & 34.48     & 28.47      \\
            Tiny-ImageNet (5 cluster)  & 28.12                    & 11.03                     & 28.68                     & 11.03                     & 31.92     & 17.77      & 38.18     & 28.07      \\
            \bottomrule
        \end{tabular}%
    }
    \label{tab:baselines-more-clusters}
\end{table}

\begin{table}[!t]
    \centering
    \caption{
        \small
        \textbf{Performance of \algfed using hard clustering for optimization and prediction.}
        We split CIFAR10 and Tiny-ImageNet datasets to 100 clients, and report the global test accuracy on the round that achieved the best train accuracy for each algorithm. The (original) FedRC is trained using soft clustering, and the predictions of all models are also ensembled to derive the final prediction results. FedRC + TeHC is trained using soft clustering but only employs the models with the highest clustering weights for prediction. FedRC + TrHC is both trained and tested using a hard clustering method. This approach optimizes only the models with the highest clustering weights in local optimization steps and also uses a single model for prediction.
        \looseness=-1
    }
    \begin{tabular}{c c c c}
        \toprule
        Algorithms               & FedRC & FedRC + TeHC & FedRC + TrHC \\
        \midrule
        CIFAR10 & 44.37 & 48.03 & 20.5 \\
        Tiny-ImageNet & 27.79 & 29.23 & 11.47 \\
        \bottomrule
    \end{tabular}%
    \label{hard-cluster}
\end{table}

\section{Discussion: Definition of Concept Shifts}
\label{sec:discussion-concept-shifts}

Following the definitions outlined in the dataset shift literature, specifically Definition 4 in \citet{moreno2012unifying} and Source 2 in \citet{lu2018learning}, we categorize concept shifts into the following classifications:
\begin{itemize}[leftmargin=12pt,nosep]
    \item Instances where $p_{i}(y|x) \neq p_{j}(y|x)$ and $p_{i} (x) = p_{j} (x)$ in $X \to Y$ problems, indicating "same feature different labels".
    \item Instances where $p_{i}(x|y) \neq p_{j}(x|y)$ and $p_{i} (y) = p_{j} (y)$ in $Y \to X$ problems, signifying “same label different features”.
\end{itemize}
Here, $X \to Y$ denotes the utilization of $X$ as inputs to predict $Y$. Following most studies in FL, this paper focuses on the $X \to Y$ problems, thereby investigating "same feature different labels" problems.
Moreover, within the context of $X \to Y$ problems, we contend that "same label different features" aligns more closely with the definition of "feature shift" rather than concept shift, as depicted in Clients 2 and 3 of Figure~\ref{fig:illustration of cluster principle}. Notably:
\begin{itemize}[leftmargin=12pt,nosep]
    \item Feature shift scenarios, such as those involving augmentation methods (CIFAR10-C, CIFAR100-C) employed in our paper, or natural shifts in domain generalization, often give rise to "same label different features" issues.
    \item In "same label different features" scenarios, where the same $x$ is not mapped to different $y$ values, shared decision boundaries persist, obviating the need for assignment into distinct clusters.
\end{itemize}
Consequently, we treat the challenge of "same feature different labels" as a manifestation of "feature shift" in this paper.

\end{document}